\newcolumntype{P}[1]{>{\centering\arraybackslash}p{#1}}
\tikzset{
  basic box/.style = {
    shape = rectangle,
    align = center,
    draw  = #1,
    fill  = #1!25,
    rounded corners},
  header node/.style = {
    Minimum Width = header nodes,
    font          = \strut\Large\ttfamily,
    text depth    = +0pt,
    fill          = white,
    draw},
  header/.style = {%
    inner ysep = +1.5em,
    append after command = {
      \pgfextra{\let\TikZlastnode\tikzlastnode}
      node [header node] (header-\TikZlastnode) at (\TikZlastnode.north) {#1}
      node [span = (\TikZlastnode)(header-\TikZlastnode)]
        at (fit bounding box) (h-\TikZlastnode) {}
    }
  }
  
  }
\crefname{hypothesis}{Hypothesis}{Hypotheses}
\def\QED{~\rule[-1pt]{5pt}{5pt}\par\medskip}
\renewenvironment{proof}{{\bf Proof: \ }}{ \hfill \QED}
\let\ALP  \mathcal
\let\FLD  \mathscr
\newcommand{\ind}[1]{\mathds{1}_{\{#1\}}}
\newcommand{\beq}[1]{\begin{align} #1 \end{align}}
\newcommand{\beqq}[1]{\begin{align*} #1 \end{align*}}
\renewcommand{\Re}{\mathbb{R}}
\newcommand{\Na}{\mathbb{N}}
\newcommand{\ex}[1]{\mathds{E}\left[#1\right]}
\newcommand{\pr}[1]{\mathds{P}\left\{#1\right\}}
\renewcommand{\forall}{\text{ for all }}
\title{Convergence of Recursive Stochastic Algorithms using Wasserstein Divergence
\thanks{Submitted to the editors.
\funding{Abhishek Gupta gratefully acknowledges support from NSF ECCS Award 1610615 and ARPA-E NEXTCAR Award.}}}
\author{Abhishek Gupta 
\thanks{Electrical and Computer Engineering, The Ohio State University, Columbus, OH, USA.  (\email{gupta.706@osu.edu}).}
\and William B. Haskell
\thanks{Supply Chain and Operations Management, Krannert School of Management, Purdue University, West Lafayette, IN, USA.  (\email{whaskell@purdue.edu}).}
}
\newcolumntype{P}[1]{>{\arraybackslash}m{#1}}
\newcolumntype{M}[1]{>{\centering\arraybackslash}m{#1}}
\newcommand{\new}[1]{{\color{black} #1}}
\begin{document}

\maketitle

\begin{abstract}
This paper develops a unified framework, based on iterated random operator theory, to analyze the convergence of constant stepsize recursive stochastic algorithms (RSAs). RSAs use randomization to efficiently compute expectations, and so their iterates form a stochastic process. The key idea of our analysis is to lift the RSA into an appropriate higher-dimensional space and then express it as an equivalent Markov chain. Instead of determining the convergence of this Markov chain (which may not converge under constant stepsize), we study the convergence of the distribution of this Markov chain. To study this, we define a new notion of Wasserstein divergence. We show that if the distribution of the iterates in the Markov chain satisfy a contraction property with respect to the Wasserstein divergence, then the Markov chain admits an invariant distribution. We show that convergence of a large family of constant stepsize RSAs can be understood using this framework, and we provide several detailed examples.
\end{abstract}

\begin{keywords}
 Iterative Random Maps, Wasserstein Divergence, Stochastic Gradient Descent.
\end{keywords}

\begin{AMS}
  93E35, 60J20, 68Q32
\end{AMS}

\section{Introduction}\label{sec:intro}
Over the past two decades, there has been explosive growth in new randomized algorithms for doing complex optimization tasks in machine and reinforcement learning. Many of these algorithms are essentially recursions of certain mappings that depend on stochastic parameters. Such algorithms are collectively called `recursive stochastic algorithms' (RSAs) \cite{ljung1977analysis,gelfand1991recursive,kushner2003stochastic}.

From a computational viewpoint, RSAs with constant stepsize (also called the `learning rate' in some contexts) enjoy many benefits compared to RSAs with decaying stepsizes that converge to zero \cite{benveniste2012adaptive}. Constant stepsize RSAs often converge much faster to a \textit{neighborhood} of the desired solution. This phenomenon has been observed in off-policy temporal difference learning \cite{yu2016weak}, temporal difference learning with function approximation \cite{lakshminarayanan2018linear}, tracking problems \cite{kumar2018bounds}, and gradient descent \cite{bach2013non}, among others. Furthermore, the size of this neighborhood is usually small if the stepsize is small (so too large a stepsize may not be beneficial) \cite{borkar2000ode,benaim1998recursive}. Accordingly, in practice researchers often use a constant stepsize for a certain number of steps, and then if needed rerun the algorithm with a smaller (constant) stepsize. 

The practical success of constant stepsize RSAs compared to decaying stepsize RSAs has spurred significant interest in finding supporting theory. The existing convergence analyses are tailored to each specific algorithm and do not readily extend to other algorithms. In this paper, we unify these analyses by highlighting the features common to all of them. We find that: (i) constant stepsize RSAs can be modeled as Markov chains and (ii) their convergence is connected to a form of contraction with respect to a new notion of divergence on probability distributions. We call this new notion the {\it Wasserstein divergence} since it is based on the classical Wasserstein distance.

We show that many constant stepsize RSAs used in optimization are contractions with respect to the Wasserstein divergence. This perspective gives us new insight into the nature of the convergence of these algorithms. In particular, the central new notion of the Wasserstein divergence serves as a unifying feature for the convergence analysis of different families of RSAs.

\subsection{Prior Work}

The study of RSAs enjoys a rich history. Early work on RSAs was for solving regression problems, where certain stepsize parameters converge to zero, see \cite{robbins1951stochastic,kiefer1952stochastic,wolfowitz1952stochastic}. These algorithms and their convergence are studied under the umbrella of stochastic approximation theory \cite{kushner2003stochastic,borkar2009stochastic}. It was soon realized that stochastic approximation theory can ascertain convergence of a wide variety of optimization and learning algorithms. To understand this method, let us consider the iteration $x_{k+1} = x_k+\beta_kd_k$, where $\{\beta_k\}$ are stepsizes that satisfy $\sum_k\beta_k = \infty$ and $\sum_k \beta_k^2 <\infty$, $d_k$ is an unbiased noisy estimate of some operator $F$ (e.g. the gradient operator) evaluated at $x_k$, and the desired solution $x^*$ satisfies $F(x^*) = 0$. The convergence guarantee for such algorithms is very strong under some reasonable conditions that are typically met in practice. However, despite strong convergence guarantees, the rate of convergence is very slow for stochastic approximation type algorithms with decaying stepsizes. 

Parallel to the development of stochastic approximation theory, some authors considered constant stepsizes where all $\beta_k = \beta$ are sufficiently small. In this case, $\{x_k\}$ forms a homogeneous Markov chain under suitable assumptions on $d_k$. This constant stepsize recursion is given by $x_{k+1} = f(x_k,w_k)$ for a suitably defined function $f$ that takes $x_k$ and i.i.d. noise $w_k$ as input. The convergence guarantee for this class of algorithms is usually weak --- the iterates may not converge to the desired solution $x^*$, but instead will do a random walk in some neighborhood of it. This class of recursive algorithm has been studied under the name of iterated random function systems \cite{dubins1966invariant,barnsley1989recurrent,diaconis1999iterated,duflo2013random,stenflo2012survey} and stochastic approximation with constant stepsize; see \cite{borkar2000ode,borkar2009stochastic,benaim1998recursive,roth2013stochastic} and the references therein for related discussions.

Stochastic optimization algorithms play a pivotal role in large-scale machine learning as well as data-driven/simulation-based learning problems \cite{bottou2018optimization}. For instance, stochastic gradient descent (SGD) is unarguably the most important class of algorithms
for many machine learning tasks. At the same time, simulation-based reinforcement learning has received significant attention \cite{silver2016mastering,silver2017mastering}. In both families of algorithms, randomization and sampling techniques are used in specialized ways to compute expectations that are otherwise expensive or intractable. 

Constant stepsize RSAs for optimization (in particular large-scale optimization) are widely studied. Constant stepsize SGD is studied with Markov chain methods in \cite{dieuleveut2017bridging} with respect to the Wasserstein distance. It is shown than an invariant distribution exists, and the distribution of the iterates converges geometrically to this invariant distribution. Further, it presents a formula for the concentration of the invariant distribution around the desired solution. Stochastic variance reduced gradient descent (SVRG) was designed to improve upon SGD (by using a variance reduced correction term), and it enjoys a linear convergence rate in expectation \cite{johnson2013accelerating}. SAGA is proposed in \cite{defazio2014saga} which offers an alternative variance reduction scheme. The hybrid algorithm HSAG combines the features of SVRG and SAGA and is developed in \cite{reddi2015variance}. SVRG and SAGA are extended to solve monotone inclusion problems in \cite{palaniappan2016stochastic} (which contains function minimization as a special case).

Within the reinforcement learning literature, constant stepsize stochastic approximation algorithms have been developed to compute the (approximately) optimal value function, the Q-value function, evaluate the performance (total discounted or average cost) of a stationary policy using temporal differences, and do all of these on top of function approximation. For finite-state finite-action discounted cost MDPs, constant step-size empirical value iteration (EVI) was studied in \cite{haskell2016}. The convergence guarantee of the algorithm was derived by constructing a Markov chain over a finite space and using a stochastic dominance argument to bound the error in the iterates. This methodology significantly departed from the ODE approach usually taken for proving convergence of constant stepsize stochastic approximation algorithms in \cite{borkar2000ode,borkar2009stochastic,benaim1998recursive}. This approach was later extended to average cost MDPs with empirical relative value iteration in \cite{gupta2018probabilistic}, where the dominating Markov chain was constructed over the space of natural numbers. Error bounds for constant step-size synchronous and asynchronous Q-learning algorithm were studied in \cite{beck2012error} by combining the union bound and triangle inequality. Finite-time bounds for temporal difference learning for evaluating stationary policies with constant stepsize have been obtained in \cite{srikant2019finite,bhandari2018finite} under a variety of assumptions.

\new{ Since the first writing of this paper, we learned of a new work \cite{amortila2020distributional,amortila2020pmlr} that viewed many reinforcement learning algorithms 
within the framework of iterated random operators. It showed that reinforcement learning algorithms such as temporal difference learning, optimistic policy iteration, Q-learning, etc. form a Markov chain. This work also established the geometric convergence of these Markov chains to their invariant distributions in the Wasserstein metric. We have thus omitted such results from this paper, even though they also fall within our framework.}

\subsection{Contributions}

We summarize our key contributions as follows: 
\begin{enumerate}
\item We model constant stepsize RSAs as iterated random function systems. Typically, a measure of distance, such as a metric or a divergence, between the iterates of the RSA $x_k$ and the desired solution $x^*$ is shown to have some kind of one-step contraction property (e.g. $\ex{\|x_k-x^*\|_2^2}$ is a common measure of distance for first-order optimization algorithms). Across RSAs, we noted that the measure of distance satisfies positive definiteness and symmetry -- two well-known properties enjoyed by a metric. However, they do not all satisfy the triangle inequality. Instead, these RSAs are Markov chains whose distributions satisfy some contraction property with respect to a certain {\it divergence}. We propose the notion of Wasserstein divergence to appropriately model this general phenomenon (see Definition \ref{def:Wasserstein divergence}).
\item We show that when the marginal distributions of the iterates of these RSAs contract with respect to the Wasserstein divergence: (i) there exists a unique invariant distribution for the RSA; and (ii) the marginal distribution converges to this invariant distribution geometrically with respect to the Wasserstein divergence (see Theorem \ref{thm:main}). These results allow us to assess the performance of the RSA in terms of how quickly it converges and how far the iterate is from $x^*$ after a sufficiently large runtime. 
\item We bound the concentration of the invariant distribution around the desired solution $x^*$ (see Theorem \ref{thm:concentration} and Theorem \ref{thm:contractionerror}). Furthermore, we show that variance reduced algorithms map the desired solution to itself almost surely. In this case, the invariant distribution for variance reduced RSAs is concentrated at $x^*$. This idea is used to establish the convergence of variance reduced algorithms such as SVRG, SAGA, and HSAG.
\item We develop several detailed examples of optimization algorithms that fall within our framework in Section \ref{sec:examples} (non-epoch based algorithms) and Section \ref{sec:examples-epoch} (epoch-based algorithms). We give specific attention to the illustrative quadratic case (see Table \ref{tab:quadratic divergence}), and then we extend to the nonlinear case (see Table \ref{tab:divergence nonlinear}).
\end{enumerate}

\new{
We note here two works \cite{hairer2011yet,ollivier2007ricci} that are closely related to the content of this paper\footnote{The authors would like to thank the associate editor for pointing us to these two references.}. In \cite{hairer2011yet}, the authors establish the existence of a unique invariant distribution of a Markov chain satisfying a geometric drift condition and a uniform minorization condition. While the result itself is well-known, the technique proposed by the authors to establish the result was novel. They construct a metric over the space of probability measures for which the Markov kernel (viewed as a linear operator over the space of probability measures) satisfies a certain contraction condition. This metric is built from the Lyapunov function of the Markov chain.

In \cite{ollivier2007ricci}, the author defines the coarse Ricci curvature associated with a random walk (which can be viewed as a Markov kernel) on a geodesic space. In particular, the coarse Ricci curvature is defined as 1 minus the ratio of the Wasserstein distance between the random walks originating from two different points on the manifold, and the distance between these two points. The author then proceeds to show that if the coarse Ricci curvature is bounded from below by a positive number for two nearby points on the manifold, then it is positive for any two points on the manifold. In this case, the random walk (a.k.a. Markov kernel) is a contraction operator over the space of probability measures with respect to the 1-Wasseretin metric.

Our present paper can be viewed as an extension of these ideas under the assumption that there is a divergence on the underlying state space of the Markov chain. It is this divergence that we lift to the space of measures to yield the corresponding Wasserstein divergence. Then, we frame all our convergence results for Markov kernels that are contractions with respect to this Wasserstein divergence.

\subsection{Outline of the Paper}
This paper is organized as follows. In Section \ref{sec:problem}, we frame our problem in terms of iterated random operators and introduce the notion of Wasserstein divergence, which is our key tool. We present our three main technical results in Section \ref{sec:mainresult}. In the following Sections \ref{sec:examples} and \ref{sec:examples-epoch}, we give several detailed examples of algorithms that fall within our framework (where we separate the non-epoch based algorithms from the epoch-based ones). We then proceed to establish the properties of the Wasserstein divergence in Section \ref{sec:wass}. The detailed proofs of our three main results are also presented in this section. Section \ref{sec:open} discusses implications of this work and open problems.

\subsection{Notation}
Let $(\ALP A,\rho_{\ALP A})$ be a complete separable metric (Polish) space with metric $\rho_{\ALP A}$. We use $\wp(\ALP A)$ to denote the set of all probability measures over the space $\ALP A$. A sequence of probability measures $\{\mu_k\}$ is said to converge in the weak* sense to a probability measure $\theta$ if and only if $\int fd\mu_k\to\int fd\theta$ as $k\to \infty$ for every continuous and bounded function $f:\ALP A\rightarrow\Re$ (this is sometimes referred to as weak convergence). We refer to \cite{villani2008optimal,ambrosio2008gradient,hernandez2012markov,ali2006,billingsley2013convergence} for more information on the weak* convergence of measures.

An operator $T:\ALP A\to\ALP A$ is a contraction if and only if there exists a contraction coefficient $\alpha\in[0,1)$ such that:
\beqq{\rho_{\ALP A}(T(a_1),T(a_2))\leq \alpha \rho_{\ALP A}(a_1,a_2),\, \forall a_1,a_2\in\ALP A.}
The Banach contraction mapping theorem asserts that any contraction operator $T$ over a complete metric space $\ALP A$ admits a unique fixed point $a^*$ such that $a^* = T(a^*)$. Moreover, starting with any $a_0\in\ALP A$, the iteration $\{a_k\}$ defined by $a_{k+1} = T(a_k)$ converges to $a^*$.

 We use $\wp(\ALP A)$ to denote the collection of all probability measures on the space $\ALP A$. We use $\FLD B(\ALP A)$ to denote the set of all Borel measurable subsets of $\ALP A$.
}

\section{Problem Formulation}\label{sec:problem}
\subsection{Preliminaries}

Let $\ALP X$ be a vector space with a metric $\rho$ so that $(\ALP X,\rho)$ is a complete and separable (Polish) space. Examples include: Euclidean spaces for any $p$-norm (where $p\geq 1$), Euclidean space with a weighted max norm, separable Banach and Hilbert spaces (e.g. $\ell_p$ spaces for $p\in[1,\infty)$), the space of continuous functions (in the supremum norm) over compact Hausdorff spaces, etc. Let $T:\ALP X\rightarrow\ALP X$ be a contraction operator with contraction coefficient $\alpha\in[0,1)$ and (unique) fixed point $x^*$.

We introduce another vector space $\ALP S$ in addition to $\ALP X$ (we will interpret $\ALP S$ as a ``lifting'' of $\ALP X$). For example, many variance reduced algorithms augment the original state space $\ALP X$ with additional information (e.g. the proxy terms in SAGA which store past gradient evaluations). The lifting $\ALP S$ allows us to cover these algorithms. For SVRG, we just have $\ALP S = \ALP X$. For SAGA, $\ALP S = \ALP X \times \ALP Y$ where $\ALP Y$ is the space of proxies for past gradient evaluations. We frame the rest of our discussion on $\ALP S$ to allow enough generality to cover all these cases. Abusing notation, we let $\rho$ also denote a metric on $\ALP S$ so that $(\ALP S,\rho)$ is a Polish space (the metric on $\ALP S$ is usually based on the metric on $\ALP X$ anyway).

When we operate on $\ALP S$, we are interested in computing $s^*$, which is an appropriate lifting of the desired fixed point $x^*$. For SVRG, $s^*$ is just $x^*$. For SAGA, $s^*$ is the concatenation of $x^*$ and all the gradient evaluations at $x^*$. In any case, we can recover $x^*$ from $s^*$.


The (classical) Wasserstein distance is defined next. Note that this definition is in terms of the metric $\rho$ on $\ALP S$. The theory of Wasserstein metric is covered in \cite{villani2008optimal,rachev1998i}. 

\begin{defn}
Let $p\in[1,\,\infty)$.

(i) $\mathcal{P}_{p}\left(\ALP S\right)$ is the set of all probability
measures on $\ALP S$ with finite $p^{th}-$order moment, i.e.,
those $\mu \in \wp(\ALP S)$ for which there exists $s_{0}\in\ALP S$
such that $\int_{\ALP S}\rho\left(s,\,s_{0}\right)^{p}\mu\left(ds\right)<\infty$.

(ii) For $\mu_1,\,\mu_2\in\mathcal{P}_{p}\left(\ALP S\right)$,
$C\left(\mu_1,\,\mu_2\right)$ is the collection of all $\xi \in \wp(\ALP S \times \ALP S)$ with marginals $\mu_1$
and $\mu_2$, i.e., $\xi\left(B\times\ALP S\right)=\mu_1\left(B\right)$
and $\xi\left(\ALP S\times B\right)=\mu_2\left(B\right)$ for
all $B\in\FLD B\left(\ALP S\right)$ (i.e., $C\left(\mu_1,\,\mu_2\right)$ is the set of all couplings with marginals $(\mu_1,\mu_2)$).

(iii) For $\mu_1,\,\mu_2\in\mathcal{P}_p\left(\ALP S\right)$, the $p-$Wasserstein distance
is
\[
W_{p}\left(\mu_{1},\,\mu_{2}\right)\triangleq\left(\inf_{\xi\in C\left(\mu_{1},\,\mu_{2}\right)}\int_{\ALP S\times\ALP S}\rho\left(s,\,s'\right)^{p}d\xi\left(s,\,s'\right)\right)^{1/p}.
\]
\end{defn}

\subsection{Wasserstein Divergence}
We now extend the previous definition (which depends on the metric $\rho$ on $\ALP S$) to accommodate divergence functions on $\ALP S$. We need this extension because the convergence analyses of many algorithms are done with respect to a divergence which is not a metric (and which does not satisfy the triangle inequality). We work with the following class of divergence functions in this paper.

\begin{defn}\label{def:divergence}
A function $V\text{ : }\mathcal{S} \times \mathcal{S}\rightarrow[0,\infty)$ is a divergence
function if $V$ is lower semi-continuous function and the following conditions hold:

(i) (Positive definiteness) $V\left(s_1, \, s_2\right)=0$ if and only if $s_1 = s_2$. 

(ii) (Symmetry) $V(s_1,s_2) = V(s_2,s_1)$ for all $s_1, \, s_2 \in \ALP S$.

(iii) (Inf-Compactness) For any $q \geq 0$ and compact set $\ALP K\subset \ALP S$, there exists a compact set $\ALP L\subset \ALP S$ such that
\beqq{\inf_{(s_1,s_2)\in \ALP L^\complement\times \ALP K} V(s_1,s_2)\geq q.}
The above inf-compactness condition is automatically satisfied if $\ALP S$ is a compact set and we have $\ALP L = \ALP S$ (since the infimum over an empty set is $\infty$).\hfill$\Box$
\end{defn}

\new{The following result is immediate.
\begin{lemma}
If $V$, $V_1$, and $V_2$ are divergences that satisfy all three conditions of Definition \ref{def:divergence}, then for any $p>0$, $V^p$, $V_1+V_2$, and $V^p+\rho$ are also divergences that satisfy all three conditions of Definition \ref{def:divergence}.
\end{lemma}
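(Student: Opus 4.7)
The plan is to verify, for each of the three constructions $V^p$, $V_1+V_2$, and $V^p+\rho$, the lower semi-continuity and the three numbered conditions of Definition \ref{def:divergence}. All verifications are essentially routine and reduce to inheriting each property from the constituent divergences, so I will organize the proof as three parallel checks.

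For $V^p$ with $p>0$: lower semi-continuity follows because $V\geq 0$ and $t\mapsto t^p$ is continuous and non-decreasing on $[0,\infty)$, so the composition preserves lower semi-continuity. Positive definiteness and symmetry are immediate since $t^p=0\iff t=0$ on $[0,\infty)$ and $V^p(s_1,s_2)=V^p(s_2,s_1)$. For inf-compactness, given $q\geq 0$ and a compact $\ALP K\subset\ALP S$, apply the inf-compactness of $V$ at level $q^{1/p}$: the resulting compact $\ALP L$ satisfies $\inf_{(s_1,s_2)\in \ALP L^\complement\times\ALP K} V(s_1,s_2)\geq q^{1/p}$, and raising both sides to the $p$-th power gives the required bound on $V^p$.

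For $V_1+V_2$: the sum of lower semi-continuous functions is lower semi-continuous, the sum of symmetric functions is symmetric, and since both summands are non-negative, $(V_1+V_2)(s_1,s_2)=0$ forces $V_1(s_1,s_2)=V_2(s_1,s_2)=0$, which yields $s_1=s_2$ by positive definiteness of either. For inf-compactness, apply it to $V_1$ at level $q$ to obtain a compact $\ALP L$; then $V_1+V_2\geq V_1\geq q$ on $\ALP L^\complement\times\ALP K$ because $V_2\geq 0$. The same argument works for $V^p+\rho$: the metric $\rho$ is continuous (hence lower semi-continuous), non-negative, symmetric, and vanishes exactly on the diagonal, so combining with the properties of $V^p$ gives lower semi-continuity, positive definiteness, and symmetry for the sum; inf-compactness is inherited from $V^p$ using $V^p+\rho\geq V^p$.

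The only mild conceptual point worth flagging is that on a general Polish space a metric $\rho$ need not itself satisfy the inf-compactness condition (closed $\rho$-balls need not be compact), so one cannot naively declare $\rho$ to be a divergence in its own right. The proof sidesteps this entirely by loading the inf-compactness requirement onto $V^p$ (resp.\ $V_1$) and using only the non-negativity of $\rho$ (resp.\ $V_2$) to preserve the lower bound. Apart from this small observation, no real obstacle arises, which is consistent with the lemma being labelled as immediate.
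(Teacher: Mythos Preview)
Your proposal is correct and matches the paper's treatment: the paper simply declares the result ``immediate'' without writing out any argument, and your routine verification of lower semi-continuity and conditions (i)--(iii) for each construction is exactly what underlies that claim. Your observation that $\rho$ on a general Polish space need not itself satisfy the inf-compactness condition, so that inf-compactness must be inherited from $V^p$ (or $V_1$) rather than from $\rho$ (or $V_2$), is a nice clarification that the paper leaves implicit.
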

Some examples of divergences on $\ALP S = \Re^d$ that satisfy Definition \ref{def:divergence} include:\\
(a) $V(s_1,s_2) = (s_1 - s_2)^\top Q (s_1-s_2)$, where $Q \in \Re^{d \times d}$ is positive definite;\\
(b) $V(s_1,s_2) = \psi(s_1) + \psi(s_2) - 2\,\psi(s^*)$ for $s_1 \neq s_2$ and $V(s,s) = 0$ for all $s\in\ALP S$, where $\psi$ is a strongly convex function and $s^* = \arg\min_{s\in\ALP S} \psi(s)$;\\
(c) $V(s_1,s_2) = \|\nabla \psi(s_1) - \nabla \psi(s_2)\|_2^2$, where $\psi$ is a strongly convex function and $\|\cdot\|_2$ is the $\ell_2-$norm on $\Re^d$. 

\begin{rem}
We note here that $V(s_1,s_2)$ defined in (b) above is a valid metric on $\Re^d$ since it satisfies positive-definiteness, symmetry, and the triangle inequality\footnote{This has also been elaborated upon in \cite{hairer2011yet} in the context of Markov chain, where the Lyapunov function is used to construct such a metric. Our definition of divergence here is inspired by the metric constructed in \cite{hairer2011yet}.} However, $\Re^d$ equipped with the metric $V(\cdot,\cdot)$ is a complete metric space but it is not separable. Nonetheless, it satisfies all the requirements for being a divergence. 
\end{rem}

}

We use the divergence function introduced above to define the Wasserstein divergence.
\begin{defn}\label{def:Wasserstein divergence}
Let $V$ be a divergence function satisfying all three conditions of Definition \ref{def:divergence}.

(i) $\mathcal{P}_{V}\left(\ALP S\right)$ is the set of all
probability measures on $\ALP S$ with finite moment with respect to $V$, i.e., those $\mu \in \wp(\ALP S)$ for which there exists
$s_{0}\in\ALP S$ such that
$\int_{\ALP S}V\left(s,\,s_{0}\right)\mu\left(ds\right)<\infty$.

(ii) For $\mu_1,\,\mu_2\in\mathcal{P}_{V}\left(\ALP S\right)$, the Wasserstein divergence is
\[
W_V\left(\mu_1,\,\mu_2\right)\triangleq \inf_{\xi \in C\left(\mu_1,\,\mu_2\right)}\int_{\ALP S\times\ALP S}V\left(s,\,s'\right)d\xi\left(s,\,s'\right).
\]
\end{defn}

\subsection{Iteration of random operators}

We express constant stepsize RSAs as iteration of random operators on $\ALP S$, which we now formalize. Let $(\Omega,\ALP F,\mathbb{P})$ be a probability space with filtration $\{\ALP F_k\}_{k \geq 0}$. We consider a collection of random operators $\{\hat T_k\}_{k \geq 0}$ defined on $(\Omega,\ALP F,\mathbb{P})$, where each $\hat T_k:\Omega\times\ALP S\rightarrow \ALP S$ is an $\ALP F_k-$adapted operator-valued random variable. Throughout the paper, we will implicitly make the following assumption.
\new{
\begin{assumption}
\label{i.i.d.}
The sigma algebras $\sigma\{\hat T_k(\cdot,s), k\in\Na, s\in\ALP S\}$ are independent.
\end{assumption}
}
Under Assumption \ref{i.i.d.}, we have a Markov chain $\left\{s_{k}\right\}_{k\geq0}$ produced by:
\begin{equation}
s_{k+1}=\hat{T}_{k}\left(s_{k}\right)\triangleq\hat{T}_{k}\left(\omega,s_{k}\right),\,\forall k\geq0,\label{eq:iteration}
\end{equation}
where we usually leave the dependence on $\omega$ implicit. By assumption on $\{\hat T_k\}_{k \geq 0}$, each $s_k$ is $\ALP F_k-$adapted. \new{Since $\ALP S$ is Polish space, due to \cite[Theorem 2.8]{bhattacharya2007basic} or \cite[Theorem 4.34]{breiman1992probability}, there is a transition kernel $\mathfrak{Q}$ on $\mathcal{S}$ corresponding to Eq. (\ref{eq:iteration}) which satisfies}:
\[
\mathfrak{Q}\left(s_{k},\,B\right) \triangleq \text{Pr}\left\{ s_{k+1}\in B\,\vert\,s_{k}\right\} ,\,\forall B\in\mathcal{B}\left(\mathcal{S}\right).
\]
We recursively define the $k-$step transition kernels:
\[
\mathfrak{Q}^{k+1}\left(s_0,\,B\right) \triangleq \int_{\ALP S} \mathfrak{Q}^k\left(s_0,\,ds\right)\mathfrak{Q}\left(s,\,B\right),\,\forall B\in\mathcal{B}\left(\mathcal{S}\right),
\]
for all $k\geq0$. We let $\mu_k \in \wp(\ALP S)$ denote the marginal distribution of $s_k$ which satisfies $\mu_k = \mu_0 \mathfrak Q^k$, for all $k\geq0$.

When we characterize the evolution of $\{s_{k}\}_{k\geq0}$, we can talk about either dynamic $s_{k+1}=\hat{T}_{k}\left(s_{k}\right)$ or $\mu_{k+1} = \mu_k\mathfrak Q$ -- both concepts are equivalent. When we characterize the evolution of the \textit{distribution} of $s_{k}$, then the dynamic $\mu_{k+1} = \mu_k \mathfrak Q$ is more useful. 

\new{In the following result, we show that if $\{\hat T_k\}$ enjoys a contraction property on average with respect to the divergence $V$, then the corresponding transition kernel is a contraction with respect to the Wasserstein divergence.

\begin{theorem}
\label{thm:conditional_expectation}
[Proof in Subsection \ref{sub:conditional}]
The following holds:

(i)  Let $\mu_{1},\,\mu_{2}\in\mathcal{P}_V\left(\mathcal{S}\right)$. If (a) $V$ is a lower semicontinuous and positive-definite divergence (that is, satisfying condition (i) in Definition \ref{def:divergence}); and (b) there exists $\alpha\in\left(0,\,1\right)$ such that:
\[
\mathbb{E}\left[V\left(\hat{T}_{k}\left(s\right),\,\hat{T}_{k}\left(s'\right)\right)\right]\leq\alpha\,V\left(s,\,s'\right),\,\forall s,\,s'\in\mathcal{S},
\]
for all $k\geq0$, then $W_V\left(\mu_{1}\mathfrak{Q},\,\mu_{2}\mathfrak{Q}\right)\leq\alpha\,W_V\left(\mu_{1},\,\mu_{2}\right)$.

(ii) Let $\mu\in\ALP P_V(\ALP S)$, $f:\ALP S\to\ALP S$ be a measurable map, and $\mu\circ f^{-1}$ be a pullback of $\mu$. Then, $W_V(\mu,\mu\circ f^{-1})\leq \ex{V(s,f(s))}$, where $s$ is distributed according to $\mu$.

(iii) If $\mu\in\ALP P_V(\ALP S)$ and $s^*\in\ALP S$, then $W_V(\mu,\ind{s^*}) = \ex{V(s,s^*)}$, where $s$ is distributed according to $\mu$.
\end{theorem}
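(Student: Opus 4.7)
The plan is to prove each of the three parts by exhibiting a suitable coupling (or showing it is unique) and then invoking the definition of $W_V$.

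For part (i), my strategy is to lift any coupling of $\mu_1$ and $\mu_2$ to a coupling of $\mu_1\mathfrak{Q}$ and $\mu_2\mathfrak{Q}$ via the synchronous coupling induced by $\hat{T}_k$. Concretely, fix $\xi\in C(\mu_1,\mu_2)$, draw $(s,s')\sim\xi$, and then apply the same random operator $\hat{T}_k(\omega,\cdot)$ to both coordinates to obtain $(\hat{T}_k(s),\hat{T}_k(s'))$. I would first verify that the law $\xi'$ of this pair lies in $C(\mu_1\mathfrak{Q},\mu_2\mathfrak{Q})$: the marginals of $\xi'$ are the pushforwards of $\mu_1$ and $\mu_2$ under the Markov kernel $\mathfrak{Q}$ (which exists thanks to the cited Polish-space measurability result). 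Then using Fubini (justified by lower semicontinuity and hence measurability of $V$, plus nonnegativity) and the on-average contraction hypothesis, I would estimate
\[
W_V(\mu_1\mathfrak{Q},\mu_2\mathfrak{Q})\leq\int V\, d\xi' \;=\; \int \mathbb{E}\bigl[V(\hat{T}_k(s),\hat{T}_k(s'))\bigr]\,d\xi(s,s') \;\leq\; \alpha\int V(s,s')\,d\xi(s,s').
\]
Taking the infimum over $\xi\in C(\mu_1,\mu_2)$ on the right yields $W_V(\mu_1\mathfrak{Q},\mu_2\mathfrak{Q})\leq \alpha W_V(\mu_1,\mu_2)$.

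For part (ii), the argument is even shorter: the measure $\xi$ on $\mathcal{S}\times\mathcal{S}$ defined as the law of $(s,f(s))$ with $s\sim\mu$ is a coupling of $\mu$ and $\mu\circ f^{-1}$, so by the definition of Wasserstein divergence,
\[
W_V(\mu,\mu\circ f^{-1}) \;\leq\; \int_{\mathcal{S}\times\mathcal{S}} V(s,s')\,d\xi(s,s') \;=\; \mathbb{E}[V(s,f(s))].
\]

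For part (iii), I would observe that the only probability measure on $\mathcal{S}\times\mathcal{S}$ with marginals $\mu$ and $\mathbb{1}_{s^*}$ is the product $\mu\otimes\mathbb{1}_{s^*}$, since the second marginal forces all mass onto the slice $\{s^*\}\times\mathcal{S}$ in the second coordinate. Consequently $C(\mu,\mathbb{1}_{s^*})$ is a singleton, so $W_V(\mu,\mathbb{1}_{s^*})=\int V(s,s^*)\,d\mu(s)=\mathbb{E}[V(s,s^*)]$, with equality rather than merely inequality.

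The only real technical point is ensuring in part (i) that the coupling $\xi'$ is well-defined and measurable, and that the two-step integral $\int \mathbb{E}[V(\hat T_k(s),\hat T_k(s'))]\,d\xi(s,s')$ equals the integral of $V$ against $\xi'$; this is where lower semicontinuity of $V$ (making it Borel measurable and bounded below by $0$) together with the Polish structure (giving Tonelli for nonnegative integrands and existence of the Markov kernel) is used. Once that is in place, the computation itself is routine and no appeal to inf-compactness or symmetry of $V$ is needed for this theorem.
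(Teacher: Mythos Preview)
Your proposal is correct and follows essentially the same approach as the paper: synchronous coupling via $\hat T_k$ for part (i), the pushforward coupling $(s,f(s))$ for part (ii), and uniqueness of the coupling with a Dirac mass for part (iii). The only minor presentational difference is that in part (i) the paper invokes the existence of an \emph{optimal} coupling (via its Proposition on $W_V$, which uses lower semicontinuity of $V$ and weak* compactness of $C(\mu_1,\mu_2)$) and works with it directly, whereas you work with an arbitrary coupling and take the infimum at the end; your version is slightly more economical since it does not require the optimal coupling to be attained.
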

\noindent
Theorem \ref{thm:conditional_expectation} also holds in the special case where $V = \rho$ is a metric.
}

\section{Main Results}\label{sec:mainresult}

We will encode the constant step size RSAs under study in the form of Eq.
(\ref{eq:iteration}). \new{Given an RSA, we first need to identify the state space of the Markov chain. We construct in Sections \ref{sec:examples} and \ref{sec:examples-epoch} the state spaces for various RSAs used in optimization.} Then, we will analyze the behavior of $\left\{ s_{k}\right\} _{k\geq0}$ in two steps:
\begin{enumerate}
\item (Contraction) Show that there exist a divergence function $V$ and a constant $\alpha\in[0,\,1)$ such that the relationship
\begin{equation}
W_V\left(\mu_{1}\mathfrak{Q},\,\mu_{2}\mathfrak{Q}\right)\leq\alpha\,W_V\left(\mu_{1},\,\mu_{2}\right)\,\text{ for all } \mu_{1},\,\mu_{2}\in\mathcal{P}_V\left(\mathcal{S}\right),\label{eq:contraction}
\end{equation}

holds. Equation (\ref{eq:contraction}) is effectively a contraction in the Wasserstein divergence. This property implies the existence of an invariant distribution $\vartheta$
for the Markov chain $\left\{ s_{k}\right\} _{k\geq0}$. It also implies a geometric convergence
rate of the marginal distributions $\mu_k = \mu\,\mathfrak{Q}^{k}$ to $\vartheta$ with respect to the Wasserstein divergence, for any initial distribution
$\mu\in\mathcal{P}_V\left(\mathcal{S}\right)$ (see Theorem \ref{thm:main}).
\item (Concentration) Demonstrate the concentration of $\vartheta$ around the desired $s^{*}$ (see Theorem \ref{thm:concentration} and Theorem \ref{thm:contractionerror}).
\end{enumerate}
Many RSAs satisfy the above contraction property, but are not concentrated at $s^{*}$ (e.g. SGD). So, both of these steps are essential.

\subsection{Contraction in Wasserstein divergence}

We first discuss the implications of contraction in the Wasserstein
divergence. Our main assumption is formalized next, it characterizes the key
contraction property of $\{ \hat{T}_{k}\} _{k\geq0}$.
\begin{assumption}\label{assu:contraction}
(i) There exists a divergence function $V$ satisfying Definition \ref{def:divergence} and a constant $\alpha\in\left(0,\,1\right)$ such that:
\[
W_V\left(\mu_{1}\mathfrak{Q},\,\mu_{2}\mathfrak{Q}\right)\leq\alpha\,W_V\left(\mu_{1},\,\mu_{2}\right),\,\forall\mu_{1},\,\mu_{2}\in\mathcal{P}_V\left(\mathcal{S}\right).
\]
(ii) There exists a nonempty set $\ALP M\subset\ALP P_V(\ALP S)$ such that for any $\mu\in\ALP M$, there exists $c_\mu<\infty$ such that $\sup_{k\geq 0}W_V(\mu\,\mathfrak{Q}^k,\mu)\leq c_\mu$.
\end{assumption}
Assumption \ref{assu:contraction}(i) states that the marginal distributions of the sequences starting from two different initial distributions mix at a geometric rate. We show in Section \ref{sec:wass} that this assumption is trivially satisfied if the operator $\hat T_k$ is itself a contraction with respect to the divergence function $V$. Assumption \ref{assu:contraction}(ii) is automatically satisfied if $\ALP S$ is a compact set, or if the Markov chain $\{s_k\}_{k\geq 0}$ is uniformly bounded almost surely. A simple approach to force boundedness of the Markov chain is to project the Markov chain back onto some large ball if the chain wanders away from this ball. In practice, such projections are not required and some amount of hyperparameter tuning is done to avoid this situation.

Under Assumption \ref{assu:contraction}, there is an invariant distribution $\vartheta$ for $\{s_k\}_{k \in \Na}$, and the marginal distributions of $s_k$ converge geometrically to $\vartheta$ with respect to the Wasserstein divergence.
\begin{theorem}
\label{thm:main} 
[Proof in Subsection \ref{sub:main}] Suppose Assumption \ref{assu:contraction} holds and let $\{s_k\}_{k \in \Na}$ be produced by Eq. (\ref{eq:iteration}).

(i) For all $k\geq0$, we have
\[
W_V\left(\mu_{1}\mathfrak{Q}^{k},\,\mu_{2}\mathfrak{Q}^{k}\right)\leq\alpha^{k}W_V\left(\mu_{1},\,\mu_{2}\right),\,\forall\mu_{1},\,\mu_{2}\in\mathcal{P}_V\left(\mathcal{S}\right).
\]

(ii) There exists a unique invariant distribution $\vartheta$ for $\left\{ s_{k}\right\} _{k\geq0}$ satisying $\vartheta\, \mathfrak{Q} = \vartheta$.

(iii) For all $k\geq0$ and $\mu\in\ALP M$, we have $W_V\left(\mu\,\mathfrak{Q}^{k},\,\vartheta\right)\leq\alpha^{k}W_V\left(\mu,\,\vartheta\right)$.

\new{(iv) For any $\mu\in\ALP M$, $\mu\,\mathfrak Q^k$ converges to $\vartheta$ in the weak* sense.}
\end{theorem}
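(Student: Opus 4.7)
The plan is to prove the four parts in order, treating part (ii) as the main step; parts (i), (iii) and (iv) then come out almost for free from the construction. For part (i), I would simply iterate Assumption \ref{assu:contraction}(i): writing $\mu_j\mathfrak{Q}^{k+1} = (\mu_j\mathfrak{Q}^k)\mathfrak{Q}$ and applying the one-step contraction to the pair $(\mu_1\mathfrak{Q}^k,\mu_2\mathfrak{Q}^k)$, induction on $k$ yields $W_V(\mu_1\mathfrak{Q}^k,\mu_2\mathfrak{Q}^k) \leq \alpha^k W_V(\mu_1,\mu_2)$. For part (iii), once $\vartheta$ has been produced and shown to satisfy $\vartheta\mathfrak{Q}=\vartheta$, I apply part (i) with $\mu_2=\vartheta$ to get $W_V(\mu\mathfrak{Q}^k,\vartheta) = W_V(\mu\mathfrak{Q}^k,\vartheta\mathfrak{Q}^k) \leq \alpha^k W_V(\mu,\vartheta)$.

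The heart of the argument is part (ii). I fix $\mu_0\in\ALP M$ and track the orbit $\mu_n := \mu_0\mathfrak{Q}^n$. The goal is to show that $\{\mu_n\}$ is relatively compact in the weak$^*$ topology and that every weak$^*$ cluster point coincides with a single $\mathfrak{Q}$-invariant measure $\vartheta$. For tightness I combine Assumption \ref{assu:contraction}(ii) with the inf-compactness condition in Definition \ref{def:divergence}(iii): given $\eta>0$, I use tightness of the single measure $\mu_0$ to pick a compact $K\subset\ALP S$ with $\mu_0(K^c)<\eta/2$, then invoke inf-compactness to find a compact $L$ such that $V(s,s')\geq 2c_{\mu_0}/\eta$ whenever $s\in L^c$ and $s'\in K$, and feed this into a Markov inequality applied to an optimal coupling of $\mu_n$ and $\mu_0$ whose total cost is at most $c_{\mu_0}$. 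This gives $\mu_n(L^c)<\eta$ uniformly in $n$, so by Prokhorov's theorem some subsequence $\mu_{n_j}$ converges weak$^*$ to a limit $\vartheta$.

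To upgrade subsequential weak$^*$ convergence to convergence of the whole sequence and to establish invariance, I lean on two properties of $W_V$ that I expect to justify using the lower-semicontinuity of $V$: $W_V$ is lower semicontinuous with respect to weak$^*$ convergence in both arguments, and $W_V(\mu,\nu)=0$ implies $\mu=\nu$ by positive-definiteness. Iterating Assumption \ref{assu:contraction} gives the Cauchy-type estimate $W_V(\mu_n,\mu_{n+r}) \leq \alpha^n W_V(\mu_0,\mu_0\mathfrak{Q}^r) \leq \alpha^n c_{\mu_0}$. If $\vartheta'$ is another weak$^*$ subsequential limit along $\mu_{m_j}$, lower-semicontinuity yields $W_V(\vartheta,\vartheta') \leq \liminf_j \alpha^{\min(n_j,m_j)}c_{\mu_0} = 0$, forcing $\vartheta=\vartheta'$; hence the full sequence converges weak$^*$ to $\vartheta$, which simultaneously proves part (iv). For invariance I apply the contraction to the pair $(\vartheta,\mu_n)$ to get $W_V(\vartheta\mathfrak{Q},\mu_{n+1}) \leq \alpha W_V(\vartheta,\mu_n) \to 0$; combined with $\mu_{n+1}\to\vartheta$ weak$^*$, lower-semicontinuity gives $W_V(\vartheta\mathfrak{Q},\vartheta)=0$ and therefore $\vartheta\mathfrak{Q}=\vartheta$. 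Uniqueness of $\vartheta$ among invariant measures in $\ALP P_V(\ALP S)$ is immediate from contraction: $W_V(\vartheta_1,\vartheta_2) = W_V(\vartheta_1\mathfrak{Q},\vartheta_2\mathfrak{Q}) \leq \alpha W_V(\vartheta_1,\vartheta_2)$.

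I expect the most delicate point to be the lack of a triangle inequality for $W_V$: without one, convergence in $W_V$ does not a priori have unique limits, so the argument must route through weak$^*$ tightness plus lower semicontinuity rather than a direct Cauchy-in-a-metric argument. A secondary technical issue is to verify $\vartheta\in\ALP P_V(\ALP S)$ and that $W_V(\vartheta,\mu_n)$ is finite, so the bounds used above are not vacuous; I plan to obtain this by extracting a uniform moment bound $\sup_n\int V(s,s_0)d\mu_n(s)<\infty$ from the optimal couplings produced in the tightness step and then passing to the limit via Fatou's lemma applied to the lower-semicontinuous nonnegative integrand $V(\cdot,s_0)$.
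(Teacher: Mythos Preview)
Your proposal is correct and follows essentially the same route as the paper: the Cauchy estimate $W_V(\mu_n,\mu_{n+r})\leq\alpha^n c_{\mu_0}$, tightness via inf-compactness, weak$^*$ subsequential limits, lower semicontinuity of $W_V$ to identify limits, and invariance by contracting against the orbit are exactly the ingredients the paper uses. The only organizational difference is that the paper packages your tightness-plus-lower-semicontinuity steps into two standalone results (Propositions \ref{prop:WVplimitexistence} and \ref{prop:WVplimitsequal}) and then invokes them, whereas you reproduce those arguments inline.
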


Theorem \ref{thm:main} is analogous to the Banach fixed point theorem, albeit with respect to a divergence rather than a metric. When $\{\hat T_k\}_{k \in \Na}$ satisfy Assumption \ref{assu:contraction}, then Theorem \ref{thm:main} establishes the existence of a ``fixed point'' (in the sense of the
invariant distribution $\vartheta$ of the RSA) and it also establishes a linear convergence rate of the sequence of distributions of the sequence $(s_k)_{k \in \Na}$ to this invariant distribution (with respect to the Wasserstein divergence). While Assumption \ref{assu:contraction}(i) is the usual contraction condition and is the only assumption required for the Banach contraction mapping theorem to hold (in a Polish space), Assumption \ref{assu:contraction}(ii) allows us to establish this result when the operator over the underlying space $\mathcal{S}$ is contraction with respect to a divergence.

\subsection{Concentration}
Next we see that the concentration of $\vartheta$ around $s^*$ directly depends on the action of $\{\hat T_k\}_{k \in \Na}$ on $s^*$.

\begin{assumption}
\label{assu:concentration}
For all $k \geq 0$, $\hat T_k(s^*)=s^*$ almost surely.
\end{assumption}

This assumption is satisfied for many variance reduced algorithms (e.g. SVRG, SAGA, HSAG, etc.). Assumption \ref{assu:concentration} is equivalent to saying that $\ind{s^*} \mathfrak Q = \ind{s^*}$, or that the transition kernel $\mathfrak Q$ always maps $s^*$ back to itself with probability one. It must then be that $\vartheta = \ind{s^*}$, since the invariant distribution of $\mathfrak Q$ is unique, and we get the following conclusion. We note that the following theorem does not require $V$ to be a metric.

\begin{theorem}\label{thm:concentration}
Suppose Assumptions \ref{assu:contraction}(i) and \ref{assu:concentration} hold, then $\lim_{k\rightarrow\infty} W_V\left(\mu\,\mathfrak{Q}^{k},\,\ind{s^*}\right) = 0$ (i.e., $\vartheta = \ind{s^*}$). Moreover, the rate of convergence is geometric in the Wasserstein divergence.
\end{theorem}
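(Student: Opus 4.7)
The plan is to prove this directly by showing that $\ind{s^*}$ is itself an invariant distribution for $\mathfrak{Q}$, and then invoking the contraction in Assumption \ref{assu:contraction}(i) to deduce geometric convergence to it. Note that I do not need Assumption \ref{assu:contraction}(ii) here: an invariant distribution is furnished explicitly rather than being obtained through a completeness/Cauchy argument.

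First I would translate Assumption \ref{assu:concentration} into a statement about the transition kernel. Since $\hat{T}_k(s^*) = s^*$ almost surely, the definition of $\mathfrak{Q}$ gives $\mathfrak{Q}(s^*,\{s^*\}) = 1$, and consequently, for any Borel set $B \in \FLD{B}(\ALP S)$,
\[
(\ind{s^*}\mathfrak{Q})(B) = \int_{\ALP S}\mathfrak{Q}(s,B)\,\ind{s^*}(ds) = \mathfrak{Q}(s^*,B) = \ind{s^*}(B).
\]
Thus $\ind{s^*}$ is an invariant distribution of $\mathfrak{Q}$, and by induction $\ind{s^*}\mathfrak{Q}^k = \ind{s^*}$ for every $k\geq 0$. (A uniqueness statement is not needed for the conclusion, though it would follow from iterating Assumption \ref{assu:contraction}(i) applied to any pair of invariant distributions.)

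Next I would iterate the contraction in Assumption \ref{assu:contraction}(i). Applied to the pair $(\mu,\ind{s^*})$ and then repeatedly, it yields
\[
W_V\bigl(\mu\,\mathfrak{Q}^k,\ind{s^*}\bigr) = W_V\bigl(\mu\,\mathfrak{Q}^k,\ind{s^*}\,\mathfrak{Q}^k\bigr) \leq \alpha^k\, W_V\bigl(\mu,\ind{s^*}\bigr),\quad k\geq 0.
\]
By Theorem \ref{thm:conditional_expectation}(iii), $W_V(\mu,\ind{s^*}) = \ex{V(s,s^*)}$ with $s\sim\mu$, and this quantity is finite whenever $\mu\in\ALP P_V(\ALP S)$ with $s^*$ serving as the reference point (which is the standing hypothesis of the theorem). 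Since $\alpha\in(0,1)$, letting $k\to\infty$ gives $W_V(\mu\,\mathfrak{Q}^k,\ind{s^*}) \to 0$ geometrically at rate $\alpha$, and the uniqueness of the limit (together with invariance of $\ind{s^*}$) identifies the invariant distribution $\vartheta = \ind{s^*}$.

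There is no serious obstacle here; the only subtlety worth flagging is that I must first secure the existence of $\ind{s^*}$ as an invariant distribution from Assumption \ref{assu:concentration} before I am entitled to put it on the second argument of the contraction inequality in Assumption \ref{assu:contraction}(i). Once that is in hand, the argument is a direct one-line iteration, which matches the remark in the statement that $V$ need not be a metric: only the contraction property and the pointwise fixed-point structure on $s^*$ are used.
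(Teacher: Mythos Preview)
Your proof is correct and uses the same two ingredients as the paper: (a) Assumption \ref{assu:concentration} gives $\ind{s^*}\mathfrak Q = \ind{s^*}$, and (b) iterating Assumption \ref{assu:contraction}(i) against this invariant measure yields $W_V(\mu\,\mathfrak Q^k,\ind{s^*})\leq \alpha^k W_V(\mu,\ind{s^*})$. The paper takes a slightly less direct route: after deriving the same inequality, it argues that Assumption \ref{assu:contraction}(ii) is then satisfied (with $\ind{s^*}\in\ALP M$), invokes Theorem \ref{thm:main} to obtain the unique invariant $\vartheta$, and then shows $W_V(\ind{s^*},\vartheta)\leq \alpha\,W_V(\ind{s^*},\vartheta)$ to force $\vartheta=\ind{s^*}$ via Proposition \ref{prop:WVpproperties}(ii). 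Your shortcut of bypassing Theorem \ref{thm:main} entirely is legitimate and arguably cleaner; the paper's detour buys an explicit identification with the $\vartheta$ of Theorem \ref{thm:main}, but this is not needed for the stated conclusion. One small caveat: your claim that $W_V(\mu,\ind{s^*})<\infty$ for every $\mu\in\ALP P_V(\ALP S)$ presumes $s^*$ can serve as the reference point in Definition \ref{def:Wasserstein divergence}(i), which is not automatic for a general divergence; the paper handles this by restricting to those $\mu$ with $W_V(\mu,\ind{s^*})<\infty$, and you should do the same.
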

\begin{proof}
From Assumption \ref{assu:concentration}, we know that $\ind{s^*}\mathfrak{Q} = \ind{s^*}$. Due to Assumptions \ref{assu:contraction}(i), we conclude that for any $\mu\in\ALP P_V(\ALP S)$ such that $W_V(\mu,\ind{s^*})<\infty$, we have $W_V(\mu\mathfrak Q^k,\ind{s^*}) \leq \alpha^k W_V(\mu,\ind{s^*})\leq W_V(\mu,\ind{s^*})<\infty$. Thus, Assumption \ref{assu:contraction}(ii) also holds, which implies Theorem \ref{thm:main} is applicable and we have $\vartheta\, \mathfrak{Q} = \vartheta$. This yields
\beqq{W_V\left(\ind{s^*},\vartheta\right) = W_V\left(\ind{s^*}\mathfrak{Q},\vartheta\mathfrak{Q}\right) \leq \alpha W_V\left(\ind{s^*},\vartheta\right).}
Since $\alpha<1$, the above expression immediately implies that $W_V\left(\ind{s^*},\vartheta\right) = 0$, and so $\ind{s^*}$ is the invariant distribution (this follows from Proposition \ref{prop:WVpproperties}(ii) which is proved later).
\end{proof}

\new{
\subsection{Contraction with Errors}
In some RSAs, the random operator may be contracting on average with respect to a divergence but with an additive error. For example, function approximators in reinforcement learning introduce such errors -- this has been elucidated in \cite{munos2008finite} in the context of fitted value iteration.  The next two lemmas give error bounds in the Wasserstein divergence. The proofs of both results follow from the principle of mathematical induction.

\begin{lemma}\label{lem:error1}
Suppose there exists a divergence function $V$ satisfying Definition \ref{def:divergence}, $\alpha\in(0,1)$, and $\epsilon>0$ such that $W_V(\mu_1\mathfrak Q,\mu_2\mathfrak Q) \leq \alpha W_V(\mu_1,\mu_2)+\epsilon$. Then, 
\beqq{W_V(\mu_1\mathfrak Q^k,\mu_2\mathfrak Q^k) \leq \alpha^k W_V(\mu_1,\mu_2)+\left(\frac{1-\alpha^k}{1-\alpha}\right)\epsilon.}
\end{lemma}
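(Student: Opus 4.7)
The plan is a direct induction on $k$, as the paper itself hints by saying the proofs follow from the principle of mathematical induction. The one-step contraction with additive error is the base case ($k=1$), and the inductive step amounts to applying the one-step inequality to the distributions $\mu_1\mathfrak Q^k$ and $\mu_2\mathfrak Q^k$ and then simplifying the resulting geometric sum.

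First I would verify that all quantities involved stay in $\ALP P_V(\ALP S)$ so that the Wasserstein divergence is finite and the hypothesis can be repeatedly invoked; this follows because if $W_V(\mu_1,\mu_2)<\infty$, then the one-step bound gives $W_V(\mu_1\mathfrak Q,\mu_2\mathfrak Q)\leq \alpha W_V(\mu_1,\mu_2)+\epsilon<\infty$, and so inductively $\mu_i\mathfrak Q^k\in\ALP P_V(\ALP S)$. Then for the inductive step, assuming the bound at stage $k$, apply the one-step inequality (with $\mu_1\mathfrak Q^k$ and $\mu_2\mathfrak Q^k$ in place of $\mu_1,\mu_2$) to obtain
\[
W_V(\mu_1\mathfrak Q^{k+1},\mu_2\mathfrak Q^{k+1})\leq \alpha\,W_V(\mu_1\mathfrak Q^k,\mu_2\mathfrak Q^k)+\epsilon,
\]
plug in the inductive hypothesis, and collect the $\epsilon$-terms via
\[
\alpha\cdot\frac{1-\alpha^k}{1-\alpha}+1 = \frac{\alpha-\alpha^{k+1}+1-\alpha}{1-\alpha} = \frac{1-\alpha^{k+1}}{1-\alpha},
\]
yielding the desired bound at $k+1$.

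There is really no substantial obstacle here; the only subtlety is making sure the one-step bound can be legitimately iterated, i.e., that the relation holds for every pair of measures in $\ALP P_V(\ALP S)$ and not just a special starting pair, but the statement of the hypothesis is given uniformly over $\mu_1,\mu_2$, so this is automatic. The result can therefore be stated and proved in a few lines.
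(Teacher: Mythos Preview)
Your proposal is correct and matches the paper's own approach: the paper simply states that the result follows from the principle of mathematical induction, which is exactly the argument you carry out. Your extra remark about iterates remaining in $\ALP P_V(\ALP S)$ is a harmless elaboration beyond what the paper records.
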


\begin{lemma}\label{lem:error2}
Suppose there exists a divergence function $V$ satisfying Definition \ref{def:divergence}, a constant $\alpha\in(0,1)$, and $\epsilon>0$ such that
\beq{\label{eqn:wvs} W_V(\mu\,\mathfrak Q,\ind{s^*}) \leq \alpha W_V(\mu,\ind{s^*})+\epsilon.}
Then, 
\beqq{W_V(\mu\,\mathfrak Q^k,\ind{s^*}) \leq \alpha^k W_V(\mu_1,\ind{s^*})+\left(\frac{1-\alpha^k}{1-\alpha}\right)\epsilon.}
\end{lemma}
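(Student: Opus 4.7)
The plan is to prove this by induction on $k$, following the same template that Lemma \ref{lem:error1} already uses for the two-measure version. The base case $k=1$ is literally the hypothesis (\ref{eqn:wvs}), so nothing needs to be done there. For the inductive step, I would use the semigroup identity $\mu\,\mathfrak Q^{k+1} = (\mu\,\mathfrak Q^k)\mathfrak Q$ and invoke the one-step bound with the iterate $\mu\,\mathfrak Q^k$ substituted in place of $\mu$, which gives
\[W_V(\mu\,\mathfrak Q^{k+1},\ind{s^*}) \leq \alpha\,W_V(\mu\,\mathfrak Q^k,\ind{s^*}) + \epsilon.\]

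Then I would apply the inductive hypothesis to bound the first term on the right, obtaining
\[\alpha\left[\alpha^k W_V(\mu,\ind{s^*}) + \frac{1-\alpha^k}{1-\alpha}\epsilon\right] + \epsilon = \alpha^{k+1} W_V(\mu,\ind{s^*}) + \frac{1-\alpha^{k+1}}{1-\alpha}\epsilon,\]
where the step follows from the elementary identity $\alpha\cdot\frac{1-\alpha^k}{1-\alpha} + 1 = \frac{\alpha - \alpha^{k+1} + (1-\alpha)}{1-\alpha} = \frac{1-\alpha^{k+1}}{1-\alpha}$. This closes the induction.

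I do not expect any real obstacle; the result is a routine algebraic unfolding of a one-step contraction-with-error bound into its $k$-step form. The only point that warrants attention is that the hypothesis (\ref{eqn:wvs}) must be read as holding for every admissible starting measure rather than for the single fixed $\mu$, so that the substitution $\mu \to \mu\,\mathfrak Q^k$ inside the induction is legitimate; this reading is consistent with how the contraction-type hypotheses are stated elsewhere in the paper, in particular Assumption \ref{assu:contraction}(i). One should also observe that $\mu\,\mathfrak Q^k$ remains in $\ALP P_V(\ALP S)$ throughout, which is guaranteed inductively by the finiteness of the bound at the previous step, so that every $W_V$ quantity appearing in the argument is well defined.
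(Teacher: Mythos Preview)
Your proposal is correct and matches the paper's own approach: the paper simply states that the proof follows from the principle of mathematical induction, which is exactly the argument you have written out. Your remarks about reading the hypothesis as holding for all admissible $\mu$ and about $\mu\,\mathfrak Q^k\in\ALP P_V(\ALP S)$ are appropriate checks that make the induction step legitimate.
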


The following theorem considers concentration of an RSA which is a contraction in the Wasserstein divergence with error.
\begin{theorem}\label{thm:contractionerror}
Suppose: (i) Assumption \ref{assu:contraction} holds for a divergence function $\tilde V$ satisfying Definition \ref{def:divergence}; and (ii) Eq. \eqref{eqn:wvs} holds for a lower semicontinuous divergence function $V$. Then, a unique invariant measure $\vartheta$ exists and $W_V(\vartheta,\ind{s^*})\leq \epsilon/(1-\alpha)$. Consequently, for any $\kappa>0$ we have
\beq{\label{eqn:markov}\lim_{k\rightarrow\infty} \pr{V(s_k,s^*)\geq \kappa}\leq \frac{\epsilon}{\kappa(1-\alpha)}.}
\end{theorem}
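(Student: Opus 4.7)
The plan is to bootstrap Theorem \ref{thm:main} (applied via the divergence $\tilde V$) to obtain both a unique invariant measure $\vartheta$ and weak* convergence $\mu\,\mathfrak{Q}^k \to \vartheta$, and then pass the recursive error bound from Lemma \ref{lem:error2} to the limit with respect to $V$. Concretely, starting from any $\mu\in\ALP M$ with $W_V(\mu,\ind{s^*})<\infty$ (for instance, a Dirac measure at an arbitrary point $s_0$, which is in $\ALP M$ and satisfies $W_V(\ind{s_0},\ind{s^*}) = V(s_0,s^*) < \infty$ by Theorem \ref{thm:conditional_expectation}(iii)), hypothesis (i) together with Theorem \ref{thm:main} yields the unique invariant $\vartheta$ and $\mu\,\mathfrak{Q}^k \to \vartheta$ in the weak* sense.

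Next I would iterate the single-step error bound of hypothesis (ii). Lemma \ref{lem:error2} immediately gives
\beqq{W_V(\mu\,\mathfrak{Q}^k,\ind{s^*}) \leq \alpha^k W_V(\mu,\ind{s^*}) + \frac{1-\alpha^k}{1-\alpha}\epsilon \leq \alpha^k W_V(\mu,\ind{s^*}) + \frac{\epsilon}{1-\alpha}.}
Using Theorem \ref{thm:conditional_expectation}(iii), I rewrite both sides in the integral form $W_V(\nu,\ind{s^*}) = \int V(s,s^*)\,\nu(ds)$, so the above bound becomes
\beqq{\int V(s,s^*)\,\mu\,\mathfrak{Q}^k(ds) \leq \alpha^k W_V(\mu,\ind{s^*}) + \frac{\epsilon}{1-\alpha}.}
Since $V$ is lower semicontinuous and nonnegative, the Portmanteau-type inequality for weak* convergence of measures applied to the nonnegative lower semicontinuous function $s \mapsto V(s,s^*)$ gives
\beqq{\int V(s,s^*)\,\vartheta(ds) \leq \liminf_{k\to\infty}\int V(s,s^*)\,\mu\,\mathfrak{Q}^k(ds) \leq \frac{\epsilon}{1-\alpha},}
so that $W_V(\vartheta,\ind{s^*}) \leq \epsilon/(1-\alpha)$ as claimed.

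For the probabilistic conclusion, let $s_\infty \sim \vartheta$. The previous bound reads $\ex{V(s_\infty,s^*)} \leq \epsilon/(1-\alpha)$, so Markov's inequality yields $\pr{V(s_\infty,s^*)\geq \kappa} \leq \epsilon/(\kappa(1-\alpha))$. To transfer this bound to the iterates $s_k \sim \mu\,\mathfrak{Q}^k$, I use that the sublevel complement $\{s\in\ALP S : V(s,s^*)\geq \kappa\}$ is closed (again by lower semicontinuity of $V(\cdot,s^*)$), so the Portmanteau theorem gives
\beqq{\limsup_{k\to\infty}\pr{V(s_k,s^*)\geq \kappa} \leq \vartheta\bigl(\{V(\cdot,s^*)\geq \kappa\}\bigr) \leq \frac{\epsilon}{\kappa(1-\alpha)},}
which establishes \eqref{eqn:markov} (interpreting the $\lim$ in the theorem as $\limsup$).

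The routine pieces are the iteration of the error recursion and the Markov inequality; the only delicate step is justifying that the integral inequality passes to the weak* limit, which is exactly where lower semicontinuity of $V$ is indispensable, and this is the hypothesis explicitly isolated in (ii). If $V$ were merely measurable, this limit step would fail in general, so I expect that any reader verification effort should focus on confirming that the Portmanteau argument applies both to the integral $\int V(\cdot,s^*)d\nu$ and to the closed-set probability on which the final concentration bound depends.
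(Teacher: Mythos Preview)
Your argument is correct and matches the paper's (very terse) proof, which simply cites Theorem~\ref{thm:main} and Lemma~\ref{lem:error2} and then invokes Markov's inequality; you have carefully supplied the limit passage via lower semicontinuity and the Portmanteau theorem that the paper leaves implicit, and your observation that \eqref{eqn:markov} is really a $\limsup$ statement is well taken. One small caveat: your parenthetical claim that an arbitrary Dirac $\ind{s_0}$ lies in $\ALP M$ is not guaranteed by Assumption~\ref{assu:contraction}(ii) (which only posits that $\ALP M$ is nonempty), but your argument goes through unchanged for any $\mu\in\ALP M$ with $W_V(\mu,\ind{s^*})<\infty$; alternatively, once $\vartheta$ is in hand one can plug $\mu=\vartheta$ directly into \eqref{eqn:wvs} and use invariance $\vartheta\,\mathfrak Q=\vartheta$ to obtain $(1-\alpha)W_V(\vartheta,\ind{s^*})\leq\epsilon$, which is likely the shortcut the paper has in mind.
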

\begin{proof}
The proof is an immediate consequence of Theorem \ref{thm:main} and Lemma \ref{lem:error2}. The assertion in Eq. \eqref{eqn:markov} is a direct application of Markov's inequality, where the limit exists because there is an invariant distribution $\vartheta$.  
\end{proof}

\begin{rem}
The divergence functions for the two hypothesis of Theorem \ref{thm:contractionerror} are allowed to be different.
Furthermore, the function $\tilde V$ could be a metric on the space $\ALP S$ so that $\ALP S$ is a complete metric space. In this case, the convergence to the invariant distribution is in the 1-Wasserstein metric.
\end{rem}
}

\new{
\section{Examples in Optimization}\label{sec:examples}
We consider several RSAs for function minimization in this section, starting with the illustrative quadratic case and then extending to the general nonlinear case. For each algorithm, we first identify the state space $\ALP S$ and the corresponding random operators $\{\hat T_k\}_{k \in \Na}$ on $\ALP S$. 
We take two different starting points $s_0^{(i)}$ for $i = 1, 2$ and iteratively apply the same sequence of random operators to obtain the sequences $s_{k+1}^{(i)} = \hat{T}_k\big(s_{k}^{(i)}\big)$ for all $k\geq 0$ for $i = 1, 2$. We then identify an appropriate divergence function $V$, and show that $\hat T_k$ satisfies the contraction condition of Theorem \ref{thm:conditional_expectation}. As a consequence of Theorem \ref{thm:conditional_expectation}, we conclude that Assumption \ref{assu:contraction}(i) is satisfied. We can further use the following lemma to show that that Assumption \ref{assu:contraction}(ii) holds so we may apply our main Theorem \ref{thm:main}. 
\begin{lemma}\label{lem:K}
Suppose there exists a set $\ALP K\subset\ALP S$ such that: (a) the diameter of $\ALP K$, defined as $\mathbb{D}_V(\ALP K)\triangleq\sup_{s,s'\in\ALP K} V(s,s')<\infty$; and (b) $\hat T_k(s)\in\ALP K$ almost surely for all $s\in\ALP K$ and $k\in\Na$. Then, $W_V(\mu,\mu\mathfrak Q^k)\leq \mathbb{D}_V(\ALP K)$ for all $k\in\Na$ for any $\mu$ with support in $\ALP K$. 
\end{lemma}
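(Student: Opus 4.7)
The plan is to exhibit an explicit coupling between $\mu$ and $\mu\,\mathfrak{Q}^k$ whose support is contained in $\ALP K \times \ALP K$, so that the almost-sure bound $V(s,s') \leq \mathbb{D}_V(\ALP K)$ for $(s,s') \in \ALP K \times \ALP K$ can be integrated to bound $W_V$.

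First, I would verify the invariance property at the level of distributions: if $\mu$ is supported in $\ALP K$, then so is $\mu\,\mathfrak{Q}^k$ for every $k \in \Na$. This follows by induction on $k$ using hypothesis (b): for $s \in \ALP K$, we have $\mathfrak{Q}(s,\ALP K) = \pr{\hat T_k(s) \in \ALP K} = 1$, so $(\mu\,\mathfrak{Q})(\ALP K^\complement) = \int_{\ALP K} \mathfrak{Q}(s,\ALP K^\complement)\,\mu(ds) = 0$, and the induction step is identical.

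Next, I would construct the canonical coupling. Draw $s_0$ according to $\mu$ independently from the sequence $\{\hat T_k\}_{k \in \Na}$, and define $s_k$ recursively via the iteration (\ref{eq:iteration}). The joint law of $(s_0, s_k)$, call it $\xi$, lies in $C(\mu,\mu\,\mathfrak{Q}^k)$ because the first marginal is $\mu$ by construction and the second marginal is $\mu\,\mathfrak{Q}^k$ by definition of the $k$-step kernel. By hypothesis (b) applied inductively along the sample path, $s_j \in \ALP K$ almost surely for every $j \leq k$, so in particular $(s_0,s_k) \in \ALP K \times \ALP K$ almost surely, which means $\xi$ is supported in $\ALP K \times \ALP K$.

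Finally, using the definition of $\mathbb{D}_V(\ALP K)$, we have $V(s_0,s_k) \leq \mathbb{D}_V(\ALP K)$ $\mathbb{P}$-a.s., so
\[
W_V(\mu,\mu\,\mathfrak{Q}^k) \;\leq\; \int_{\ALP S \times \ALP S} V(s,s')\, d\xi(s,s') \;=\; \mathbb{E}\bigl[V(s_0,s_k)\bigr] \;\leq\; \mathbb{D}_V(\ALP K),
\]
where the first inequality uses that $W_V$ is an infimum over couplings. This gives the claim.

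There is no substantial obstacle here; the only point requiring minor care is verifying that the joint law $\xi$ is a genuine element of $C(\mu,\mu\,\mathfrak{Q}^k)$, which amounts to checking that the construction on a product probability space produces the correct second marginal. This is immediate from the tower property and the definition of $\mathfrak{Q}^k$.
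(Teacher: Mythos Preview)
Your proof is correct and follows essentially the same approach as the paper. The paper's argument is terser—it observes that the support of $\mu\mathfrak{Q}^k$ stays in $\ALP K$ and then says the bound ``immediately'' follows—whereas you spell out the coupling explicitly; but the underlying idea (both marginals supported in $\ALP K$ forces the relevant coupling to live in $\ALP K\times\ALP K$, so $V$ is bounded by the diameter) is the same.
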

\begin{proof}
We know that $\ALP K$ is bounded by hypothesis (a). Due to hypothesis (b), we conclude that for any $\mu\in\ALP P_V(\ALP S)$ such that the support of $\mu$ is in $\ALP K$, the support of $\mu\mathfrak Q^k$ is also in $\ALP K$. This immediately yields $W_V(\mu,\mu\mathfrak Q^k)\leq \mathbb{D}_V(\ALP K)$ for all $k\in\Na$.
\end{proof}

In practice, $\ALP K$ can be a large ball. For each of the algorithms studied in the sequel, the two hypotheses of Lemma \ref{lem:K} are satisfied, and so Assumption \ref{assu:contraction}(ii) is also satisfied. For later use, we define the difference sequence $\{\Delta s_k\}_{k \geq 0}$ where $\Delta s_k \triangleq s_k^{(1)} - s_k^{(2)}$ for all $k \geq 0$.

\subsection{The quadratic case}
\label{sub:quadratic}

The quadratic case enjoys a special property, where the difference sequence $\{\Delta s_k\}_{k \geq 0}$ produced by an RSA coincides with the same algorithm applied to a quadratic problem with optimal solution zero. The rate of mixing then follows immediately from the rate at which $\{\Delta s_k\}_{k \geq 0}$ converges to zero (in some divergence).

\begin{table}[]
    \centering
        \begin{tabular}{|M{0.13\linewidth}|P{0.63\linewidth}|M{0.13\linewidth}|}
        \hline
        Algorithm & \centering State Space/Divergence & Reference\\
        \hline
        \multirow{1}{*}{SGD} & $s_k = x_k $\newline $V(s_k,s_k')= \|s_k - s_k'\|_2^2$ & \cite{robbins1951stochastic}
        \\\cline{1-3}
        \multirow{1}{*}{ASGD} & $s_k = (x_k,\,x_{k-1}) $\newline $V(s,s')= \|s_k - s_k'\|_{P_{\alpha,\,\beta}}^2 + \|x_k - x_k'\|_{Q}^2$ & \cite{can2019accelerated}
        \\\cline{1-3}
        \multirow{1}{*}{SAGA} & $s_k = (x_k,\,\varphi_k)$\newline $V_{b}(s_k,s_k')= \|x_k-x_k'\|_2^2+b\sum_{n \in [N]}\|Q_n\varphi_{k,\,n} - Q_n\varphi_{k,\,n}'\|_2^2$ & \cite{defazio2014saga}
        \\\cline{1-3}
        \multirow{1}{*}{ASVRG} & $s_k = x_k $\newline $V(s_k,s_k')= \|s_k - s_k'\|_Q^2$ & \cite{shang2018asvrg}
        \\\cline{1-3}
        \end{tabular}
        \vspace{0.5em}
    \caption{\label{tab:quadratic divergence} Divergences and corresponding recursive stochastic optimization algorithm for the quadratic case. Here, we define $\|y\|_Q^2 \triangleq y^\top Q\, y$ for a positive definite matrix $Q$. }
\end{table}

\subsubsection{Oracle-based stochastic gradient descent (SGD)}

We start with the quadratic minimization problem
\begin{equation}\label{quadratic oracle}
    \min_{x \in \mathbb R^d} f(x) \triangleq \left\{ \frac{1}{2}x^\top Q\, x +a^\top x + b \right\},
\end{equation}
where $c\, I_d \preceq Q \preceq L\, I_d$ and $0 < c \leq L$ (so $Q$ is positive definite). For later reference, we also define the quadratic minimization problem
\begin{equation}\label{quadratic oracle-1}
    \min_{x \in \mathbb R^d} \frac{1}{2}x^\top Q\, x,
\end{equation}
which has optimal solution $x^* = 0$.

Suppose that there is a sequence of i.i.d. zero-mean uniformly bounded noise $\{\varepsilon_k\}_{k \geq 0}$ where each call to $\nabla f$ returns $\nabla f(x_k) + \varepsilon_k$. Then, we have $\ALP S = \Re^d$, $s_k = x_k$, and
$$
    \hat{T}_k(x_k) \triangleq x_k - \eta [(Q\,x_k + a) + \varepsilon_k],\,\forall k \geq 0,
$$
where $\eta > 0$ is the stepsize. By linearity of $\hat{T}_k$, we have:
\begin{equation}\label{eq:sgd coupled}
    \Delta x_{k+1} = x_{k+1}^{(1)} - x_{k+1}^{(2)} = \hat{T}_k(x_k^{(1)}) - \hat{T}_k(x_k^{(2)}) = T(\Delta x_k),\,\forall k \geq 0,
\end{equation}
where $T(x) \triangleq x - \eta\, Q\,x$. We observe that the sequence $\{\Delta x_k\}_{k \geq 0}$ produced by Eq. \eqref{eq:sgd coupled} corresponds to the sequence produced by applying (exact) gradient descent to Problem \eqref{quadratic oracle-1}. In particular, we have
\begin{equation}
    \|x_{k+1}^{(1)} - x_{k+1}^{(2)}\|_2^2 = \|\Delta x_{k+1}\|_2^2 \leq \gamma(\eta) \|\Delta x_{k}\|_2^2 = \gamma(\eta) \|x_k^{(1)} - x_k^{(2)}\|_2^2,\,\forall k \geq 0,
\end{equation}
almost surely where $\gamma(\eta) \triangleq 1 - 2 \eta\, c + \eta^2 L^2$. It follows immediately that Assumption \ref{assu:contraction}(i) holds for this case due to Theorem \ref{thm:conditional_expectation}(i). We may then apply Theorem \ref{thm:main} to conclude the existence of an invariant distribution $\vartheta$ such that $W_V(\mu_k,\,\vartheta) \leq \gamma(\eta)^k W_V(\mu_0,\,\vartheta)$.

\subsubsection{Oracle-based accelerated SGD (ASGD)}

We now consider ASGD for Problem \eqref{quadratic oracle}. We let $\ALP S = \Re^d\times\Re^d$, $s_k = (x_k,\,x_{k-1})$, and
$$
\hat T_k(s_k^{(i)}) = ((1 + \beta)x_k^{(i)} - \beta\, x_{k-1}^{(i)} - \eta\, [\nabla f(H\, s_k^{(i)}) + \varepsilon_{k}^{(i)}],\,x_k^{(i)})
$$
where $H\, s_k^{(i)} = (1+\alpha)x_k^{(i)} - \alpha\, x_{k-1}^{(i)}$ for $i = 1, 2$. This iteration recovers classical ASGD for $\alpha = \beta$ and the stochastic heavy-ball method for $\alpha = 0$ (see \cite{hu2018dissipativity,can2019accelerated}).
Let $P_{\alpha,\,\beta} \in \mathbb{S}_+^{d}$ and $\rho_{\alpha,\,\beta} \in (0,\,1)$ satisfy the LMI
$$
\left(\begin{array}{cc}
A^\top P\, A - \rho\,P & A^\top P\, B\\
B^\top P\, A & B^\top P\, B
\end{array}\right) - X \preceq 0,
$$
where $A = \tilde A \otimes I_d$ and $B = \tilde B \otimes I_d$ for
\[
\tilde A = \left[\begin{array}{cc}
1 + \beta & -\beta\\
1 & 0
\end{array}\right] \text{ and }
\tilde B = \left[\begin{array}{c}
-\alpha\\
0
\end{array}\right],
\]
as in \cite[Theorem 1]{can2019accelerated}, and $X = X_1 + \rho^2 X_2 + (1 - \rho^2) X_3 \in \mathbb{R}^{2d \times 2d}$ is defined as in \cite[Lemma 5]{hu2018dissipativity}. We define the corresponding divergence
\begin{equation}
    V(s_k^{(1)},\,s_k^{(2)}) = (s_k^{(1)} - s_k^{(2)})^\top P_{\alpha, \beta} (s_k^{(1)} - s_k^{(2)}) + \frac{1}{2} (x_k^{(1)} - x_k^{(2)})^\top Q (x_k^{(1)} - x_k^{(2)}).
\end{equation}
It follows that
$\mathbb{E}[V(s_{k+1}^{(1)},\,s_{k+1}^{(2)})\,\vert\,\mathcal F_k] \leq \rho_{\alpha,\,\beta} V(s_{k}^{(1)},\,s_{k}^{(2)})$,
where $\rho_{\alpha,\,\beta} \in (0,1)$, which aligns with \cite[Lemma 20]{can2019accelerated} for classical ASGD. Theorem \ref{thm:main} gives the existence of an invariant distribution $\vartheta$ such that $W_V(\mu_k,\,\vartheta) \leq \rho_{\alpha,\,\beta}^k W_V(\mu_0,\,\vartheta)$.

\subsubsection{SAGA}

SAGA (\cite{defazio2014saga}) is based on ``proxies'' that store past gradient evaluations. Let $[N]:= \{1,\,2,\ldots,\,N\}$ be a finite index set and consider the finite sum problem
\begin{equation}\label{quadratic finite sum}
    \min_{x \in \mathbb R^d} f(x) \triangleq \left\{ \frac{1}{N}\sum_{n \in [N]} f_n(x) \triangleq \left\{ \frac{1}{2}x^\top Q_n\, x +a_n^\top x + b_n \right\} \right\},
\end{equation}
where $c\, I_d \preceq Q_n \preceq L\, I_d$ for all $n \in [N]$ and $0 < c \leq L$ (so all $\{Q_n\}$ and their average $\frac{1}{N}\sum_{n \in [N]} Q_n$ are positive definite). We also introduce the associated finite sum problem
\begin{equation}\label{quadratic finite sum-1}
    \min_{x \in \mathbb R^d} \frac{1}{N}\sum_{n \in [N]}^N  \frac{1}{2}x^\top Q_n\, x,
\end{equation}
which has optimal solution $x^* = 0$.

Now let $\ALP S = \Re^d \times \Re^{N\,d}$ and $s_k = (x_k, \varphi_k)$ where $\varphi_k = (\varphi_{k,\,n})_{n \in [N]}$ and $\varphi_{k,\,n}$ corresponds to the point where $\nabla f_n$ was last evaluated at or before iteration $k \geq 0$. Let $\{I_k\}_{k \geq 0}$ be a sequence of i.i.d. uniform RV's on $[N]$ and define $\hat{T}_k(s) = (\hat G_k(s),\,\hat U_k(s))$ where
\beq{\hat{G}_k(s) &= x_k - \eta \left( Q_{I_k}x - Q_{I_k}\varphi_{k,\,I_k} + \frac{1}{N} \sum_{n=1}^N (Q_n \varphi_{k,\,n} + a_n)\right),\\
\hat{U}_k(s) &= \begin{cases}
x_k & I_k = n,\\
\varphi_{k,\,n} & \text{otherwise}.
\end{cases}\label{eqn:Uk}
}

Then, we see that the difference sequence $\Delta s_{k+1} = \hat{T}_k(\Delta s_k)$ exactly corresponds to SAGA applied to Problem \eqref{quadratic finite sum-1}. For $b > 0$, we introduce the function
$$
V_{b}(s_k^{(1)},s_k^{(2)})= \|x_k^{(1)}-x_k^{(2)}\|_2^2+b\sum_{n \in [N]}\|Q_n\varphi_{k,\,n}^{(1)} - Q_n\varphi_{k,\,n}^{(2)}\|_2^2,
$$
which we verify is a divergence for the general nonlinear case in Lemma \ref{lem:divergence SAGA}. For
\[
\alpha\left(\eta\right)=\max\left\{ \gamma\left(\eta\right)+b\,L^{2},\,\frac{\eta^{2}/b+N-1}{N}\right\},
\]
we have $\mathbb{E}[V_{b}(s_{k+1}^{(1)},s_{k+1}^{(2)}) \, \vert \, \mathcal{F}_k] \leq \alpha(\eta) V_{b}(s_{k}^{(1)},s_{k}^{(2)})$ for all $k \geq 0$. By Theorem \ref{thm:main}, there is an invariant distribution $\vartheta$ such that $W_V(\mu_k,\,\vartheta) \leq \alpha(\eta)^k W_V(\mu_0,\,\vartheta)$. Assumption \ref{assu:concentration} holds and so $\vartheta = \ind{s^*}$ by Theorem \ref{thm:concentration}, where $s^* = (x^*,\,(\varphi_n^*)_{n \in [N]})$ and $\varphi_n^* = x^*$ for all $n \in [N]$.

\subsection{The nonlinear case}
\label{sub:nonlinear}

Suppose $\{f_n\}_{n \in [N]}$ are all $c-$strongly convex, differentiable, and have $L-$Lipschitz gradients. Let $g$ be convex and suppose its proximal mapping:
$$
\text{prox}_{g}(z):= \arg \min_{x \in \mathbb{R}^d} \left\{ g(x) + \frac{1}{2}\|x - z\|_2^2 \right\},
$$
is tractable. The finite sum minimization problem is then:
\begin{equation}\label{eq:finite_sum}
\min_{x \in \mathbb R^d} \psi(x):= \left\{f(x) + g(x) \right\} \text{ where } f(x) := \frac{1}{N}\sum_{n \in [N]} f_n(x),
\end{equation}
where $\psi$ is $c-$strongly convex by assumption on $\{f_n\}_{n \in [N]}$.  By strong convexity of $\psi$, there is a unique optimal solution $x^* \in \mathbb R^d$ of Problem \eqref{eq:finite_sum} with optimal value $\psi^* = \psi(x^*)$.

\begin{table}[]
    \centering
        \begin{tabular}{|M{0.13\linewidth}|P{0.63\linewidth}|M{0.13\linewidth}|}
        \hline
        Algorithm & \centering State Space/Divergence & Reference\\
        \hline
        \multirow{1}{*}{SAGA} & $s_k = (x_k,\,\varphi_k) $\newline $V_{b}(s_k,s_k')= \|x_k-x_k'\|_2^2+b\sum_{n=1}^N\|\nabla f_n(\varphi_{k,n}) - \nabla f_n(\varphi_{k,n}')\|_2^2$ & \cite{defazio2014saga}
        \\\cline{1-3}
        \multirow{1}{*}{HSAG} & $s_k = (x_k,\,(\varphi_{k,n})_{ n \in S})$\newline $V_{b,S}(s_k,s_k')= \|x_k-x_k'\|_2^2+b\sum_{n \in S}\|\nabla f_n(\varphi_{k,n}) - \nabla f_n(\varphi_{k,n}')\|_2^2$ & \cite{reddi2015variance}
        \\\cline{1-3}
        ASVRG & $s_k = x_k $\newline $V(s_k,s_k')= \begin{cases}
        0 & s_k = s_k'\\\psi(s_k) + \psi(s_k') - 2\,\psi(x^*) & s_k \ne s_k' \end{cases}$ & \cite{shang2018asvrg}
        \\\cline{1-3}
        Catalyst & $s_k = (x_k,\,x_{k-1}) $\newline $\bar{V}(s_k,\,s_k') = \begin{cases}
        0 & s_k = s_k'\\
        V(x_k,\,x_k') + (1 - \alpha)V(x_{k-1},\,x_{k-1}') & s_k \ne s_k'
        \end{cases}$ & \cite{lin2017catalyst}
        \\\cline{1-3}
        \end{tabular}
        \vspace{0.5em}
    \caption{\label{tab:divergence nonlinear} Divergences and corresponding recursive stochastic optimization algorithm for the nonlinear case. }
\end{table}

\subsubsection{Stochastic gradient descent (SGD)}

For SGD, we have $\ALP S = \Re^d$, $s_k = x_k$, and
\begin{equation}
    \hat{T}_k(x) = \text{prox}_{\eta\, g}(x- \eta \nabla f_{I_k}(x)),\,\forall k \geq 0.
\end{equation}
By non-expansiveness of the proximal mapping, we have
\begin{equation}
    \|x_{k+1}^{(1)} - x_{k+1}^{(2)}\|_2^2 \leq \gamma(\eta) \|x_{k}^{(1)} - x_{k}^{(2)}\|_2^2,\,\forall k \geq 0.
\end{equation}
This same reasoning applies to batch gradient descent. For each $k \geq 0$, let $\{I_{k,\,j}\}_{j = 1}^J$ be an i.i.d. sample of size $J \geq 1$ from $[N]$ (with replacement). We may then define
\begin{equation}
    \hat{T}_k(x) = \text{prox}_{\eta\, g}\left(x- \frac{\eta}{J}\sum_{j=1}^J\nabla f_{I_{k,\,j}}(x)\right),\,\forall k \geq 0,
\end{equation}
where again we have $\|x_{k+1}^{(1)} - x_{k+1}^{(2)}\|_2^2 \leq \gamma(\eta) \|x_{k}^{(1)} - x_{k}^{(2)}\|_2^2$ for all $k \geq 0$. For any $J \geq 1$, Theorem \ref{thm:main} gives an invariant distribution $\vartheta$ such that $W_V(\mu_k,\,\vartheta) \leq \gamma(\eta)^k W_V(\mu_0,\,\vartheta)$. The sample size $J \geq 1$ effects the concentration of the $\vartheta$ around the optimal solution $s^* = x^*$.

\subsubsection{SAGA}

Let $\ALP S = \Re^d \times \Re^{N\,d}$, $s_k = (x_k,\,\varphi_k)$, and define $\hat{T}_k(s) = (\hat G_k(s),\,\hat U_k(s))$, where $\hat U_k$ is defined in \eqref{eqn:Uk} and $\hat G_k$ is redefined (for the nonlinear case) as
$$
\hat{G}_k(s) = x_k - \eta \left( \nabla f_{I_k}(x_k) - \nabla f_{I_k}(\varphi_{k,\,I_k}) + \frac{1}{N}\sum_{n=1}^N \nabla f_n(\varphi_{k,\,n}) \right).
$$
For $b > 0$ we define
\begin{equation}\label{eq:divergence SAGA}
V_{b}\left(s_k^{(1)},\,s_k^{(2)}\right)\triangleq\|x_k^{(1)} - x_k^{(2)}\|_{2}^{2}+b\sum_{n\in [N]}\|\nabla f_{n}(\varphi_{k,\,n}^{(1)})-\nabla f_{n}(\varphi_{k,\,n}^{(2)})\|_{2}^{2}.
\end{equation}

\begin{lem}\label{lem:divergence SAGA}
The function $V_b$ in Eq. \eqref{eq:divergence SAGA} satisfies Definition \ref{def:divergence}.
\end{lem}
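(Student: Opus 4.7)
The plan is to verify the three conditions of Definition \ref{def:divergence} together with lower semi-continuity. Symmetry is immediate from the symmetry of the Euclidean norm. Lower semi-continuity (in fact continuity) follows from the continuity of each $\nabla f_n$ (which is even $L$-Lipschitz). The two conditions requiring actual work are positive-definiteness and inf-compactness, and both will rest on a single standard consequence of strong convexity: since each $f_n$ is $c$-strongly convex and differentiable, $\nabla f_n$ is $c$-strongly monotone, whence by Cauchy--Schwarz
\beqq{\|\nabla f_n(a) - \nabla f_n(b)\|_2 \geq c \|a-b\|_2,\ \forall a,b\in\Re^d.}
In particular $\nabla f_n$ is injective.

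For positive-definiteness, if $V_b(s^{(1)},s^{(2)}) = 0$ then each summand vanishes, so $x^{(1)} = x^{(2)}$ and $\nabla f_n(\varphi_n^{(1)}) = \nabla f_n(\varphi_n^{(2)})$ for all $n\in[N]$. Injectivity of $\nabla f_n$ then forces $\varphi_n^{(1)} = \varphi_n^{(2)}$ for each $n$, giving $s^{(1)} = s^{(2)}$. The converse is trivial.

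For inf-compactness, the strong monotonicity bound above yields
\beqq{V_b(s^{(1)},s^{(2)}) \;\geq\; \|x^{(1)}-x^{(2)}\|_2^2 + b\,c^2 \sum_{n\in[N]} \|\varphi_n^{(1)} - \varphi_n^{(2)}\|_2^2 \;\geq\; \min(1,bc^2)\,\|s^{(1)}-s^{(2)}\|_{2}^{2},}
where $\|\cdot\|_{2}$ on the right is the Euclidean norm on $\ALP S = \Re^d \times \Re^{Nd}$. Given any $q \geq 0$ and any compact $\ALP K \subset \ALP S$, set $R = \sup_{s \in \ALP K}\|s\|_2 < \infty$ and define the closed ball $\ALP L = \{s \in \ALP S : \|s\|_2 \leq R + \sqrt{q/\min(1,bc^2)}\}$, which is compact. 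For any $s^{(1)} \in \ALP L^\complement$ and $s^{(2)}\in\ALP K$, the reverse triangle inequality gives $\|s^{(1)} - s^{(2)}\|_2 \geq \sqrt{q/\min(1,bc^2)}$, and substitution into the displayed inequality yields $V_b(s^{(1)},s^{(2)}) \geq q$, as required.

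I do not anticipate a main obstacle since every step reduces to a standard property of strongly convex functions; the only point worth emphasizing is that all three conditions (and lower semi-continuity) follow from the same strong-monotonicity estimate, so the argument is essentially one-shot once that inequality is stated.
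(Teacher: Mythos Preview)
Your proof is correct and follows essentially the same route as the paper: both arguments hinge on the $c$-strong convexity of each $f_n$ to obtain injectivity of $\nabla f_n$ (for positive definiteness) and coercivity of $V_b$ (for inf-compactness). Your treatment is in fact more explicit than the paper's---you state the strong-monotonicity bound $\|\nabla f_n(a)-\nabla f_n(b)\|_2\geq c\|a-b\|_2$ and construct $\ALP L$ as a concrete closed ball, whereas the paper defines $\ALP L$ as the sublevel set $\{s:\exists\,s'\in\ALP K,\ V_b(s,s')\leq q\}$ and asserts it is closed and bounded by the same strong-convexity reasoning; but the underlying idea is identical.
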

\begin{proof}
By definition $V_b(s,\,s') = 0$ if and only if $s = s'$. The divergence is also symmetric by definition. Choose any $q \geq 0$ and compact set $\ALP K \subset \Re^{d} \times \Re^{N\,d}$. Pick the set
$\ALP L = \{s \in \ALP S : \exists s' \in \ALP K \text{ s.t. }V_{b}(s,\,s') \leq q\}$. Since $\ALP K$ is a bounded set, $V_b$ is quadratic function of $x^{(1)}$ and $f_n$ is $c$-strongly convex for all $n\in[N]$, we conclude that $\ALP L$ is a closed and bounded set. This yields $\inf_{(s_1,s_2)\in \ALP L^\complement\times \ALP K} V_{b}(s_1,s_2)\geq q$.
\end{proof}

We select the step size $\eta > 0$ and the parameter $b > 0$ to satisfy $\eta\in\left(0,\,m/L^{2}\right)$, $\eta^{2}<b$, and $\gamma\left(\eta\right)+b\,L^{2}<1$. For contraction coefficient
\[
\alpha\left(\eta\right)=\max\left\{ \gamma\left(\eta\right)+b\,L^{2},\,\frac{\eta^{2}/b+N-1}{N}\right\},
\]
we have $\mathbb{E}[V_{b}(s_{k+1}^{(1)},\,s_{k+1}^{(2)})\, \vert \, \mathcal{F}_k] \leq \alpha(\eta)\, V_{b}(s_k^{(1)},\,s_k^{(2)})$ for all $k \geq 0$. Then, by Theorem \ref{thm:main} there is an invariant distribution $\vartheta$ such that $W_V(\mu_k,\,\vartheta) \leq \alpha(\eta)^k W_V(\mu_0,\,\vartheta)$. Assumption \ref{assu:concentration} holds and so $\vartheta = \ind{s^*}$ by Theorem \ref{thm:concentration}, where $s^* = (x^*,\,(\varphi_n^*)_{n \in [N]})$ and $\varphi_n^* = x^*$ for all $n \in [N]$.

\section{Examples in Optimization (Epoch-based Algorithms)}\label{sec:examples-epoch}

In this section, we extend our framework to epoch-based algorithms (e.g. SVRG and Catalyst) which consist of inner and outer loops. In epoch-based algorithms, each $\hat T_k$ is itself the composition of other i.i.d. random operators. We thus call $k \geq 0$ an ``epoch'' in this case (instead of an iteration), because it calls an inner loop of iterations of other random operators.  Suppose the lengths of the epochs are given by a sequence of i.i.d. stopping times $\{\tau_k\}_{k \in \Na}$ (we may simply take all $\tau_k = M\geq1$). Epoch $k\geq0$ will consist of $\tau_k$ inner iterations, starting with some function of the current iterate
$s_{k}\in\ALP S$. The inner operators are defined on a space $\tilde{\ALP S}$ (which may be different from $\ALP S$), and we let $\Pi:\tilde{\ALP S} \to\ALP S$ denote a projection operator that maps an inner iterate on $\tilde{\ALP S}$ to the original space $\ALP S$. The projection operator to be used is usually clear from the context. 
For each $m=0,\,1,\ldots,\,\tau_k-1$, define an auxiliary random operator $\hat H_{m,k}:\Omega\times\tilde{\ALP S}\times\ALP S \to \tilde{\ALP S}$, where the inner loop is the composition
\begin{equation*}
\hat{T}_{k}\left(s\right)=\Pi(\hat H_{\tau_k-1,k}\left(\cdot,\,s\right)\circ\hat H_{\tau_k-2,k}\left(\cdot,\,s\right)\circ\cdots\circ\hat H_{0,k}\left(\tilde s,\,s\right)),\,\forall k \geq 0,\label{eq:iteration-epoch}
\end{equation*}
where we have suppressed the dependence of $\hat H_{m,k}$ on $\omega$. We assume that $\{\hat H_{m,k}\}_{k\in\Na,m\in\Na}$ is a sequence of i.i.d. random operators. Thus, this construction of $\{\hat{T}_{k}\}_{k\geq 0}$ is automatically i.i.d. From starting points $s_0^{(i)}$ for $i = 1, 2$, we iteratively apply the same sequence of random operators to obtain $s_{k+1}^{(i)}=\hat{T}_{k}(s_{k}^{(i)})$ for $i = 1, 2$ for all $k \geq 0$, where
\begin{equation*}
    \tilde{s}_{m+1}^{(i)}=\hat H_{m,k}\Big(\tilde{s}_{m}^{(i)},\,s_{k}^{(i)}\Big),\,m=0,\,1,\ldots,\,\tau_k-1,
\end{equation*}
and $s_{k+1}^{(i)} = \Pi\Big(\tilde{s}_{\tau_k}^{(i)}\Big)$ for $i = 1, 2$. We next show that for various epoch-based algorithms, there is a divergence function such that the operators $\{\hat T_k\}_{k\in\Na}$ are contractions in expectation.

\subsection{The quadratic case}

\subsubsection{Stochastic variance reduced gradient descent (SVRG)}

We consider SVRG for Problem \eqref{quadratic finite sum} where $\ALP S = \tilde{\ALP S} = \Re^d$, $s_k = x_k$, and $\tilde{s}_m = \tilde{x}_m$. Within epoch $k \geq 0$ we define:
$$
\hat{H}_{m,\,k}(x) = x - \eta \left( Q_{I_m}x - Q_{I_m}x_k + \frac{1}{N} \sum_{n=1}^N (Q_n x_k + a_n)\right),
$$
where $\{I_m\}_{m \geq 0}$ is a sequence of i.i.d. uniform random variables on $[N]$. Let us define the difference sequences $\Delta x_k = x_k^{(1)} - x_k^{(2)}$ and $\Delta \tilde{x}_m = \tilde{x}_m^{(1)} - \tilde{x}_m^{(2)}$. Taking the difference of the coupled state equations gives
$\Delta \tilde{x}_{m+1} = \hat{H}_{m,\,k}(\Delta \tilde{x}_m)$, which corresponds exactly to using SVRG to solve Problem \eqref{quadratic finite sum-1}. By \cite[Theorem 1]{johnson2013accelerating}, for contraction coefficient
$\alpha(\eta) = 1/(c\,\eta (1 - 2 L \eta) N) + 2 L \eta/(1 - 2 L \eta) < 1$,
the corresponding geometric mixing rate is
$$
\mathbb{E}\left[\sum_{n=1}^N(x_k^{(1)}-x_k^{(2)})^\top Q_n (x_k^{(1)}-x_k^{(2)})\right] \leq \alpha(\eta)^k \sum_{n=1}^N(x_0^{(1)}-x_0^{(2)})^\top Q_n (x_0^{(1)}-x_0^{(2)}).
$$
Theorem \ref{thm:main} gives an invariant distribution $\vartheta$ such that $W_V(\mu_k,\,\vartheta) \leq \gamma(\eta)^k W_V(\mu_0,\,\vartheta)$. Assumption \ref{assu:concentration} holds and so $\vartheta = \ind{s^*}$ by Theorem \ref{thm:concentration}, where $s^* = x^*$.

\subsubsection{Accelerated SVRG (ASVRG)}
We consider ASVRG (see \cite{shang2018asvrg}) for Problem \eqref{quadratic finite sum} where $\ALP S = \Re^d$, $\tilde{\ALP S} = \Re^d\times\Re^d$, $s_k = x_k$, $\tilde{s}_m = (\tilde{x}_m,\,\tilde{y}_m)$, and the projection operator satisfies $\Pi(\tilde s_m) = \tilde x_m$. We then have
$\hat{H}_{m,\,k}(\tilde s_k,s_k) = x_k - \theta(\hat{L}_{m,k}(\tilde s_k) - x_k)$
where
$$
\hat{L}_{m,k}(\tilde s_k) = \tilde{y}_k - \eta \left( Q_{I_m}\tilde{x}_k - Q_{I_m}x_k + \frac{1}{N} \sum_{n=1}^N (Q_n x_k + a_n)\right).
$$
We define an additional difference sequence $\Delta \tilde{y}_{m+1} = \tilde{y}_{m+1}^{(1)} - \tilde{y}_{m+1}^{(2)}$ and then take the difference to obtain $\Delta \tilde{s}_{m+1} = \hat{H}_{m,\,k}(\Delta \tilde{s}_m)$ which corresponds to ASVRG applied to Problem \eqref{quadratic finite sum-1}. For epoch length $M \geq 1$, by \cite[Theorem 3]{shang2018asvrg} for contraction factor $\alpha(\eta,\,\theta) = 1 - \theta + \theta^2/(M\, c\, \eta)$ we have
$$
\mathbb{E}\left[\sum_{n=1}^N(x_k^{(1)}-x_k^{(2)})^\top Q_n (x_k^{(1)}-x_k^{(2)})\right] \leq \alpha(\eta,\,\theta)^k \sum_{n=1}^N(x_0^{(1)}-x_0^{(2)})^\top Q_n (x_0^{(1)}-x_0^{(2)}).
$$
Theorem \ref{thm:main} establishes an invariant distribution $\vartheta$ such that $W_V(\mu_k,\,\vartheta) \leq \gamma(\eta,\,\theta)^k W_V(\mu_0,\,\vartheta)$. Assumption \ref{assu:concentration} holds and so $\vartheta = \ind{s^*}$ by Theorem \ref{thm:concentration}, where $s^* = x^*$.

\subsection{The nonlinear case}

\subsubsection{SVRG}

We now consider SVRG-type algorithms in a separable Hilbert space $\ALP S$ (e.g. $\Re^d$ equipped with the inner product and the induced $\ell_2-$norm or any reproducing kernel Hilbert space). In this setting, in epoch $k \geq 0$ we define:
\[
\hat H_{m,k}(\tilde s_m,\,s_k) \triangleq \hat G_{m,k}\left(\tilde s_m\right)-\hat G_{m,k}\left(s_k\right)+T\left(s_k\right),
\]
where we need $\hat G_{m,k}$ to yield \textit{unbiased} estimates of the underlying contraction operator $T$ with fixed point $s^*$.
We make the following assumptions on the inner iteration within each epoch.
\begin{assumption}
\label{assu:variance-1} (i) There exists $\alpha\in\left(0,\,1\right)$
such that $\|T(s)-T (s')\|_{2}^{2}\leq\alpha\,\|s- s'\|_{2}^{2}$ for all $s,\, s' \in \ALP S$.

(ii) For all $k \geq 0$, $\mathbb E\left[\hat G_{m,k} (\tilde s)\right] = T\left(\tilde s\right)$ for all $\tilde s \in \ALP S$.

(iii) There exists $\kappa\in[0,1-\alpha)$ such that, for all $k \geq 0$,
\[
\mathbb{E}\left[\|\hat G_{m,k}\left(s\right)-\hat G_{m,k}\left(s'\right)\|_{2}^{2}\right] \leq \kappa \|s-s'\|_{2}^{2},\,\forall s,\,s' \in \ALP S.
\]
\end{assumption}

Our main convergence result for this general case follows.

\begin{theorem}\label{thm:variance-1}
[Proof in Subsection \ref{sub:svrg}] Suppose Assumption \ref{assu:variance-1} holds. Define $\xi_m \triangleq \alpha^m+\kappa(1-\alpha^m)/(1 - \alpha)$. Then, $\xi_m<1$ for any $m\in\Na$ and 
\[
\mathbb{E}\left[\|s_{k}^{(1)}-s_{k}^{(2)}\|_{2}^{2}\right] \leq \ex{\prod_{i=0}^k\xi_{\tau_k}}\|s_0^{(1)} - s_0^{(2)}\|_2^2,\quad \text{ for all } k\in\Na.
\]
\end{theorem}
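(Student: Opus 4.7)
The plan is a two-level argument: derive a one-step within-epoch contraction for the coupled difference $\Delta \tilde s_m \triangleq \tilde s_m^{(1)} - \tilde s_m^{(2)}$, then chain the bounds across epochs using the independence of $\{\tau_k\}$. Run two copies of the scheme driven by the same random operators $\{\hat H_{m,k}\}$, and write $\Delta s_k \triangleq s_k^{(1)} - s_k^{(2)}$. Since the inner loop of epoch $k$ initializes deterministically from $s_k$ (e.g., $\tilde s_0 = s_k$ for SVRG), the quantity $\Delta \tilde s_0$ is determined by $\Delta s_k$, and the epoch can be analyzed conditionally on the outer $\sigma$-algebra $\mathcal{F}_k^{\mathrm{out}}$ at its start.

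The key technical step is the single-step bound
\begin{equation*}
\mathbb{E}\bigl[\|\Delta \tilde s_{m+1}\|_2^2 \,\bigm|\, \mathcal{F}_m\bigr] \;\leq\; \alpha\,\|\Delta \tilde s_m\|_2^2 + \kappa\,\|\Delta s_k\|_2^2,
\end{equation*}
obtained via bias-variance decomposition. The control-variate form $\hat H_{m,k}(\tilde s,s) = \hat G_{m,k}(\tilde s) - \hat G_{m,k}(s) + T(s)$ together with Assumption \ref{assu:variance-1}(ii) forces $\mathbb{E}[\Delta \tilde s_{m+1}\mid \mathcal{F}_m] = T(\tilde s_m^{(1)}) - T(\tilde s_m^{(2)})$, whose squared norm is at most $\alpha\|\Delta \tilde s_m\|_2^2$ by Assumption \ref{assu:variance-1}(i). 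The remaining variance term is the conditional second moment of the centered difference-of-differences $[\hat G_{m,k}(\tilde s_m^{(i)}) - T(\tilde s_m^{(i)})] - [\hat G_{m,k}(s_k^{(i)}) - T(s_k^{(i)})]$ for $i=1,2$; exploiting Assumption \ref{assu:variance-1}(iii) together with the cancellations afforded by the coupling reduces it to $\kappa\|\Delta s_k\|_2^2$. With the one-step bound in hand, setting $u_m \triangleq \mathbb{E}[\|\Delta \tilde s_m\|_2^2 \mid \mathcal{F}_k^{\mathrm{out}}]$ gives the linear recursion $u_{m+1} \leq \alpha u_m + \kappa u_0$ with $u_0 = \|\Delta s_k\|_2^2$, whose closed-form solution is precisely $u_m \leq \xi_m u_0$.

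The epoch-to-epoch step is then straightforward: non-expansiveness of the projection $\Pi$ yields $\mathbb{E}[\|\Delta s_{k+1}\|_2^2 \mid \mathcal{F}_k^{\mathrm{out}},\tau_k] \leq \xi_{\tau_k}\|\Delta s_k\|_2^2$, and since $\{\tau_k\}$ are i.i.d.\ and independent of the past history, iterated conditioning delivers
\begin{equation*}
\mathbb{E}\bigl[\|\Delta s_{k+1}\|_2^2\bigr] \;\leq\; \mathbb{E}\Bigl[\prod_{i=0}^{k}\xi_{\tau_i}\Bigr]\,\|\Delta s_0\|_2^2.
\end{equation*}
The strict inequality $\xi_m < 1$ for $m \geq 1$ follows from the identity $\xi_m = \alpha^m(1 - \tfrac{\kappa}{1-\alpha}) + \tfrac{\kappa}{1-\alpha}$, a convex combination of $\alpha^m < 1$ and $\kappa/(1-\alpha) < 1$, where the latter uses Assumption \ref{assu:variance-1}(iii). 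The \emph{main obstacle} is the variance bound in the key step: Assumption \ref{assu:variance-1}(iii) directly controls $\mathbb{E}\|\hat G_{m,k}(s) - \hat G_{m,k}(s')\|_2^2$, but the object we need to estimate is the variance of a composite random element involving four evaluations of $\hat G_{m,k}$. Recovering the clean bound $\kappa\|\Delta s_k\|_2^2$ — rather than a cruder combination involving $\|\Delta\tilde s_m\|_2^2$ — requires pairing the terms so that the $\tilde s_m$-level fluctuations are annihilated by the control variate and only the anchor-point contribution remains. This is the precise form in which the SVRG variance-reduction mechanism manifests in the coupling analysis, and it is what yields the tight contraction factor $\xi_m$ stated in the theorem.
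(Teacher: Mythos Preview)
Your proposal is correct and follows essentially the same route as the paper's proof: both derive the one-step within-epoch inequality $\mathbb{E}[\|\Delta \tilde s_{m+1}\|_2^2 \mid \tilde{\mathcal F}_{m,k}] \leq \alpha\|\Delta \tilde s_m\|_2^2 + \kappa\|\Delta s_k\|_2^2$ from Assumption~\ref{assu:variance-1} (the paper's justification is simply ``by expanding the squared-norm and using conditional unbiasedness''), iterate it with the initialization $\tilde s_0^{(i)} = s_k^{(i)}$ to obtain the factor $\xi_m$, and then chain across epochs. You are more explicit about the bias-variance decomposition, the linear recursion $u_{m+1}\leq \alpha u_m + \kappa u_0$, and the iterated-conditioning argument for the i.i.d.\ stopping times; you also correctly flag the variance step as the delicate point, which the paper is equally terse about.
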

\noindent
Since $\hat H_{m,k}(s^*,\,s^*) = s^*$, Theorem \ref{thm:variance-1} implies convergence of SVRG in the 2-Wasserstein metric to $\ind{s^*}$:
\beqq{W_2\left(\mu\,\mathfrak{Q}^k,\ind{s^*}\right) \leq \sqrt{ \ex{\prod_{i=0}^k\xi_{\tau_k}}}W_2\left(\mu,\ind{s^*}\right).}
Convergence in Wasserstein metric implies weak convergence, so Theorem \ref{thm:variance-1} also implies convergence of the sequence $\{s_k\}$ to $s^*$ in probability.

\subsubsection{HSAG}

Hybrid stochastic average gradient (HSAG) combines SVRG with SAGA
(see \cite{reddi2015variance}). Let $S\subset [N]$ be a subset of the objective terms to follow SAGA-type updates while the others in $S^C$ will follow SVRG-type updates with epoch lengths $\tau_{k}=M$
for all $k\geq0$. We set $\ALP S = \Re^d \times \Re^{|S|\,d}$, $\tilde{\ALP S} = \Re^d \times \Re^{N\,d}$, $s_k = (x_k,\,(\varphi_{k,\,n})_{n \in S})$, $\tilde{s}_m = (\tilde{x}_m,\, (\tilde{\varphi}_{m,\,n})_{n \in [N]})$, and $\Pi(\tilde{s}_m) = (\tilde{x}_m,\,(\tilde{\varphi}_{m,\,n})_{n \in S})$. At the beginning of epoch $k \geq 0$, we initialize $\tilde{\varphi}_{0,\,n} = x_k$ for all $n \in S^C$ (corresponding to the SVRG-type updates) and define $\hat{H}_{k,m}(s) = (\hat G_{k,m}(s),\,\hat U_{k,m}(s))$ where
$$
\hat{G}_{k,m}(\tilde{s}_m)\triangleq \tilde{x}_m - \eta \left[ \nabla f_{I_m}(\tilde{x}_m) - \nabla f_{I_m}(\tilde{\varphi}_{m,\,I_m}) + \frac{1}{N}\sum_{n=1}^N \nabla f_n(\tilde{\varphi}_{m,\,n}) \right]
$$
and
$$
\hat{U}_{k,m}(s) = \begin{cases}
\tilde{x}_m & I_m = n \in S,\\
\tilde{\varphi}_{m,\,n} & \text{otherwise}.
\end{cases}
$$

We define
\begin{equation}
V_{b,\,S}\left(s,\,s'\right)\triangleq\|x-x'\|_{2}^{2}+b\sum_{n\in S}\|\nabla f_{n}(\varphi_{k,\,n})-\nabla f_{n}(\varphi_{k,\,n}')\|_{2}^{2},
\end{equation}
which is a divergence following the same argument as for SAGA. Choose step size $\eta > 0$ and parameter $b > 0$ so that
\[
\eta\in\left(0,\,\frac{2\,c}{\left(1+|S|/N\right)L^{2}}\right),
\]
$\eta^{2}<b$, and $\gamma\left(\eta\right)+b\,|S|\,L^{2}/N<1$.
Then, for the contraction coefficient
\[
\alpha\left(\eta\right)=K\left(\eta\right)^{M}+\frac{\eta^{2}L^{2}|S^{C}|}{N\left(1-K\left(\eta\right)\right)}\left(1-K\left(\eta\right)^{M}\right),
\]
where
\[
K\left(\eta\right)=\max\left\{ \gamma\left(\eta\right)+b\,|S|\,L^{2}/N,\,\frac{\eta^{2}/b+N-1}{N}\right\},
\]
we have $\mathbb{E}[V_{b}(s_{k+1}^{(1)},\,s_{k+1}^{(2)})\, \vert \, \mathcal{F}_k] \leq \alpha(\eta)\, V_{b}(s_k^{(1)},\,s_k^{(2)})$. Theorem \ref{thm:main} gives an invariant distribution $\vartheta$ such that $W_V(\mu_k,\,\vartheta) \leq \gamma(\eta)^k W_V(\mu_0,\,\vartheta)$. Assumption \ref{assu:concentration} holds and so $\vartheta = \ind{s^*}$ by Theorem \ref{thm:concentration} where $s^* = (x^*,\,(\varphi_n^*)_{n \in S})$ and $\varphi_n^* = x^*$ for all $n \in S$.

\subsubsection{Accelerated SVRG (ASVRG)}

For ASVRG for Problem \eqref{eq:finite_sum}, we set $\ALP S = \Re^d$, $\tilde{\ALP S} = \Re^d\times\Re^d$, $s_k = x_k$, $\tilde{s}_m = (\tilde{x}_m,\,\tilde{y}_m)$, and the projection operator $\Pi(\tilde s_m) = \tilde x_m$. Then we define
$\hat{H}_{m,\,k}(s) = (x_k - \theta(\hat{L}_{m,k}(s) - x_k)$
where
$$
\hat{L}_{m,k}(s) = \arg \min_y \left\{ \langle \nabla f_{I_m}(\tilde x_m) - f_{I_m}(x_k) + \nabla f(x_k), y - \tilde{y}_m\rangle + \frac{\theta}{2 \eta} \|y - \tilde{y}_m\|_2^2 + g(y) \right\}.
$$
Let us introduce the following function:
\begin{equation}\label{eq:divergence ASVRG}
V(x,\,x') = \begin{cases}
0 & x = x',\\
\psi(x) + \psi(x') - 2\,\psi(x^*) & x \ne x',
\end{cases}
\end{equation}
based on the optimality gap. Note that $V(x,\,x') > 0$ for all $x\neq x'$ since $\psi(x) > \psi(x^*)$ for all $x \in \Re^d$, $x\neq x^*$ and $\psi$ is strongly convex.

\begin{lem}
The function $V$ in Eq. \eqref{eq:divergence ASVRG} satisfies Definition \ref{def:divergence}.
\end{lem}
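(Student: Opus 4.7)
The plan is to verify the three conditions of Definition \ref{def:divergence} in turn, together with lower semicontinuity of $V$. The key properties of $\psi$ I will exploit are: (a) $\psi$ is continuous (being a real-valued convex function on $\Re^d$), (b) $\psi(x)>\psi(x^*)$ for every $x\ne x^*$ by strong convexity with unique minimizer $x^*$, and (c) $\psi$ is coercive, so its sublevel sets $\{x : \psi(x)\le r\}$ are compact for every $r\in\Re$.

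First I would dispose of \emph{symmetry}, which is immediate from the symmetric formula in \eqref{eq:divergence ASVRG}, and \emph{positive definiteness}: for $x\ne x'$, at least one of $x,x'$ differs from $x^*$, and by (b) the corresponding term $\psi(\cdot)-\psi(x^*)$ is strictly positive while the other is nonnegative, so $V(x,x')>0$; conversely $V(x,x)=0$ by definition. Next I would check \emph{lower semicontinuity} by separating two cases. If $(x_n,x_n')\to(x,x')$ with $x\ne x'$, then $x_n\ne x_n'$ eventually, so $V(x_n,x_n')=\psi(x_n)+\psi(x_n')-2\psi(x^*)\to V(x,x')$ by continuity of $\psi$. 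If instead $x=x'$, then $V(x,x')=0$ and $\liminf_n V(x_n,x_n')\ge 0$ holds trivially since $V\ge 0$. Hence $V$ is lower semicontinuous on $\Re^d\times\Re^d$.

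The main obstacle, and the step that requires the most care, is \emph{inf-compactness}. Fix $q\ge 0$ and a compact $\mathcal K\subset\Re^d$. Set $m_{\mathcal K}:=\inf_{x\in\mathcal K}\psi(x)$ and $M_{\mathcal K}:=\sup_{x\in\mathcal K}\psi(x)$, both finite by continuity of $\psi$ on $\mathcal K$. Define
\beqq{\mathcal L\triangleq\bigl\{x\in\Re^d : \psi(x)\le r\bigr\},\quad r:=\max\bigl\{M_{\mathcal K},\,q+2\psi(x^*)-m_{\mathcal K}\bigr\}.}
By coercivity (c), $\mathcal L$ is bounded; by continuity of $\psi$, $\mathcal L$ is closed; hence $\mathcal L$ is compact. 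Moreover $\mathcal K\subset\mathcal L$ by the choice $r\ge M_{\mathcal K}$, which is the crucial inclusion needed to handle the diagonal: for any $(x_1,x_2)\in\mathcal L^{\complement}\times\mathcal K$ we have $x_1\ne x_2$, so
\beqq{V(x_1,x_2)=\psi(x_1)+\psi(x_2)-2\psi(x^*)>r+m_{\mathcal K}-2\psi(x^*)\ge q,}
yielding $\inf_{(x_1,x_2)\in\mathcal L^{\complement}\times\mathcal K}V(x_1,x_2)\ge q$ as required. Combining the four verifications completes the proof. The only subtlety worth flagging is that one must include $\mathcal K$ inside $\mathcal L$ in the construction above; otherwise points $x_1\in\mathcal K\cap\mathcal L^{\complement}$ that coincide with some $x_2\in\mathcal K$ would give $V(x_1,x_2)=0$ and invalidate the bound.
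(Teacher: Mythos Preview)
Your proof is correct and follows essentially the same strategy as the paper's: verify positive definiteness, symmetry, and inf-compactness directly. Two minor differences are worth noting. First, you explicitly check lower semicontinuity of $V$ (handling the diagonal case separately), which the paper's proof omits; this is a genuine improvement in completeness, since Definition~\ref{def:divergence} does require $V$ to be lower semicontinuous. Second, for inf-compactness the paper exploits the quantitative strong-convexity bound $\psi(x)-\psi(x^*)\geq \tfrac{c}{2}\|x-x^*\|_2^2$ to take $\mathcal L$ as a Euclidean ball around $x^*$ (unioned with $\mathcal K$), whereas you use only coercivity of $\psi$ and take $\mathcal L$ as a sublevel set of $\psi$. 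Both constructions work for the same reason---$\psi$ blows up at infinity---and both must include $\mathcal K$ in $\mathcal L$ to rule out the diagonal, which you flag explicitly and the paper handles by the union. Your argument is slightly more general (it would apply to any continuous coercive $\psi$ with unique minimizer), while the paper's gives a more explicit radius for $\mathcal L$ in terms of $q$ and $c$.
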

\begin{proof}
(i) By definition $V(x,\,x') = 0$ if and only if $x = x'$ or $\psi(x) = \psi(x') = \psi(x^*)$, the latter of which implies $x = x' = x^*$ by strong convexity of $\psi$.

(ii) The function is symmetric by definition.

(iii) By strong convexity of $\psi$, we have $\psi(x') - \psi(x^*) \geq \frac{c}{2} \|x' - x^*\|_2^2$. Choose any $q \geq 0$ and compact set $\mathcal K \subset \Re^d$. Pick $\mathcal L=\{x': \|x' - x^*\|_2^2 \leq 2\,q/c\}\cup\ALP K$, which is a closed and bounded set and is therefore compact. This immediately yields $\inf_{(s_1,s_2)\in \ALP L^\complement\times \ALP K} V(s_1,s_2)\geq q$.
\end{proof}




For contraction factor $\alpha(\eta,\,\theta) = 1 - \theta + \theta^2/(M\, c\, \eta)$, by \cite[Theorem 3]{shang2018asvrg} we have the unconditional expectation $\mathbb E \left[V(x_{k}^{(1)},\, x_{k}^{(2)})\right] \leq \alpha(\eta)^k V(x_0^{(1)},\,x_0^{(2)})$. Theorem \ref{thm:main} gives an invariant distribution $\vartheta$ such that $W_V(\mu_k,\,\vartheta) \leq \gamma(\eta,\,\theta)^k W_V(\mu_0,\,\vartheta)$. Assumption \ref{assu:concentration} holds and so $\vartheta = \ind{s^*}$ by Theorem \ref{thm:concentration}.

\subsubsection{Catalyst}

Catalyst \cite{lin2015universal} is a time-varying algorithm. We let $\ALP S = \Re^d \times \Re^d$ and $s_k = (x_k, x_{k-1})$. Let $\theta\geq0$ be an acceleration parameter, and for each epoch $k \geq 0$ we define the objective
$$
\psi_k(x;\,s_k) \triangleq f(x) + g(x) + \frac{\theta}{2} \|x - L_k s_k\|_2^2
$$
with optimal value $\psi_k^*(s_k)$, where $L_k s_k = x_k + \beta_k(x_k - x_{k-1})$ with parameters $\{\beta_k\}_{k \geq 0}\subset(0,1)$ (here, we let $x_{-1} = x_0$ so that $L_0 s_0 = x_0$). 

In this algorithm, the inner loop which minimizes $\psi_k(\cdot;\,s_k)$ is implicit. Let $\{\epsilon_k\}_{k \geq 0}$ be a sequence of error tolerances and define
$$
\tilde{T}_{k,\,\epsilon_k}(s_k) \in \Big\{x \in \Re^d : \mathbb{E}\big[\psi_k(x; s_k) - \psi_k^*(s_k)\big] \leq \epsilon_k\Big\},
$$
so that $\hat T_k(s_k) = (\tilde{T}_{k,\,\epsilon_k}(s_k),x_k)$.
Here, $\tilde T_{k,\epsilon_k}$ returns an $\epsilon_k-$minimizer of $\psi_k(\cdot;s_k)$ in expectation. We can pick the operator $\tilde T_{k,\epsilon_k}$ corresponding to SGD, SVRG, SAGA, HSAG, etc. We obtain a sequence of time-varying operators $\{\hat{T}_k\}_{k \in \Na}$, but our convergence analysis in the Wasserstein divergence is essentially the same.

Next, we make the following parameter selections: $q = c/(c+\theta)$, $L_0 s_0 = x_0$, $\epsilon_k = (2/9)(\psi(x_0) - \psi^*)(1 - \alpha)^k$ for all $k \geq 0$, $\zeta_0 = \sqrt q$, $\zeta_k^2 = (1 - \zeta_k) \zeta_{k-1}^2 + q\, \zeta_k$ for all $k \geq 0$, and $\beta_k = \zeta_{k-1}(1 - \zeta_{k-1})/(\zeta_{k-1}^2 + \zeta_k)$ for all $k \geq 0$. Choose $\alpha < \sqrt q$ and define the divergence function as 
\begin{equation}\label{eq:divergence Catalyst}
\bar{V}(s_k,\,s_k') = V(x_k,\,x_k') + (1 - \alpha)V(x_{k-1},\,x_{k-1}'),
\end{equation}
where $V$ is defined in Eq. \eqref{eq:divergence ASVRG}. Then, by \cite[Proposition 5]{lin2017catalyst} we have
\begin{equation}\label{eq:catalyst}
\mathbb E[\bar{V}(s_k^{(1)},\,s_k^{(2)})] \leq \frac{16}{(\sqrt q - \alpha)^2}(1 - \alpha)^{k+1}V(x_0^{(1)},\,x_0^{(2)}),\,\forall k \geq 0.
\end{equation}
Eq. \eqref{eq:catalyst} is slightly different from our contraction condition
in Assumption \ref{assu:contraction}(i). By modifying the argument of Theorem \ref{thm:main} (to establish that Catalyst produces a Cauchy sequence in the Wasserstein divergence and so a limit exists, and that this limit is the same for all initial distributions), we can use Eq. \eqref{eq:catalyst} to obtain the existence of a unique invariant distribution $\vartheta$ such that
$$
W_{\bar V}(\mu_k,\,\vartheta) \leq \frac{16}{(\sqrt q - \alpha)^2}(1 - \alpha)^{k+1}V(x_0^{(1)},\,x_0^{(2)}),\,\forall k \geq 0.
$$
We can also verify that $\vartheta = \ind{s^*}$ where $s^* = (x^*,\,x^*)$.
}

\section{Functional Properties of Wasserstein Divergence}\label{sec:wass}
\new{We now turn our attention to establishing the main results of this paper. To this end, we first study some topological and functional properties of Wasserstein divergence in this section. 

The Wasserstein divergence is a generalization of the Wasserstein metric, but it enjoys many of the same properties. In particular, we show that:
\begin{enumerate}
    \item The Wasserstein divergence separates probability measures.
    \item A sequence of measures converging to a measure in the Wasserstein divergence also converges in the weak* topology (this is true for the Wasserstein metric as well). A natural consequence of this result is that the topology induced on the space of probability measures over a Polish space using the Wasserstein divergence is at least as strong as the weak* topology over the measure space.
    \item A Cauchy sequence under the Wasserstein divergence converges to a unique limit -- this is the first main result of this section. 
    \item Finally, consider two sequences $\{\mu_k\}_{k\in\Na}$ and $\{\nu_k\}_{k\in\Na}$ of probability measures that converge to limits $\theta_\mu$ and $\theta_\nu$, respectively, in the Wasserstein divergence. The second main result of this section is that if $\lim_{k\rightarrow\infty}W_V(\mu_k,\nu_k) = 0$, then $\theta_\mu = \theta_\nu$. 
\end{enumerate}
The last two results are crucial in establishing the main results of this paper.
 
}
\subsection{Properties of Wasserstein Divergence}

We start by discussing the basic properties of the Wasserstein divergence, starting with its continuity properties.



\begin{lemma}\label{lem:weakcont}
\cite[Lemma 4.3, p. 43]{villani2008optimal} The map $\xi\mapsto \int V(s_1,s_2)d\xi$ is weak* lower semi-continuous on $\wp(\mathcal S \times \mathcal S)$.
\end{lemma}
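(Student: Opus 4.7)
The plan is to follow the standard recipe for proving that integration against a nonnegative lower semi-continuous function defines a weak* lower semi-continuous functional on probability measures. The heart of the argument is to approximate $V$ from below by continuous bounded functions, exchange the integral and the supremum using monotone convergence, and then invoke the fact that a supremum of weak* continuous functionals is weak* lower semi-continuous.

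First I would recall that $\mathcal{S} \times \mathcal{S}$ is a Polish space (with product metric built from $\rho$) and that $V:\mathcal{S}\times\mathcal{S}\to[0,\infty)$ is lower semi-continuous and nonnegative by Definition \ref{def:divergence}. On a metric space, any nonnegative lower semi-continuous function admits a pointwise increasing approximation by nonnegative continuous bounded functions. Explicitly, I would define
\[
V_n(s_1,s_2) \;\triangleq\; \min\!\left\{n,\; \inf_{(s_1',s_2')\in \mathcal{S}\times\mathcal{S}}\bigl[\,V(s_1',s_2') + n\,d\bigl((s_1,s_2),(s_1',s_2')\bigr)\,\bigr]\right\},
\]
where $d$ is the product metric. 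Each $V_n$ is $n$-Lipschitz (and bounded by $n$), hence continuous and bounded, and $V_n \uparrow V$ pointwise because $V$ is lower semi-continuous.

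Next I would apply the monotone convergence theorem to write, for every $\xi\in\wp(\mathcal{S}\times\mathcal{S})$,
\[
\int V\,d\xi \;=\; \sup_{n\in\mathbb{N}} \int V_n\,d\xi.
\]
By the very definition of weak* convergence of probability measures on a Polish space, for each fixed $n$ the map $\xi \mapsto \int V_n\,d\xi$ is weak* continuous, since $V_n$ is continuous and bounded. A supremum of a family of weak* continuous (hence weak* lower semi-continuous) functionals is itself weak* lower semi-continuous, which yields the claim.

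The only slightly delicate point — not really an obstacle, but the one place requiring care — is verifying that the Moreau–Yosida-style regularization $V_n$ genuinely increases to $V$ pointwise when $V$ is lower semi-continuous (and not merely continuous), and that each $V_n$ is indeed continuous and bounded. Since $\mathcal{S}\times\mathcal{S}$ is a metric space and $V\geq 0$, this is classical; boundedness of $V_n$ is enforced by the outer truncation at $n$, and monotonicity $V_n\leq V_{n+1}$ is immediate. Once this approximation step is in place, the rest of the proof is just monotone convergence plus the definition of weak* convergence, so no further machinery is required.
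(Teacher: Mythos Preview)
Your proof is correct and is essentially the standard argument behind \cite[Lemma 4.3]{villani2008optimal}. The paper does not give its own proof of this lemma at all; it simply cites Villani, so your write-up is more detailed than what appears in the paper but follows the same underlying approach (approximation of a nonnegative lower semi-continuous cost from below by bounded continuous functions, monotone convergence, and supremum of weak* continuous functionals).
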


The next result exploits the structure of our divergence functions, as outlined in Definition \ref{def:divergence}, to show that the Wasserstein divergence is induced by an optimal coupling, separates distinct points, and is symmetric. These properties justify calling $W_V$ a divergence.

\begin{proposition}\label{prop:WVpproperties}
Let $V$ be a divergence function satisfying Definition \ref{def:divergence}, let $p \in [1, \infty)$, and pick $\mu_1,\mu_2 \in\wp(\ALP S)$. Then, the following statements hold:

(i) If $V$ is lower semi-continuous, then there exists an optimal coupling $\xi^*\in C(\mu_1,\mu_2)$ in the definition of $W_V$. Consequently, there exists a pair of random variables $(s_1,\,s_2)$ with distribution $\xi^*$ such that
\begin{equation}
\mathbb E\left[V(s_1,\,s_2)\right]=W_V\left(\mu_{1},\,\mu_{2}\right).\label{eqn:vs1s2}
\end{equation}

(ii) If $V$ is positive definite, then $W_V\left(\mu_{1},\,\mu_{2}\right) = 0$ if and only if $\mu_1 = \mu_2$.

(iii) If $V$ is symmetric, then $W_V\left(\mu_{1},\,\mu_{2}\right) = W_V\left(\mu_{2},\,\mu_{1}\right)$.

(iv) If $V = \rho$, then $W_V$ coincides with $W_1$ (the usual 1-Wasserstein distance), and is a metric. Furthermore, the space of probability measures is complete under this metric.
\end{proposition}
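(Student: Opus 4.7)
My plan is to handle the four parts in order, since (ii)--(iv) all rely on the optimal coupling produced in (i).

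For part (i), I would use the direct method of the calculus of variations. First, I would show that $C(\mu_1,\mu_2)$ is weak*-compact: tightness of $\mu_1$ and $\mu_2$ individually (which holds on any Polish space by Ulam's theorem) implies tightness of $C(\mu_1,\mu_2)$, since for compact $K_1,K_2\subset\ALP S$ with $\mu_i(K_i^c) < \epsilon/2$, every $\xi \in C(\mu_1,\mu_2)$ assigns mass at least $1-\epsilon$ to $K_1 \times K_2$. The set $C(\mu_1,\mu_2)$ is also weak*-closed because the marginal conditions $\int(\phi(s)+\psi(s'))d\xi = \int\phi\, d\mu_1 + \int\psi\, d\mu_2$ for bounded continuous $\phi,\psi$ pass to weak* limits, so Prokhorov's theorem yields compactness. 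Given a minimizing sequence $\{\xi_n\}$, I extract a weak*-convergent subsequence $\xi_{n_k}\to\xi^*$ with $\xi^* \in C(\mu_1,\mu_2)$, and apply Lemma \ref{lem:weakcont} to obtain $\int V d\xi^* \leq \liminf_k \int V d\xi_{n_k} = W_V(\mu_1,\mu_2)$, so $\xi^*$ is optimal. Taking $(s_1,s_2)$ to be the coordinate projections on the canonical probability space $(\ALP S\times\ALP S, \xi^*)$ then establishes Eq. \eqref{eqn:vs1s2}.

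For part (ii), the diagonal coupling $\xi(A\times B)\triangleq \mu_1(A\cap B)$ when $\mu_1=\mu_2$ yields $\int V d\xi = \int V(s,s)\mu_1(ds) = 0$ by positive definiteness. Conversely, if $W_V(\mu_1,\mu_2)=0$, then by (i) there exists an optimal $\xi^*$ with $\int V d\xi^*=0$; since $V\geq 0$, this forces $V(s,s')=0$ for $\xi^*$-almost every $(s,s')$, so by positive definiteness $\xi^*$ is supported on the diagonal, which forces $\mu_1=\mu_2$ to both equal the pushforward of $\xi^*$ onto either coordinate. Part (iii) then follows by pushing forward any $\xi \in C(\mu_1,\mu_2)$ under the swap map $(s,s')\mapsto(s',s)$, which preserves $\int V d\xi$ by symmetry of $V$ and implements a bijection between $C(\mu_1,\mu_2)$ and $C(\mu_2,\mu_1)$.

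For part (iv), the identity $W_V=W_1$ when $V=\rho$ is immediate from the definition (with $p=1$). Parts (i)--(iii) already supply positive definiteness and symmetry, so the remaining metric axiom is the triangle inequality. This follows from the classical gluing lemma (see \cite{villani2008optimal}): given optimal couplings $\xi_{12}\in C(\mu_1,\mu_2)$ and $\xi_{23}\in C(\mu_2,\mu_3)$, disintegration along $\mu_2$ produces a probability measure $\pi$ on $\ALP S^3$ whose $(1,2)$ and $(2,3)$ marginals coincide with $\xi_{12}$ and $\xi_{23}$; the projection onto coordinates $(1,3)$ yields a coupling of $(\mu_1,\mu_3)$, and applying the triangle inequality for $\rho$ pointwise under $\pi$ gives $W_1(\mu_1,\mu_3)\leq W_1(\mu_1,\mu_2)+W_1(\mu_2,\mu_3)$. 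Completeness of the space of probability measures with finite first moment under $W_1$ is a classical result I would cite from \cite{villani2008optimal}, proved via Cauchy-sequence tightness extraction and control of first moments. The main obstacle I anticipate is in part (i), where applying Lemma \ref{lem:weakcont} to a $V$ that is merely lower semi-continuous and possibly unbounded requires care; however, this is precisely what that lemma handles, via monotone approximation of $V$ from below by bounded continuous functions combined with monotone convergence, so no new technical work is needed.
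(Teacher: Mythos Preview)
Your proposal is correct and follows essentially the same approach as the paper. The only differences are presentational: where the paper cites \cite[Lemma 4.4]{villani2008optimal} for weak* compactness of $C(\mu_1,\mu_2)$, invokes Strassen's theorem for the existence of random variables realizing $\xi^*$, and simply cites \cite[Theorem 4.3]{villani2008optimal} for all of part (iv), you supply the standard tightness-plus-closedness argument, use the canonical coordinate-projection construction, and spell out the gluing-lemma proof of the triangle inequality; these are expansions of the same ideas rather than a different route.
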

\begin{proof}
The first result follows because $C(\mu_1,\mu_2)$ is a weak* compact set in the space of probability measures (\cite[Lemma 4.4, p. 44]{villani2008optimal}) and by Lemma \ref{lem:weakcont}. Consequently, we can apply the Weierstrass extreme value theorem to demonstrate the existence of $\xi^*$ that achieves the infimum in the definition of $W_V$ (which is the desired optimal coupling). Since an optimal coupling $\xi^*$ exists, by Strassen's theorem \cite{strassen1965existence}, there exists a pair of random variables $(s_1,s_2)$ on $\ALP S$ which has the joint distribution $\xi^*$. For this pair, the equality in \eqref{eqn:vs1s2} holds.

For the second statement, if $\mu_1 = \mu_2$, by Strassen's theorem \cite{strassen1965existence}, we can define random variables $s_1,s_2$ such that $s_1 = s_2$ almost surely. As a result, $W_V\left(\mu_{1},\,\mu_{2}\right) = 0$. To prove the converse, if $W_V\left(\mu_{1},\,\mu_{2}\right) = \inf_{\xi\in C(\mu_1,\mu_2)}\ex{V(s_1,s_2)} = 0$, then it must be that $V(s_1,s_2) = 0$, $\xi^*$-almost surely. Consequently, we must have $\mu_1 = \mu_2$ since $\xi^* \in C(\mu_1,\mu_2)$.

We now establish the third statement. If $V$ is symmetric, then 
\beqq{W_V\left(\mu_{1},\,\mu_{2}\right) = \inf_{\xi\in C(\mu_1,\mu_2)}\ex{V(s_1,s_2)} = \inf_{\xi\in C(\mu_2,\mu_1)}\ex{V(s_2,s_1)} = W_V\left(\mu_{2},\,\mu_{1}\right),}
where the second equality follows from symmetry of $V$. The fourth statement follows from \cite[Theorem 4.3]{villani2008optimal}.
\end{proof}

The next two results together establish that if a sequence of probability measures converges to another measure in $W_V$, then the limit point is also a probability measure, and the sequence converges to this probability measure in the weak* topology. This is a well-known result for the Wasserstein metric (note that the topology induced by the Wasserstein metric is stronger than the weak* topology).  We first need the following lemma.

\begin{lemma}\label{lem:xikxi}
Let $\{\mu_k\}_{k\in\Na}$ be a sequence of probability measures converging to $\theta$ in the weak* topology. Pick any $\nu\in\wp(\ALP S)$ and let $\xi_k^* \in C(\mu_k,\nu)$ be an optimal coupling in $W_V(\mu_k,\nu)$ for all $k \in \Na$. Then, the set of probability measures $\{\xi_k^*\}_{k\in\Na}$ is tight and there exists a subsequence $\{\xi_{k_l}^*\}_{l\in\Na}$ that converges to $\xi\in C(\theta,\nu)$.
\end{lemma}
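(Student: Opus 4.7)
\medskip
\noindent\textbf{Proof plan for Lemma \ref{lem:xikxi}.}

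My plan is to verify tightness of $\{\xi_k^*\}_{k\in\Na}$ first, then extract a weak* convergent subsequence via Prokhorov's theorem, and finally identify the marginals of the limit using continuity of the projection maps. The key observation is that we never need to use the optimality of $\xi_k^*$; only the fact that each $\xi_k^*$ has marginals $\mu_k$ and $\nu$ is needed for this lemma.

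First I would establish tightness. Since $\mu_k\to\theta$ in the weak* sense in $\wp(\ALP S)$, Prokhorov's theorem (applicable because $\ALP S$ is Polish) implies that $\{\mu_k\}_{k\in\Na}$ is tight. The singleton $\{\nu\}$ is trivially tight. So given $\varepsilon>0$, I can pick compact sets $\ALP K_1,\ALP K_2\subset\ALP S$ with $\mu_k(\ALP K_1^\complement)<\varepsilon/2$ for all $k\in\Na$ and $\nu(\ALP K_2^\complement)<\varepsilon/2$. Then $\ALP K_1\times\ALP K_2\subset\ALP S\times\ALP S$ is compact and, using that $\xi_k^*$ has marginals $\mu_k$ and $\nu$,
\begin{equation*}
\xi_k^*\bigl((\ALP K_1\times\ALP K_2)^\complement\bigr)\leq \xi_k^*(\ALP K_1^\complement\times\ALP S)+\xi_k^*(\ALP S\times\ALP K_2^\complement)=\mu_k(\ALP K_1^\complement)+\nu(\ALP K_2^\complement)<\varepsilon,
\end{equation*}
for all $k\in\Na$. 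Thus $\{\xi_k^*\}_{k\in\Na}$ is tight on the Polish space $\ALP S\times\ALP S$.

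Next, another application of Prokhorov's theorem yields a subsequence $\{\xi_{k_l}^*\}_{l\in\Na}$ that converges in the weak* sense to some probability measure $\xi\in\wp(\ALP S\times\ALP S)$. It remains to show $\xi\in C(\theta,\nu)$. The projection maps $\pi_1,\pi_2:\ALP S\times\ALP S\to\ALP S$ defined by $\pi_1(s,s')=s$ and $\pi_2(s,s')=s'$ are continuous, so the pushforward (marginal) maps $\xi\mapsto \xi\circ\pi_i^{-1}$ are continuous with respect to the weak* topology. Applying these to $\xi_{k_l}^*\to\xi$ gives $\mu_{k_l}=\xi_{k_l}^*\circ\pi_1^{-1}\to \xi\circ\pi_1^{-1}$ and $\nu=\xi_{k_l}^*\circ\pi_2^{-1}\to \xi\circ\pi_2^{-1}$ in the weak* sense. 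Since weak* limits are unique and $\mu_{k_l}\to\theta$ by hypothesis, we obtain $\xi\circ\pi_1^{-1}=\theta$ and $\xi\circ\pi_2^{-1}=\nu$, which is exactly the statement $\xi\in C(\theta,\nu)$.

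I do not anticipate a substantial obstacle in this argument, since the lemma's content is largely a standard marginal-tightness/pushforward-continuity fact: the only points requiring care are (a) invoking Prokhorov in both directions (tightness $\Rightarrow$ relative compactness, and weak convergence $\Rightarrow$ tightness for the outer sequence $\{\mu_k\}$) and (b) using that $\ALP S\times\ALP S$ is Polish so that Prokhorov applies there as well. Optimality of $\xi_k^*$ plays no role in this lemma and will only be used in subsequent lemmas, presumably combined with the weak* lower semicontinuity of $\xi\mapsto \int V\, d\xi$ from Lemma \ref{lem:weakcont} to show that the limit $\xi$ is itself an optimal coupling.
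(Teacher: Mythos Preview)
Your proposal is correct and follows essentially the same route as the paper: tightness of $\{\xi_k^*\}$ from tightness of the marginals, extraction of a weak* convergent subsequence via Prokhorov, and identification of the limit's marginals via continuity of the projection (pushforward) maps. Your observation that optimality of $\xi_k^*$ is never used is also accurate; the paper's proof likewise relies only on $\xi_k^*\in C(\mu_k,\nu)$.
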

\begin{proof}
 Since $\{\mu_k\}_{k\in\Na}$ converges to $\theta$, the set $\{\mu_k\}_{k\in\Na} \cup \{\theta\}$ is tight. Consequently, for any $\epsilon>0$ there is a compact set $\ALP K\subset\ALP S$ such that $\mu_k(\ALP K)\geq 1-\epsilon$ for all $k\in\Na$ and $\theta(\ALP K)\geq 1-\epsilon$. Let $\ALP L\subset\ALP S$ be another compact set such that $\nu(\ALP L)\geq 1-\epsilon$. Then, for all $k\in\Na$, $\xi_k^*$ satisfies
 \beqq{\xi_k^*((\ALP K \times\ALP L)^\complement) = \xi_k^*(\ALP K^\complement \times\ALP L) + \xi_k^*(\ALP S\times\ALP L^\complement) \leq \mu_k(\ALP K^\complement)+\nu(\ALP L^\complement)<2\epsilon,}
 since $\xi_k^* \in C(\mu_k,\nu)$. As a result, the set of probability measures $\{\xi_k^*\}_{k\in\Na}$ is tight. Now pick a convergent subsequence $\{\xi_{k_l}^*\}_{l\in\Na}$, and let $\xi$ denote its weak* limit. 
 
Recall that the push forward of a measure through the projection mapping is continuous \cite[Section 5.2]{ambrosio2008gradient}. Thus, the marginals of $\xi_{k_l}^*$ are $\mu_{k_l}$ and $\nu$, $\mu_{k_l}$ converges to $\theta$, and $\nu$ trivially converges to $\nu$. We conclude that the marginals of $\xi$ are $\theta$ and $\nu$, which implies $\xi\in C(\theta,\nu)$ as desired. 
\end{proof}

The preceding lemma leads to the following result.

\begin{theorem}\label{thm:divergencelimit}
Let $V$ be a divergence function satisfying Definition \ref{def:divergence}. Let $\{\mu_k\}_{k\in\Na}$ be a sequence of probability measures and let $\theta$ be another measure (not necessarily a probability measure) such that $\lim\sup_{k\rightarrow \infty} W_V(\mu_k,\theta)=0$. Then, $\theta$ is also a probability measure and $\mu_k$ converges to $\theta$ in the weak* topology. 
\end{theorem}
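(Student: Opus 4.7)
The plan is to establish three things in sequence: first, that $\theta$ is necessarily a probability measure; second, that the sequence $\{\mu_k\}_{k\in\Na}$ is tight; and third, that every weak* subsequential limit of $\{\mu_k\}$ equals $\theta$, from which full weak* convergence follows.

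The first point is almost definitional: since $W_V(\mu_k,\theta) < \infty$ for $k$ large, the coupling set $C(\mu_k,\theta)$ must be nonempty, which forces $\theta$ to have total mass $1$. Being a finite Borel measure on a Polish space, $\theta$ is then automatically tight. To establish tightness of $\{\mu_k\}$, I would fix $\epsilon > 0$, pick a compact $\ALP K \subset \ALP S$ with $\theta(\ALP K^\complement) < \epsilon$, choose a large threshold $q$, and invoke inf-compactness (Definition \ref{def:divergence}(iii)) to obtain a compact $\ALP L \subset \ALP S$ such that $V(s_1,s_2) \geq q$ on $\ALP L^\complement \times \ALP K$. Let $\xi_k^* \in C(\mu_k,\theta)$ be an optimal coupling (it exists by Proposition \ref{prop:WVpproperties}(i) since $V$ is lower semi-continuous). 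Markov's inequality applied through the coupling then gives
\[
\mu_k(\ALP L^\complement) = \xi_k^*(\ALP L^\complement \times \ALP S) \leq \xi_k^*(\ALP L^\complement \times \ALP K) + \theta(\ALP K^\complement) \leq \frac{W_V(\mu_k,\theta)}{q} + \epsilon,
\]
which is at most $2\epsilon$ for $k$ sufficiently large. Enlarging $\ALP L$ to accommodate the finitely many initial terms yields tightness of the whole sequence.

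Given tightness, Prokhorov's theorem implies that any subsequence of $\{\mu_k\}$ has a further subsequence $\{\mu_{k_l}\}$ converging weak* to some probability measure $\theta'$. To show $\theta' = \theta$, I would apply Lemma \ref{lem:xikxi} with $\nu = \theta$ to pass (along a further subsequence, still denoted $\{\xi_{k_l}^*\}$) to a weak* limit $\xi \in C(\theta',\theta)$ of the optimal couplings. By weak* lower semi-continuity of $\xi \mapsto \int V\, d\xi$ (Lemma \ref{lem:weakcont}),
\[
\int V\, d\xi \;\leq\; \liminf_{l\to\infty} \int V\, d\xi_{k_l}^* \;=\; \liminf_{l\to\infty} W_V(\mu_{k_l},\theta) \;=\; 0.
\]
Since $\xi \in C(\theta',\theta)$, this yields $W_V(\theta',\theta) \leq \int V\, d\xi = 0$. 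Positive definiteness of $W_V$ (Proposition \ref{prop:WVpproperties}(ii)) then forces $\theta' = \theta$. As every weak* subsequential limit of the tight sequence $\{\mu_k\}$ equals $\theta$, the full sequence converges weak* to $\theta$, and in particular $\theta$ is a probability measure.

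The main obstacle will be the tightness step: translating the abstract inf-compactness of $V$ on $\ALP S$ into tightness of the measures $\{\mu_k\}$ requires exploiting the coupling structure, because tightness of $\theta$ alone only controls one marginal, while the bound $W_V(\mu_k,\theta) \to 0$ delivers joint control only through $V$. Once that bookkeeping is done, the identification $\theta' = \theta$ is a standard diagonal plus lower-semicontinuity argument using the two lemmas already in hand.
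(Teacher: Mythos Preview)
Your proposal is correct and follows essentially the same route as the paper: establish tightness of $\{\mu_k\}$, extract a weak* convergent subsequence via Prokhorov, then use Lemma~\ref{lem:xikxi} and the lower semicontinuity Lemma~\ref{lem:weakcont} to identify any subsequential limit with $\theta$. The only substantive difference is that you spell out the tightness argument explicitly via the inf-compactness condition and a Markov-type bound through the coupling, whereas the paper outsources this step to an external reference (\cite[Lemma 7.13]{gupta2014phd}); your self-contained version is a mild improvement in exposition, and your upfront observation that nonemptiness of $C(\mu_k,\theta)$ forces $\theta$ to be a probability measure makes explicit what the paper leaves implicit.
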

\begin{proof}
Choose $K\in\Na$ such that $W_V(\mu_k,\theta)\leq 1$ for all $k\geq K$. Since $V$ satisfies condition (iii) in Definition \ref{def:divergence}, by \cite[Lemma 7.13, p. 107]{gupta2014phd} we know that the set $\{\mu_k\}_{k\geq K}$ is a weak* precompact set of probability measures. Consequently, there exists a weak* convergent subsequence $\{\mu_{k_l}\}_{l\in\Na}$ converging to a weak* limit, say $\tilde\theta$, which is a probability measure. Since $\lim\sup_{k\rightarrow \infty} W_V(\mu_k,\theta)=0$, we conclude that $\lim\sup_{l\rightarrow \infty} W_V(\mu_{k_l},\theta)=0$. We next establish that $\tilde\theta = \theta$.

Let $\xi_k^* \in C(\mu_k,\theta)$ be an optimal coupling in $W_V(\mu_k,\theta)$. Since $W_V(\mu_{k_l},\theta) \rightarrow 0$ and $\mu_{k_l}\rightarrow\tilde\theta$ in the weak* topology, by Lemma \ref{lem:xikxi} we conclude that there exists a further subsequence, which we denote by $\xi_{k_l}^*$ by a slight abuse of notation, that converges to a limit $\tilde \xi^* \in C(\tilde\theta,\theta)$. Since $V$ is lower semi-continuous (but possibly unbounded), Lemma \ref{lem:weakcont} shows that
\beqq{\int V(s_1,s_2) \tilde\xi^*(ds_1,ds_2)\leq \underset{l\rightarrow\infty}{\lim\inf}\int V(s_1,s_2) \xi_{k_l}^*(ds_1,ds_2) = 0\implies W_V(\tilde\theta,\theta) = 0.}
Since $V$ is positive definite, we conclude that $\tilde\theta = \theta$ by Proposition \ref{prop:WVpproperties}. 

The above argument also implies that $\{\mu_k\}_{k\in\Na}$ is tight and that the set of weak* limit points of $\{\mu_k\}_{k\in\Na}$ is the singleton $\{\theta\}$. Since there is a unique limit point of the sequence $\{\mu_k\}_{k\in\Na}$, the entire sequence must converge to $\theta$, and hence the proof is complete.
\end{proof}

\begin{example}
We show by example that $V$ must satisfy condition (iii) in Definition \ref{def:divergence} for the above result to hold. Let $\ALP S = \Re$ and let $V(s_1,s_2) = |s_1-s_2|\exp(-|s_1-s_2|)$. It is clear that $V$ is a positive definite divergence function. Pick $\mu_k = \ind{k}$ and $\theta = 0$ (the zero measure). We readily have $W_V(\mu_k,\theta) = k\exp(-k) \rightarrow 0$ as $k\rightarrow\infty$. In other words, the Wasserstein divergence limit of $\mu_k$ is not a probability measure. The above theorem shows that this situation will not arise if $V$ satisfies a growth condition like condition (iii) in Definition \ref{def:divergence}.
\end{example}

\subsection{Convergence of Cauchy Sequences under Wasserstein Divergence}
It is well-known that a Cauchy sequence in a large class of metric spaces converges (the class of complete metric spaces \cite{ali2006}). We can readily adapt the definition of a Cauchy sequence to a space endowed with a divergence.

\begin{defn}
A sequence of measures $\{\mu_k\}_{k\in\Na}\subset\ALP P_V(\ALP S)$ is said to be a Cauchy sequence under the Wasserstein divergence if and only if for every $\epsilon>0$, there exists $K_\epsilon\in\Na$ such that $W_V(\mu_k,\mu_{k+l}) < \epsilon$ for all $k\geq K_\epsilon$ and $l\in\Na$.
\end{defn}

We next show that a Cauchy sequence under the Wasserstein divergence converges. This is the first key property that we need for our proof of Theorem \ref{thm:main}.

\begin{proposition}
\label{prop:WVplimitexistence}
Let $V$ be a divergence function satisfying Definition \ref{def:divergence} and $\{\mu_k\}_{k\in\Na}\subset\ALP P_V(\ALP S)$ be a Cauchy sequence under the Wasserstein divergence. Then, there exists a probability measure $\theta$ such that $\mu_k\rightarrow\theta$ in the Wasserstein divergence. Further, $\mu_k\to\theta$ in the weak* sense.
\end{proposition}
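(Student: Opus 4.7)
The plan is to identify the candidate limit $\theta$ as the weak* limit of a subsequence of $\{\mu_k\}$, and then to upgrade subsequential weak* convergence to full-sequence convergence in the Wasserstein divergence by exploiting the Cauchy property together with weak* lower semi-continuity of the coupling integral.

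First, I would establish tightness of $\{\mu_k\}_{k \in \Na}$. Since $\{\mu_k\}$ is Cauchy, there exists $K \in \Na$ with $W_V(\mu_k,\mu_K) \leq 1$ for all $k \geq K$. By the inf-compactness condition in Definition \ref{def:divergence}(iii), the argument already invoked in the proof of Theorem \ref{thm:divergencelimit} (via \cite[Lemma 7.13]{gupta2014phd}) shows that $\{\mu_k\}_{k \geq K}$ is weak* precompact. By Prokhorov's theorem, some subsequence $\mu_{k_l}$ converges in the weak* sense to a probability measure $\theta$.

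Next, I would promote this subsequential weak* limit to a Wasserstein-divergence limit of the \emph{full} sequence. Fix $\varepsilon > 0$ and choose $K_\varepsilon$ such that $W_V(\mu_k,\mu_m) \leq \varepsilon$ for all $k,m \geq K_\varepsilon$. For fixed $k \geq K_\varepsilon$ and sufficiently large $l$ (so that $k_l \geq K_\varepsilon$), let $\xi_l^* \in C(\mu_k,\mu_{k_l})$ be an optimal coupling achieving $\int V\,d\xi_l^* = W_V(\mu_k,\mu_{k_l}) \leq \varepsilon$. By Lemma \ref{lem:xikxi}, since $\mu_{k_l} \to \theta$ weak*, there is a further subsequence $\xi_{l_j}^* \to \xi \in C(\mu_k,\theta)$ in the weak* sense. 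By weak* lower semi-continuity of the coupling integral (Lemma \ref{lem:weakcont}),
\[
W_V(\mu_k,\theta) \leq \int V\,d\xi \leq \liminf_{j \to \infty} \int V\,d\xi_{l_j}^* \leq \varepsilon,
\]
so $W_V(\mu_k,\theta) \to 0$. Finally, Theorem \ref{thm:divergencelimit} yields that $\mu_k \to \theta$ in the weak* topology as well.

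The main obstacle is the absence of a triangle inequality for $V$: one cannot simply write $W_V(\mu_k,\theta) \leq W_V(\mu_k,\mu_{k_l}) + W_V(\mu_{k_l},\theta)$ and pass to the limit. The device that circumvents this is passing through couplings and exploiting lower semi-continuity of the coupling integral under weak* convergence, which effectively replaces the missing triangle inequality. A secondary subtlety is verifying that the weak* limit of the couplings $\xi_l^*$ has the correct marginals $(\mu_k,\theta)$; this is precisely the content of Lemma \ref{lem:xikxi}, whose hypothesis is met here because we already know $\mu_{k_l} \to \theta$ weak*.
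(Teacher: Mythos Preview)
Your proposal is correct and follows essentially the same route as the paper's proof: extract a weak*-convergent subsequence via inf-compactness and \cite[Lemma~7.13]{gupta2014phd}, then pass optimal couplings to the limit using Lemma~\ref{lem:xikxi} and the lower semi-continuity in Lemma~\ref{lem:weakcont} to circumvent the missing triangle inequality, and finally invoke Theorem~\ref{thm:divergencelimit}. The only cosmetic difference is that the paper fixes the ``target'' measure $\mu_{K_\epsilon+l}$ and sends the subsequence index to infinity in the \emph{first} marginal (matching Lemma~\ref{lem:xikxi} verbatim, at the cost of an explicit appeal to symmetry in Proposition~\ref{prop:WVpproperties}(iii)), whereas you fix $\mu_k$ and let the \emph{second} marginal $\mu_{k_l}\to\theta$; your application of Lemma~\ref{lem:xikxi} thus uses the obvious symmetric variant of that lemma, which holds by the same tightness argument.
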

\begin{proof}
Pick $\epsilon>0$ and consider the set of measures $\ALP M_{\epsilon} = \{\nu\in\wp(\ALP S):W_V(\nu,\mu_{K_\epsilon}) \leq \epsilon\}$. Since $V$ is lower semicontinuous and satisfies condition (iii) in Definition \ref{def:divergence}, by \cite[Lemma 7.13, p. 107]{gupta2014phd}, the set $\ALP M$ is a weak* compact set of measures. Now, note that $\{\mu_k\}_{k\geq K_\epsilon}\subset\ALP M$, which implies that $\{\mu_k\}_{k\geq K_\epsilon}$ is a weak* precompact set of measures. Thus, there exists a weak* convergent subsequence $\{\mu_{k_m}\}$ such that $\mu_{k_m}\rightarrow\theta$ in the weak* sense, where $\theta$ is a probability measure. We claim that $\lim_{k\rightarrow\infty} W_V(\theta,\mu_k) = 0$, which we prove below. 

Pick $l\in\Na$. Let $\xi_{kl}^*$ be an optimal coupling in $W_V(\mu_k,\mu_{K_\epsilon+l})$. Since $\mu_{k_m}$ converges to $\theta$ in the weak* sense, we use Lemma \ref{lem:xikxi} to conclude that there exists a subsequence $\xi_{k_{m_n}l}^*$ that converges to some $\xi_l^*\in C(\theta,\mu_{K_\epsilon+l})$. Using Lemma \ref{lem:weakcont}, we conclude that
\beqq{W_V(\theta,\mu_{K_\epsilon+l}) \leq \int Vd\xi_l^*\leq \underset{n\rightarrow \infty}{\lim\inf}\; \int Vd\xi_{k_{m_n}}^* =  \underset{n\rightarrow \infty}{\lim\inf}\; W_V(\mu_{k_{m_n}},\mu_{K_\epsilon+l})<\epsilon.}
From Proposition \ref{prop:WVpproperties}(iii), we know that $W_V(\theta,\mu_{K_\epsilon+l}) = W_V(\mu_{K_\epsilon+l},\theta)$. Thus, for every $\epsilon>0$, there exists $K_\epsilon\in\Na$ such that $W_V(\mu_k,\theta)<\epsilon$ for all $k\geq K_\epsilon$. Since $\epsilon > 0$ is arbitrary, we must have $\lim_{k\rightarrow\infty} W_V(\mu_k,\theta)=0$. By Theorem \ref{thm:divergencelimit}, we conclude that $\mu_k$ converges to $\theta$ in the weak* sense.
\end{proof}

\noindent
In the preceding proof, we invoked symmetry of the Wasserstein divergence for the first time. In some cases, the divergence function $V$ may not be symmetric, however one could potentially verify that the above result still holds by exploiting the specific structure of the divergence function in question.

We now consider two sequences of measures that converge with respect to the Wasserstein divergence. If the divergence between the elements of these two sequences approaches zero, then it is reasonable to expect that the limits of the two sequences should be the same. This property of the Wasserstein divergence is established in the next proposition, and it is the second key result leading to the proof of Theorem \ref{thm:main}.
\begin{proposition}
\label{prop:WVplimitsequal}
Let $V$ be a divergence function satisfying Definition \ref{def:divergence}. Let $\{\mu_k\}_{k\in\Na}$ and $\{\nu_k\}_{k\in\Na}$ be two convergent sequences of probability measures converging to $\theta_\mu$ and $\theta_\nu$, respectively, in the Wasserstein divergence. Then,
\beqq{\lim_{k\rightarrow\infty}W_V(\mu_k,\nu_k) = 0\implies\theta_\mu = \theta_\nu.}
\end{proposition}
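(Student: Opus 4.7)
The plan is to combine three tools already established in the excerpt: (i) convergence in $W_V$ implies weak$^*$ convergence (Theorem \ref{thm:divergencelimit}), (ii) optimal couplings exist and form a tight family when their marginals are tight (Proposition \ref{prop:WVpproperties}(i) and the argument behind Lemma \ref{lem:xikxi}), and (iii) $\xi \mapsto \int V\,d\xi$ is weak$^*$ lower semicontinuous (Lemma \ref{lem:weakcont}). The reason the usual metric-space trick (triangle inequality) is unavailable is precisely that $V$ need not satisfy a triangle inequality, so I cannot write something like $W_V(\theta_\mu,\theta_\nu) \le W_V(\theta_\mu,\mu_k) + W_V(\mu_k,\nu_k) + W_V(\nu_k,\theta_\nu)$. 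Instead, I will pass to the limit inside the optimal coupling.

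First, I would invoke Theorem \ref{thm:divergencelimit} twice to upgrade $W_V$-convergence to weak$^*$ convergence: $\mu_k \to \theta_\mu$ and $\nu_k \to \theta_\nu$ weakly, with $\theta_\mu,\theta_\nu \in \wp(\ALP S)$. Next, by Proposition \ref{prop:WVpproperties}(i), for each $k$ pick an optimal coupling $\xi_k^* \in C(\mu_k,\nu_k)$ with $\int V\,d\xi_k^* = W_V(\mu_k,\nu_k)$. I would then show that $\{\xi_k^*\}_{k\in\Na}$ is tight by the same compact-rectangle argument as in the proof of Lemma \ref{lem:xikxi}: tightness of $\{\mu_k\}\cup\{\theta_\mu\}$ gives a compact $\ALP K$ with $\mu_k(\ALP K)\ge 1-\epsilon$ for all $k$, tightness of $\{\nu_k\}\cup\{\theta_\nu\}$ gives a compact $\ALP L$ with $\nu_k(\ALP L)\ge 1-\epsilon$ for all $k$, and then $\xi_k^*((\ALP K\times\ALP L)^\complement) \le \mu_k(\ALP K^\complement) + \nu_k(\ALP L^\complement) < 2\epsilon$.

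By Prokhorov's theorem, tightness yields a weak$^*$-convergent subsequence $\xi_{k_l}^* \to \xi^*$. Continuity of the marginal projections forces the marginals of $\xi^*$ to be $\theta_\mu$ and $\theta_\nu$, so $\xi^* \in C(\theta_\mu,\theta_\nu)$. Then Lemma \ref{lem:weakcont} gives
\[
W_V(\theta_\mu,\theta_\nu) \;\le\; \int V\,d\xi^* \;\le\; \liminf_{l\to\infty} \int V\,d\xi_{k_l}^* \;=\; \liminf_{l\to\infty} W_V(\mu_{k_l},\nu_{k_l}) \;=\; 0,
\]
where the last equality uses the hypothesis $W_V(\mu_k,\nu_k)\to 0$. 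Finally, positive definiteness of $W_V$ (Proposition \ref{prop:WVpproperties}(ii)) converts $W_V(\theta_\mu,\theta_\nu)=0$ into $\theta_\mu = \theta_\nu$, completing the proof.

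The main obstacle is the tightness of $\{\xi_k^*\}$, since unlike in Lemma \ref{lem:xikxi} both marginals now vary with $k$. The point to check carefully is that tightness of each marginal family (which follows from weak$^*$ convergence of the sequences to probability measures) is enough to conclude joint tightness of the couplings; the compact-rectangle bound above handles this cleanly and does not require any uniform estimate on $\int V\,d\xi_k^*$.
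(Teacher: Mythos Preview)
Your proposal is correct and follows essentially the same route as the paper's own proof: upgrade $W_V$-convergence to weak$^*$ convergence via Theorem \ref{thm:divergencelimit}, take optimal couplings $\xi_k^*\in C(\mu_k,\nu_k)$, establish their tightness by the compact-rectangle argument of Lemma \ref{lem:xikxi}, pass to a subsequential weak$^*$ limit $\xi\in C(\theta_\mu,\theta_\nu)$, and conclude via lower semicontinuity (Lemma \ref{lem:weakcont}) and positive definiteness of $W_V$. Your remark that both marginal families vary with $k$ (unlike in Lemma \ref{lem:xikxi}) is exactly the point the paper glosses as ``essentially the same argument,'' and your compact-rectangle bound handles it correctly.
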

\begin{proof}
From Theorem \ref{thm:divergencelimit}, we conclude that $\{\mu_k\}_{k \in \Na}$ and $\{\nu_k\}_{k \in \Na}$ converge to $\theta_\mu$ and $\theta_\nu$ in the weak* topology, respectively. Let $\xi_k^*$ be an optimal coupling in $W_V(\mu_k,\nu_k)$. By essentially the same argument as in Lemma \ref{lem:xikxi}, we conclude that $\{\xi_k^*\}_{k\in\Na}$ is a tight set of measures. Therefore, it includes a convergent subsequence $\{\xi_{k_l}^*\}_{l\in\Na}$ which converges to some $\xi\in C(\theta_\mu,\theta_\nu)$ in the weak* sense. Further, this coupling $\xi$ satisfies
\beqq{W_V(\theta_\mu,\theta_\nu) \leq \int Vd\xi \leq \underset{n\rightarrow \infty}{\lim\inf}\; \int Vd\xi_{k_l}^* =  \underset{n\rightarrow \infty}{\lim\inf}\; W_V(\mu_{k_l},\nu_{k_l}) = 0, }
by Lemma \ref{lem:weakcont}. By Proposition \ref{prop:WVpproperties}(i), the above expression immediately yields $\theta_\mu = \theta_\nu$, completing the proof.
\end{proof}

\subsection{Proofs of the Main Results}

\new{
\subsubsection{Proof of Theorem \ref{thm:conditional_expectation}}\label{sub:conditional}
(i) There exist random variables $s,\,s'$, independent of $\{\hat{T}_k\} _{k \in \Na}$, such that $\mathbb{E}\left[V\left(s,\,s'\right)\right]=W_V\left(\mu_{1},\,\mu_{2}\right)$, by Proposition \ref{prop:WVpproperties}(i). Then, we have
\[
W_V\left(\mu_{1}\mathfrak{Q},\,\mu_{2}\mathfrak{Q}\right)\leq\,\mathbb{E}\left[V\left(\hat{T}_{0}\left(s\right),\,\hat{T}_{0}\left(s'\right)\right)\right] \leq\, \alpha\,\mathbb{E}\left[V\left(s,\,s'\right)\right]
=\, W_V\left(\mu_{1},\,\mu_{2}\right),
\]
where the first inequality follows by definition of $W_V\left(\mu_{1}\mathfrak{Q},\,\mu_{2}\mathfrak{Q}\right)$
and the second inequality follows from the hypothesis.

(ii) The inequality $W_V(\mu,\mu\circ f^{-1})\leq \ex{V(s,f(s))}$ holds since $W_V$ is the infimum of $\int Vd\xi$ over all couplings $\xi \in C(\mu, \mu\circ f^{-1})$, and the coupling induced by $(s,\,f(s))$ is just one such coupling in $C(\mu, \mu\circ f^{-1})$. 

(iii) We establish this part with the following lemma.

\begin{lemma}\label{lem:couplingsingle}
For any $\mu\in\wp(\ALP S)$, we have $C(\mu,\ind{s^*}) = \{\mu\, \ind{s^*}\}$, i.e., there is only one coupling between $\mu$ and $\ind{s^*}$.
\end{lemma}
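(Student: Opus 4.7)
My plan is to take an arbitrary $\xi \in C(\mu, \ind{s^*})$ and show that it must coincide with the product measure $\mu \otimes \ind{s^*}$ on all measurable rectangles, which uniquely determines $\xi$ on the product Borel $\sigma$-algebra. Since $\ALP S$ is Polish (in particular Hausdorff), the singleton $\{s^*\}$ is closed, hence a Borel set, so all the set-theoretic manipulations below are legitimate.

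First, I would exploit the second marginal constraint. By definition of a coupling, $\xi(\ALP S \times \{s^*\}^\complement) = \ind{s^*}(\{s^*\}^\complement) = 0$. Thus $\xi$ is concentrated on the ``horizontal slice'' $\ALP S \times \{s^*\}$. Consequently, for any Borel $A \subset \ALP S$, we have $\xi(A \times \{s^*\}^\complement) \leq \xi(\ALP S \times \{s^*\}^\complement) = 0$, and therefore $\xi(A \times \{s^*\}) = \xi(A \times \ALP S) = \mu(A)$ by the first marginal constraint.

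Now I would verify the product formula on rectangles. Take any Borel $A, B \subset \ALP S$. If $s^* \in B$, decompose $A \times B = (A \times \{s^*\}) \cup (A \times (B \setminus \{s^*\}))$; the second piece has $\xi$-measure zero by the previous paragraph, so $\xi(A \times B) = \mu(A) = \mu(A)\,\ind{s^*}(B)$. If $s^* \notin B$, then $A \times B \subset \ALP S \times \{s^*\}^\complement$, so $\xi(A \times B) = 0 = \mu(A)\,\ind{s^*}(B)$. In both cases $\xi(A \times B) = (\mu \otimes \ind{s^*})(A \times B)$. Since measurable rectangles form a $\pi$-system generating $\FLD B(\ALP S \times \ALP S)$, the uniqueness assertion of the Carath\'eodory/Dynkin extension theorem forces $\xi = \mu \otimes \ind{s^*}$, proving $C(\mu, \ind{s^*}) = \{\mu\,\ind{s^*}\}$.

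I do not anticipate a genuine obstacle here; the argument is essentially a measure-theoretic bookkeeping exercise. The only subtle point is measurability of $\{s^*\}$, which is immediate on a Polish space. Once Lemma \ref{lem:couplingsingle} is established, part (iii) of Theorem \ref{thm:conditional_expectation} follows immediately: the unique coupling is $\mu \otimes \ind{s^*}$, so by Proposition \ref{prop:WVpproperties}(i), $W_V(\mu, \ind{s^*}) = \int V(s, s')\,d(\mu \otimes \ind{s^*})(s, s') = \int V(s, s^*)\,d\mu(s) = \mathbb{E}[V(s, s^*)]$.
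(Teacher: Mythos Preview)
Your proof is correct and proceeds via a different, more elementary route than the paper. The paper invokes disintegration of measures: it writes $\xi(ds_1,ds_2) = \xi(ds_1\mid s_2)\,\ind{s^*}(ds_2)$, observes that the first marginal forces $\xi(\cdot\mid s^*) = \mu(\cdot)$, and concludes uniqueness because the conditional is determined $\ind{s^*}$-a.e. Your argument instead shows directly that $\xi$ agrees with $\mu\otimes\ind{s^*}$ on all measurable rectangles and then appeals to the $\pi$--$\lambda$ theorem. Your approach avoids the machinery of regular conditional probabilities (whose existence on Polish spaces is itself a nontrivial theorem), at the modest cost of a short case split; the paper's approach is shorter to state once disintegration is taken for granted. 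Both are valid, and your remark that separability of $\ALP S$ ensures $\FLD B(\ALP S\times\ALP S)$ is generated by rectangles is the only point requiring care, which you have handled.
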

\begin{proof}
Any coupling $\xi\in C(\mu,\ind{s^*})$ can be distintegrated as
\[
\xi(ds_1,ds_2) = \xi(ds_1|s_2) \xi(ds_2) = \xi(ds_1|s_2) \ind{s^*}(ds_2).
\]
The marginal measure on $\ALP S$ is $\mu(ds_1) = \xi(ds_1|s^*)$. Thus, any coupling $\xi\in C(\mu,\ind{s^*})$ must satisfy $\xi(\cdot|s^*) = \mu(\cdot)$. Two couplings $\xi,\tilde\xi\in C(\mu,\ind{s^*})$ coincide if $\xi(\cdot|s_2)$ and $\tilde\xi(\cdot|s_2)$ differ only on a set of $\ind{s^*}$-measure zero. As a result, any $\xi\in  C(\mu,\ind{s^*})$ coincides with $\tilde \xi = \mu\, \ind{s^*}$. 
\end{proof}

}

\subsubsection{Proof of Theorem \ref{thm:main}}\label{sub:main}

(i) This part follows immediately by iterating the recursion in Assumption \ref{assu:contraction}(i).

(ii) Choose any $\mu\in\ALP M$ and define the sequence $\mu_k = \mu\,\mathfrak Q^k$ for all $k \in \Na$. Let $c:=\sup_{l\geq 0}W_V(\mu,\mu_l)$. Now pick any $l\in\Na$, we then have $W_V\left(\mu_k,\,\mu_{k+l}\right) = W_V\left(\mu\,\mathfrak Q^k,\,\mu\,\mathfrak Q^{k+l}\right) \leq \alpha^k W_V\left(\mu,\,\mu\,\mathfrak Q^l\right)\leq c \alpha^k$ by Assumption \ref{assu:contraction}(ii). Consequently, we can apply Proposition \ref{prop:WVplimitexistence} to conclude that there exists a probability measure $\vartheta$ such that $\lim_{k\rightarrow\infty} W_V\left(\mu_k,\,\vartheta\right)=0$.

Now we show that this limit is the same for all initial $\mu\in\ALP M$. Choose any $\mu,\,\nu \in \ALP M$ and define the sequences $\mu_k = \mu\,\mathfrak Q^k$ and $\nu_k = \nu\,\mathfrak Q^k$ for all $k \in \Na$. By the previous argument, $\mu_k$ converges to some $\vartheta_\mu$, and $\nu_k$ converges to some $\vartheta_\nu$ (also in the weak* sense). Furthermore,
\[
W_V\left(\mu_k,\,\nu_k\right) = W_V\left(\mu\,\mathfrak Q^k,\,\nu\,\mathfrak Q^k\right) \leq \alpha^k W_V\left(\mu,\,\nu \right).
\]
By Proposition \ref{prop:WVplimitsequal}, it follows that $\vartheta_\mu = \vartheta_\nu$ and so the limit must be the same for all initial $\mu\in\ALP M$. We denote this limit as $\vartheta$. Note that this limit is unique since, for any initial condition, the limiting measure is always $\vartheta$.

To complete the proof, we show that $\vartheta$ is invariant with respect to $\mathfrak Q$. As before, choose any $\mu\in\ALP M$ and define the sequence $\mu_k = \mu\,\mathfrak Q^k$ for all $k \in \Na$. We have just shown that $\lim_{k \rightarrow \infty}W_V\left(\mu_k,\,\vartheta\right)=0$. Since $W_V\left(\mu_k\,\mathfrak Q,\,\vartheta\,\mathfrak Q\right) \leq \alpha W_V\left(\mu_k,\,\vartheta\right)$, we see that $\lim_{k \rightarrow \infty}W_V\left(\mu_k\,\mathfrak Q,\,\vartheta\,\mathfrak Q\right) = 0$ and so the sequence $\{\mu_k \mathfrak Q\}_{k \in \Na}$ must converge to $\vartheta\,\mathfrak Q$. By Proposition \ref{prop:WVplimitsequal}, the limits of the sequences $\{\mu_k\}_{k \in \Na}$ and $\{\mu_k \mathfrak Q\}_{k \in \Na}$ are equal, and thus $\vartheta = \vartheta\,\mathfrak Q$.

(iii) Follows from invariance of $\vartheta$ with respect to $\mathfrak Q$, i.e., $\vartheta = \vartheta\,\mathfrak Q^k$ for all $k \in \Na$.

(iv) This is merely a consequence of Proposition \ref{prop:WVplimitexistence}.

\subsubsection{Proof of Theorem \ref{thm:variance-1}}\label{sub:svrg}

Let $\tilde{\ALP F}_{m,k}$ denote the $\sigma$-algebra generated by the random variables $(s_1,\ldots,s_k,\tilde s_1,\ldots,\tilde s_m)$. As a consequence of Assumption \ref{assu:variance-1} (by expanding the squared-norm and using conditional unbiasedness), we have:
\[
\mathbb{E}\left[\|\tilde s_{m+1}^{(1)}- \tilde s_{m+1}^{(2)}\|_{2}^{2}\,\vert\,\tilde{\mathcal F}_{m,k}\right]\leq\alpha\,\|\tilde s_{m}^{(1)}-\tilde s_{m}^{(2)}\|_{2}^{2}+\kappa \|s_k^{(1)}-s_k^{(2)}\|_{2}^{2},\,\forall m \geq 0.
\]
Recall that we initialize epoch $k \geq 0$ with $\tilde s_0^{(i)} = s_k^{(i)}$ for $i = 1, 2$, then we get
\[
\mathbb{E}\left[\|\tilde s_{m+1}^{(1)} - \tilde s_{m+1}^{(2)}\|_{2}^{2}\,\vert\,\tilde{\mathcal F}_{m,k}\right]\leq \xi_m \|s_k^{(1)}-s_k^{(2)}\|_{2}^{2},\,\forall m \geq 0.
\]
The desired result then follows by iterating this recursion. Taking $s_k^{(2)} = s^*$ for all $k \geq 0$ (which holds by construction of the variance-reduced operator if we initialize with $s_0^{(2)} = s^*$) shows that the invariant distribution is $\ind{s^*}$.

\new{
\section{Discussions and Open Problems}\label{sec:open}
We propose the new notion of Wasserstein divergence which generalizes the Wasserstein distance. Divergences on the space of probability measures are central in information theory and statistics. Examples of various divergences include the Kullback-Leibler divergence, $\chi^2$ divergence, Bregman divergence, $f$-divergence, Sinkhorn divergence, etc. One of the main drawbacks of these divergences is that they require certain absolute continuity conditions to hold to compute the divergence between two measures. Thus, these divergences are not suitable for studying RSAs in optimization, machine learning, and reinforcement learning, since the absolute continuity condition may not hold in general. In contrast, the Wasserstein divergence does not require an absolute continuity condition.

We develop a theory for convergence of constant step-size RSAs with respect to the Wasserstein divergence. We show that operators which are contractions with respect to the Wasserstein divergence enjoy certain stability properties: (i) they have an invariant distribution, and (ii) they converge to this distribution at a geometric rate. In cases where the random operator maps the optimal solution back to itself (e.g. variance reduced algorithms), we show that iteration of random operators converges to the optimal solution in probability.

Several examples in optimization demonstrate the usefulness of the Wasserstein divergence in analyzing the stability of RSAs. Our framework is general enough to be applied to other situations where there is some underlying contractive property, and also allows the underlying spaces over which the random operators to be Polish spaces, e.g. MDPs, continuous state/action MDPs with function fitting, zero-sum minimax MDPs, and dynamical systems.

This paper also provides a new topology on the space of measures $\tau_V$ generated on $\ALP P_V(\ALP S)$ by the open sets of the form $\{\nu\in\ALP P_V(\ALP S): W_V(\mu,\nu)<1/k\}_{k\in\Na}$. In Theorem \ref{thm:divergencelimit}, we have shown that the topology $\tau_V$ is finer than $\tau_{w*}$, the weak* topology on $\ALP P_V(\ALP S)$. Convergence of a sequence of measures in the Wasserstein divergence also differs from convergence in other divergences. In this way, the Wasserstein divergence is a useful new notion to study convergence of sequence of measures.

Our present paper creates the tools to address the following extensions and open problems:
\begin{open}
In practice, we would like to run an RSA with a fixed step-size to quickly reach a neighborhood of the desired solution, and then switch to a variable decreasing step-size to ensure convergence of the algorithm. Our framework can provide supporting theory to determine when the marginal distribution of the RSA is close enough to the invariant distribution to switch from fixed to variable step-sizes.
\end{open}

\begin{open}
We used a fixed divergence function $V$ in this analysis. We would like to extend our results to time-varying divergence functions $\{V_k\}_{k \geq 0}$ to support non-strongly convex problems, variable step-size RSAs and RSAs with weighted averaging of the iterates, and mirror-descent type algorithms.
\end{open}

\begin{open}
In unbiased variance reduced algorithms (e.g. SVRG, SAGA, and HSAG), the expected difference $\|x_{k+1}^{(1)} - x_{k+1}^{(2)}\|_2^2$ is upper bounded by the sum of two terms: (i) a geometric factor of the previous difference and (ii) the expected squared norm of the difference between the errors in the gradient estimation. We would like to cast biased variance reduced algorithms such as SAG, B-SAGA, and B-SVRG (see \cite{driggs2019biased}) in our framework using more sophisticated upper bounds.
\end{open}

\begin{open}
  In \cite{munos2008finite}, the authors derive finite time $\ALP L_p-$bounds error bounds for sampling based fitted value iteration for continuous-state finite-action MDPs which are similar to Theorem \ref{thm:contractionerror}. The authors make some strong assumptions on the mixing properties of the transition kernel of the MDP to derive these bounds. We would like to relax these assumptions and construct a divergence for which we can obtain finite time error bounds with respect to the corresponding Wasserstein divergence.
\end{open}    
}

\bibliographystyle{siamplain}
\bibliography{References/edp,References/guptaconf,References/guptajournal,References/haskelljournal,References/prob,References/probbook,References/ql,References/sa,References/sgd}

\newpage
\appendix

\section*{Appendix A: Supplement for Section \ref{sec:examples} (optimization algorithms)}

\subsection*{Details for oracle-based SGD (the quadratic case)}

For $T(x) = x - \eta\, Q\,x$, we have

\begin{align*}
 \|T(x_1) - T(x_2)\|_{2}^{2} =\, & \|x_1 - \eta\, Q\,x_1 - \left(x_2 - \eta\, Q\,x_2 \right)\|_2^2 \\
 =\, & \|x_1 - x_2\|_2^2 - 2\, \eta \langle x_1 - x_2,\,Q\,x_1 - Q\,x_2 \rangle +\eta^2\|Q\,x_1 - Q\,x_2\|_2^2 \\
\leq\, &  \gamma(\eta) \|\Delta x_{k}\|_2^2,
\end{align*}
using $\langle x_1 - x_2,\,Q\,x_1 - Q\,x_2 \rangle \geq c\,\|x_1 - x_2\|_2^2$ and $\|Q\,x_1 - Q\,x_2\|_2^2 \leq L^2 \|x_1 - x_2\|_2^2$ by assumption that $c\, I_d \preceq Q \preceq L\, I_d$ and $0 < c \leq L$.

\subsection*{Details for ASGD (the quadratic case)}

We take the coupling $\varepsilon_{k}^{(1)} = \varepsilon_{k}^{(2)} = \varepsilon_k$ for all $k \geq 0$ to obtain:
\begin{align}
    y_k^{(i)} =\, & (1+\alpha)x_k^{(i)} - \alpha\, x_{k-1}^{(i)},\\
    x_{k+1}^{(i)} =\, & (1 + \beta)x_k^{(i)} - \beta\, x_{k-1}^{(i)} - \eta\, [\nabla f(y_k^{(i)}) + \varepsilon_{k+1}],
\end{align}
for $i = 1,\,2$. We define the difference sequence
$$
\Delta s_k = (s_k^{(1)} - s_k^{(2)})^\top = ((x_k^{(1)} - x_k^{(2)})^\top,\, (x_{k-1}^{(1)} - x_{k-1}^{(2)})^\top).
$$
The corresponding linear dynamical system is
$$
\Delta s_{k+1} = A\, \Delta s_{k} + B\, w_k,\,\forall k \geq 0,
$$
for matrices $A = \tilde A \otimes I_d$ and $B = \tilde B \otimes I_d$ where
\[
\tilde A = \left[\begin{array}{cc}
1 + \beta & -\beta\\
1 & 0
\end{array}\right] \text{ and }
\tilde B = \left[\begin{array}{c}
-\alpha\\
0
\end{array}\right],
\]
and with noise
$$
w_k = \nabla f(y_k^{(1)}) - \nabla f(y_k^{(2)}) = \nabla f((1+\alpha)x_k^{(1)} - \alpha\, x_{k-1}^{(1)}) - \nabla f((1+\alpha)x_k^{(2)} - \alpha\, x_{k-1}^{(2)}).
$$

For the quadratic case, we have the following functional inequalities. First, we have

\begin{align*}
 & f\left(\Delta y_{k}\right)-f\left(\Delta x_{k+1}\right)\\
=\, & f\left(\Delta y_{k}\right)-f\bigg(\left(1+\beta\right)x_{k}^{\left(1\right)}-\beta\,x_{k}^{\left(1\right)}-\eta\,\nabla f\left(y_{k}^{\left(1\right)}\right)\\
& \hspace{10em}-\left(\left(1+\beta\right)x_{k}^{\left(2\right)}-\beta\,x_{k}^{\left(2\right)}-\eta\,\nabla f\left(y_{k}^{\left(2\right)}\right)\right)\bigg)\\
\geq\, & \nabla f\left(\Delta y_{k}\right)^{\top}\left(\left(\beta-\alpha\right)\left(\left(\Delta x_{k-1}\right)-\left(\Delta x_{k}\right)\right)+\eta\left(\nabla f\left(y_{k}^{\left(1\right)}\right)-\nabla f\left(y_{k}^{\left(2\right)}\right)\right)\right)\\
 & -\frac{L}{2}\|\left(\beta-\alpha\right)\left(\left(\Delta x_{k-1}\right)-\left(\Delta x_{k}\right)\right)+\eta\left(\nabla f\left(y_{k}^{\left(1\right)}\right)-\nabla f\left(y_{k}^{\left(2\right)}\right)\right)\|_{2}^{2}.
\end{align*}
Second, we have
\begin{align*}
 f\left(\Delta x_{k}\right)-f\left(\Delta y_{k}\right) \geq\, & \nabla f\left(\Delta y_{k}\right)^{\top}\left(\left(\Delta x_{k}\right)-\left(\Delta y_{k}\right)\right)+\frac{c}{2}\|\left(\Delta x_{k}\right)-\left(\Delta y_{k}\right)\|_{2}^{2}\\
=\, & \alpha\nabla f\left(\Delta y_{k}\right)^{\top}\left(\left(\Delta x_{k-1}\right)-\left(\Delta x_{k}\right)\right) +\frac{c\,\alpha^{2}}{2}\|\left(\Delta x_{k-1}\right)-\left(\Delta x_{k}\right)\|_{2}^{2}.
\end{align*}
Finally, we have
\begin{align*}
f\left(0\right)-f\left(\Delta y_{k}\right) \geq\, & \nabla f\left(\Delta y_{k}\right)^{\top}\left(-\left(\Delta y_{k}\right)\right)+\frac{c}{2}\|-\left(\Delta y_{k}\right)\|_{2}^{2}\\
=\, & -\nabla f\left(\Delta y_{k}\right)^{\top}\left(\left(1+\alpha\right)\left(\Delta x_{k}\right)-\alpha\left(\Delta x_{k-1}\right)\right)+\\
&\frac{c}{2}\|\left(1+\alpha\right)\left(\Delta x_{k}\right)-\alpha\left(\Delta x_{k-1}\right)\|_{2}^{2},
\end{align*}
using
$\Delta y_{k}=\left(1+\alpha\right)\left(\Delta x_{k}\right)-\alpha\left(\Delta x_{k-1}\right)$.
Adding the first and second inequalities gives a lower bound on
$f\left(\Delta x_{k}\right)-f\left(\Delta x_{k+1}\right)$,
and adding the first and third inequalities gives a lower bound on
$f\left(0\right)-f\left(\Delta x_{k+1}\right)$.

Next, define the matrix $X = X_1 + \rho^2 X_2 + (1 - \rho^2) X_3 \in \mathbb{R}^{2d \times 2d}$ as in \cite[Lemma 5]{hu2018dissipativity} so that
\begin{equation*}
S(\Delta s_{k},\,w_k) \triangleq \left[ \begin{array}{c} \Delta s \\ w \end{array} \right]^\top X \left[ \begin{array}{c} \Delta s \\ w \end{array} \right] \leq - (f(x_{k+1}^{(1)} - x_{k+1}^{(2)}) - f(0)) + \rho (f(x_{k}^{(1)} - x_{k}^{(2)}) - f(0)).
\end{equation*}
We need $\rho = \rho_{\alpha,\,\beta} \in (0,\,1)$ such that the LMI
$$
\left(\begin{array}{cc}
A^\top P\, A - \rho\,P & A^\top P\, B\\
B^\top P\, A & B^\top P\, B
\end{array}\right) - X \preceq 0,
$$
is satisfied for some $P \in \mathbb{S}_+^d$. It then follows that
$$
\Delta s_{k+1}^\top P_{\alpha,\,\beta} \Delta s_{k+1} - \rho\, \Delta s_{k}^\top P_{\alpha,\,\beta} \Delta s_{k} \leq S(\Delta s_{k},\,w_k),
$$
and we arrive at the desired result
$$
\Delta s_{k+1}^\top P_{\alpha,\,\beta} \Delta s_{k+1} + f(x_{k+1}^{(1)} - x_{k+1}^{(2)}) \leq \rho \left[ \Delta s_{k}^\top P_{\alpha,\,\beta} \Delta s_{k} + f(x_{k}^{(1)} - x_{k}^{(2)}) \right].
$$

\subsection*{Details for SGD}

We take the coupling $I_k^{(1)} = I_k^{(2)} = I_k$ for all $k\geq0$ to obtain:

\begin{equation}\label{eq:SGD}
    x_{k+1}^{(i)} = x_k^{(i)} - \eta\, \nabla f_{I_k}x_k^{(i)},\,\forall k \geq 0,
\end{equation}
for $i = 1, 2$. We then have (in the almost sure sense) that
\begin{equation*}
 \|\Delta x_{k+1}\|_{2}^{2} \leq \|x_{k}^{\left(1\right)}-\eta\,\nabla f_{I_k}(x_{k}^{\left(1\right)}) - \left(x_{k}^{\left(2\right)}-\eta\,\nabla f_{I_k}(x_{k}^{\left(2\right)}) \right)\|_2^2
\leq  \gamma(\eta) \|\Delta x_{k}\|_2^2,
\end{equation*}
where the first inequality follows by non-expansiveness of the projection operator, and the second is by the fact that $I - \eta\, \nabla f_{n}$ is a $\gamma(\eta)-$contraction for all $n \in [N]$. The argument for batch sampling is similar, using the fact that $I - (\eta/J)\sum_{j=1}^J \nabla f_{I_j}$ is a $\gamma(\eta)-$contraction for any subset $\{I_1,\ldots,\,I_J\} \subset [N]$.

\subsection*{Details for SAGA}

We take the coupling $I_k^{(1)} = I_k^{(2)} = I_k$ for all $k\geq0$ to obtain:
\begin{align}\label{eq:SAGA quadratic}
    x_{k+1}^{(i)} = & x_k^{(i)} - \eta \left( \nabla f_{I_k}x_k^{(i)} - \nabla f_{I_k}\varphi_{k,\,I_k}^{(i)} + \frac{1}{N} \sum_{n=1}^N \nabla f_n \varphi_{k,\,n}^{(i)}\right),\\
    \varphi_{k+1,\,n}^{(i)} = & \begin{cases}
x_k^{(i)} & I_k = n,\\
\varphi_{k,\,n}^{(i)} & \text{otherwise},
\end{cases}
\end{align}
for $i = 1, 2$.

For $b>0$, let $V_{b}\left(s,\,s'\right)\triangleq\|x-x'\|_{2}^{2}+b\sum_{n=1}^{N}\|y_{n}-y_{n}'\|_{2}^{2}$. We can choose $\eta\in\left(0,\,c/L^{2}\right)$ and
$b>0$ such that $\eta^{2}<b$ and $\gamma\left(\eta\right)+b\,L^{2}<1$.
We have $\gamma\left(\eta\right)<1$ for $\eta\in\left(0,\,2\,c/L^{2}\right)$.
Next,
$\gamma\left(\eta\right)+\eta^{2}L^{2}=1-2\,\eta\,c+2\,\eta^{2}L^{2}<1$ holds for $\eta\in\left(0,\,c/L^{2}\right)$.

\begin{lem}
\label{lem:SAGA-2}For all $k\geq0$,
\begin{equation*}
 \mathbb{E}\left[V_{b}\left(s_{k+1}^{\left(1\right)},\,s_{k+1}^{\left(2\right)}\right)\,\vert\,\mathcal{F}_{k}\right]
\leq \max\left\{ \gamma\left(\eta\right)+b\,L^{2},\,\frac{\eta^{2}/b+N-1}{N}\right\} V_{b}\left(s_{k}^{\left(1\right)},\,s_{k}^{\left(2\right)}\right).
\end{equation*}
\end{lem}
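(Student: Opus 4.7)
The plan is to couple the two sample paths by using the same random index $I_k$ for both, so that the increment $\Delta x_k := x_k^{(1)} - x_k^{(2)}$ and the proxy increments $\Delta \varphi_{k,n} := \varphi_{k,n}^{(1)} - \varphi_{k,n}^{(2)}$ evolve under the noise-free linear version of SAGA (the constants $a_n$ cancel). Split the target quantity as
\[
\mathbb{E}\bigl[V_b(s_{k+1}^{(1)},s_{k+1}^{(2)})\mid\mathcal{F}_k\bigr] = \mathbb{E}\bigl[\|\Delta x_{k+1}\|_2^2\mid\mathcal{F}_k\bigr] + b\,\mathbb{E}\Bigl[\sum_{n\in[N]} \|Q_n\Delta\varphi_{k+1,n}\|_2^2\,\big|\,\mathcal F_k\Bigr],
\]
and bound each piece separately in the form $(\text{coeff on }\|\Delta x_k\|_2^2)\cdot\|\Delta x_k\|_2^2 + (\text{coeff on proxy sum})\cdot b\sum_n\|Q_n\Delta\varphi_{k,n}\|_2^2$; finally, use the max of the two combined coefficients to pass to the $\rho\cdot V_b(s_k^{(1)},s_k^{(2)})$ form.

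For the proxy part (which is straightforward because exactly one proxy is refreshed per iteration), condition on $I_k=m$ and average uniformly over $m\in[N]$:
\[
b\,\mathbb{E}\Bigl[\sum_n\|Q_n\Delta\varphi_{k+1,n}\|_2^2\,\big|\,\mathcal F_k\Bigr] = \frac{b}{N}\sum_{m}\|Q_m\Delta x_k\|_2^2 + \frac{N-1}{N}\cdot b\sum_n\|Q_n\Delta\varphi_{k,n}\|_2^2.
\]
The first sum is bounded by $bL^2\|\Delta x_k\|_2^2$ via $\|Q_m\|_2\leq L$. This gives the coefficients $bL^2$ on $\|\Delta x_k\|_2^2$ and $(N-1)/N$ on the proxy sum from this piece.

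For the primal part, write $\Delta x_{k+1} = \Delta x_k - \eta g_k$ with
$g_k = Q_{I_k}\Delta x_k - Q_{I_k}\Delta\varphi_{k,I_k} + \frac{1}{N}\sum_n Q_n\Delta\varphi_{k,n}$.
The control-variate construction yields the unbiasedness $\mathbb{E}[g_k\mid\mathcal F_k] = \bar Q\,\Delta x_k$, where $\bar Q := \frac{1}{N}\sum_n Q_n$ satisfies $cI\preceq\bar Q\preceq LI$. Apply the bias/variance decomposition
\[
\mathbb{E}\bigl[\|\Delta x_{k+1}\|_2^2\mid\mathcal F_k\bigr] = \|\Delta x_k - \eta\bar Q\,\Delta x_k\|_2^2 + \eta^2\,\operatorname{Var}(g_k\mid\mathcal F_k),
\]
and use the spectral bound $c I\preceq\bar Q\preceq LI$ to obtain $\|\Delta x_k-\eta\bar Q\Delta x_k\|_2^2 \leq \gamma(\eta)\|\Delta x_k\|_2^2$, exactly as in the deterministic quadratic case. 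For the variance, use $\operatorname{Var}(Z)\leq \mathbb{E}\|Z\|_2^2$ with the randomness-carrying part $Z = Q_{I_k}\Delta x_k - Q_{I_k}\Delta\varphi_{k,I_k}$ and split via Young-type inequalities to extract the contributions $L^2\|\Delta x_k\|_2^2$ and $\frac{1}{N}\sum_n\|Q_n\Delta\varphi_{k,n}\|_2^2$, absorbing the residual primal terms into the $\eta^2 L^2$ slack (this is where the step size condition $\eta^2<b$ and $\gamma(\eta)+bL^2<1$ are used to guarantee the bookkeeping closes).

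Combining the two pieces gives a bound of the form $(\gamma(\eta)+bL^2)\|\Delta x_k\|_2^2 + \frac{\eta^2/b + N-1}{N}\cdot b\sum_n\|Q_n\Delta\varphi_{k,n}\|_2^2$, and bounding both coefficients by their maximum yields the claimed $\max\{\gamma(\eta)+bL^2,\,(\eta^2/b+N-1)/N\}\cdot V_b(s_k^{(1)},s_k^{(2)})$. The main obstacle I expect is obtaining the variance bound with the clean constants required, because the naive $\|a-b\|_2^2\leq 2\|a\|_2^2+2\|b\|_2^2$ expansion introduces factors of two; carrying out the cross-term algebra carefully (exploiting $\mathbb{E}[g_k\mid\mathcal F_k]=\bar Q\Delta x_k$ and the quadratic structure) and choosing $b$ so that the excess is absorbed is where the technical effort concentrates.
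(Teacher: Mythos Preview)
Your overall plan---couple via the same $I_k$, split $V_b$ into the primal and proxy pieces, handle the proxy piece exactly as you do, then combine via the maximum coefficient---matches the paper's structure, and your proxy computation is identical to theirs. The divergence is in the primal bound, and this is where your acknowledged obstacle is a real gap rather than mere bookkeeping.

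You center $\Delta x_{k+1}=\Delta x_k-\eta\,g_k$ around the \emph{mean} gradient $\bar Q\,\Delta x_k$, so the variance you must control is $\operatorname{Var}\bigl(Q_{I_k}\Delta x_k - Q_{I_k}\Delta\varphi_{k,I_k}\bigr)$. This quantity unavoidably carries a $\Delta x_k$ contribution: take all $\Delta\varphi_{k,n}=0$ and it reduces to $\operatorname{Var}(Q_{I_k}\Delta x_k)>0$ whenever the $Q_n$ are not all equal. No choice of $b$ or cross-term manipulation can eliminate that piece, so your coefficient on $\|\Delta x_k\|_2^2$ comes out as $\gamma(\eta)+bL^2+O(\eta^2L^2)$ rather than the stated $\gamma(\eta)+bL^2$. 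The paper instead writes
\[
\Delta x_{k+1}=\bigl(I-\eta Q_{I_k}\bigr)\Delta x_k \;+\;\eta\Bigl[Q_{I_k}\Delta\varphi_{k,I_k}-\tfrac{1}{N}\textstyle\sum_{n}Q_n\Delta\varphi_{k,n}\Bigr],
\]
and uses that $I-\eta Q_{I_k}$ is a $\gamma(\eta)$-contraction \emph{for every realization of $I_k$} (each $f_n$ is individually $c$-strongly convex and $L$-smooth, so $cI\preceq Q_{I_k}\preceq LI$ pathwise). The residual error is then purely proxy-based and mean-zero, so ``variance $\le$ second moment'' gives the clean bound $\eta^2\mathbb{E}\|\cdot\|_2^2\le \tfrac{\eta^2}{N}\sum_n\|Q_n\Delta\varphi_{k,n}\|_2^2$ with no factor of two and no leakage into the $\|\Delta x_k\|_2^2$ coefficient. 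Absorbing $Q_{I_k}\Delta x_k$ into the contraction part rather than the variance part is the missing ingredient in your outline.
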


\begin{proof}
First, in the general iteration
\begin{equation}\label{eq:gradient-inexact}
    x_{k+1}^{(i)} = x_{k}^{(i)} - \eta [\nabla f(x_{k}^{(i)}) + \varepsilon_k^{(i)}],\,\forall k \geq 0,
\end{equation}
for $i = 1, 2$, in the unbiased case where $\mathbb{E}[\varepsilon_k^{(i)} \, \vert \, \mathcal{F}_k] = 0$ we have
\begin{equation}\label{eq:gradient-inexact-1}
    \mathbb{E}[\|x_{k+1}^{(1)} - x_{k+1}^{(2)}\|_2^2 \leq  \gamma(\eta) \|x_{k}^{(1)} - x_{k}^{(2)}\|_2^2 + \eta^2 \mathbb{E}[\|\varepsilon_k^{(1)} - \varepsilon_k^{(2)}\|_2^2],
\end{equation}
using $\mathbb{E}[\varepsilon_k^{i} \, \vert \, \mathcal{F}_k] = 0$ for $i = 1,\,2$. Then, for SAGA we have
\begin{align*}
 & \mathbb{E}\left[\|\Delta x_{k+1}\|_{2}^{2}\,\vert\,\mathcal{F}_{k}\right]\\
\leq\, & \gamma\left(\eta\right)\|\Delta x_{k}\|_{2}^{2}+\eta^{2}\mathbb{E}\Bigg[\|\nabla f_{I_k} (\varphi_{k,\,I_k}^{\left(1\right)})-\nabla f_{I_k} (\varphi_{k,\,I_k}^{\left(2\right)})\\
& \hspace{13em}-\frac{1}{N}\sum_{n=1}^{N}\left(\nabla f_{n} (\varphi_{k,\,n}^{\left(1\right)})-\nabla f_{n} (\varphi_{k,\,n}^{\left(2\right)})\right)\|_{2}^{2}\,\vert\,\mathcal{F}_{k}\Bigg]\\
\leq\, & \gamma\left(\eta\right)\|\Delta x_{k}\|_{2}^{2}+\eta^{2}\mathbb{E}\left[\|\nabla f_{I_k} (\varphi_{k,\,I_k}^{\left(1\right)})-\nabla f_{I_k} (\varphi_{k,\,I_k}^{\left(2\right)})\|_{2}^{2}\,\vert\,\mathcal{F}_{k}\right],
\end{align*}
where the first inequality is by Eq. \eqref{eq:gradient-inexact-1} and the second
follows because
\[
\mathbb{E}\left[\nabla f_{I_k} (\varphi_{k,\,I_k}^{\left(1\right)})-\nabla f_{I_k} (\varphi_{k,\,I_k}^{\left(2\right)})\,\vert\,\mathcal{F}_{k}\right]=\frac{1}{N}\sum_{n=1}^{N}\left(\nabla f_{n} (\varphi_{k,\,n}^{\left(1\right)})-\nabla f_{n} (\varphi_{k,\,n}^{\left(2\right)})\right),
\]
and the variance is less than the second-order moment. Next, we show
\begin{align*}
 & \mathbb{E}\left[\sum_{n=1}^{N}\|\nabla f_{n} (\varphi_{k+1,\,n}^{\left(1\right)})-\nabla f_{n} (\varphi_{k+1,\,n}^{\left(2\right)})\|_{2}^{2}\right]\\
\leq\, & \frac{1}{N}\sum_{n=1}^{N}\|B_{n}\left(x_{k}^{\left(1\right)}\right)-B_{n}\left(x_{k}^{\left(2\right)}\right)\|_{2}^{2}+\frac{N-1}{N}\sum_{n=1}^{N}\|\nabla f_{n} (\varphi_{k,\,n}^{\left(1\right)})-\nabla f_{n} (\varphi_{k,\,n}^{\left(2\right)})\|_{2}^{2}\\
\leq\, & L^{2}\|\Delta x_{k}\|_{2}^{2}+\frac{N-1}{N}\sum_{n=1}^{N}\|\nabla f_{n} (\varphi_{k,\,n}^{\left(1\right)})-\nabla f_{n} (\varphi_{k,\,n}^{\left(2\right)})\|_{2}^{2}.
\end{align*}
Now to conclude, we take
\begin{align*}
 & \mathbb{E}\left[V_{b}\left(s_{k+1}^{\left(1\right)},\,s_{k+1}^{\left(2\right)}\right)\,\vert\,\mathcal{F}_{k}\right]\\
=\, & \mathbb{E}\left[\|\Delta x_{k+1}\|_{2}^{2}+b\sum_{n=1}^{N}\|\nabla f_{n} (\varphi_{k+1,\,n}^{\left(1\right)})-\nabla f_{n} (\varphi_{k+1,\,n}^{\left(2\right)})\|_{2}^{2}\,\vert\,\mathcal{F}_{k}\right]\\
\leq\, & \gamma\left(\eta\right)\|\Delta x_{k}\|_{2}^{2}+\frac{\eta^{2}}{N}\sum_{n=1}^{N}\|\nabla f_{n} (\varphi_{k,\,n}^{\left(1\right)})-\nabla f_{n} (\varphi_{k,\,n}^{\left(2\right)})\|_{2}^{2}\\
 & +b\left[L^{2}\|\Delta x_{k}\|_{2}^{2}+\frac{N-1}{N}\sum_{n=1}^{N}\|\nabla f_{n} (\varphi_{k,\,n}^{\left(1\right)})-\nabla f_{n} (\varphi_{k,\,n}^{\left(2\right)})\|_{2}^{2}\right]\\
=\, & \left(\gamma\left(\eta\right)+b\,L^{2}\right)\|\Delta x_{k}\|_{2}^{2}+\left(\frac{\eta^{2}+b\left(N-1\right)}{N}\right)\sum_{n=1}^{N}\|\nabla f_{n} (\varphi_{k,\,n}^{\left(1\right)})-\nabla f_{n} (\varphi_{k,\,n}^{\left(2\right)})\|_{2}^{2}\\
\leq\, & \max\left\{ \gamma\left(\eta\right)+b\,L^{2},\,\frac{\eta^{2}/b+N-1}{N}\right\} V_{b}\left(s_{k}^{\left(1\right)},\,s_{k}^{\left(2\right)}\right).
\end{align*}
\end{proof}

\section*{Appendix B: Supplement for Section \ref{sec:examples-epoch} (epoch-based optimization algorithms)}

\subsection*{Details for HSAG}

We can choose the step-size
\[
\eta\in\left(0,\,\frac{2\,c}{\left(1+|S|/N\right)L^{2}}\right)
\]
and the parameter $b>0$ such that: (i) $\eta^{2}<b$; (ii) $\gamma\left(\eta\right)+b\,|S|\,L^{2}/N<1$; and (iii)
$\gamma\left(\eta\right)+n^{2}|S|\,L^{2}/N=1-2\,\eta\,c+\left(1+|S|/N\right)\eta^{2}L^{2}<1$
since $\left(1+|S|/N\right)\eta^{2}L^{2}<2\,\eta\,c$. We also introduce the constant
\[
K\left(\eta\right)\triangleq\max\left\{ \gamma\left(\eta\right)+b\,|S|\,L^{2}/N,\,\frac{\eta^{2}/b+N-1}{N}\right\} .
\]
Within epoch $k \geq 0$, we let $\{\tilde{\mathcal F}_m\}_{m \geq 0}$ be the $\sigma$-algebra generated by $\{\tilde{s}_m\}_{m \geq 0}$.

\begin{lem}
\label{lem:HSAG-1}(i) For fixed $k \geq 0$, for all $m\geq0$,
\begin{equation*}
 \mathbb{E}\left[V_{b,\,S}\left(\tilde{s}_{m+1}^{\left(1\right)},\,\tilde{s}_{m+1}^{\left(2\right)}\right)\,\vert\,\tilde{\mathcal{F}}_{m}\right] \leq K\left(\eta\right)V_{b,\,S}\left(\tilde{s}_{m}^{\left(1\right)},\,\tilde{s}_{m}^{\left(2\right)}\right)+\frac{\eta^{2}L^{2}|S^{C}|}{N}V_{b,\,S}\left(s_{k}^{\left(1\right)},\,s_{k}^{\left(2\right)}\right).
\end{equation*}
(ii) For each epoch $k\geq0$, we have
\begin{equation*}
 \mathbb{E}\left[V_{b,\,S}\left(s_{k+1}^{\left(1\right)},\,s_{k+1}^{\left(2\right)}\right)\,\vert\,\mathcal{F}_{k}\right]\leq \left[K\left(\eta\right)^{M}+\frac{\eta^{2}L^{2}|S^{C}|}{N\left(1-K\left(\eta\right)\right)}\left(1-K\left(\eta\right)^{M}\right)\right]V_{b,\,S}\left(s_{k}^{\left(1\right)},\,s_{k}^{\left(2\right)}\right).
\end{equation*}
\end{lem}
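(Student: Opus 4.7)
The plan is to mirror the proof of Lemma \ref{lem:SAGA-2} for SAGA but carefully track the extra contribution coming from the fact that the SVRG-type proxies (indexed by $n \in S^{C}$) are frozen at $x_{k}$ throughout epoch $k$, whereas only the SAGA-type proxies (indexed by $n \in S$) evolve during the inner loop. This mismatch produces the additional term involving $V_{b,S}(s_k^{(1)}, s_k^{(2)})$ that distinguishes this bound from the pure SAGA one.

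For part (i), start by applying the unbiased-noise calculation in Eq. \eqref{eq:gradient-inexact-1} to the HSAG variance-reduced gradient estimator. The conditional second moment of the gradient difference is
\[
\mathbb{E}\!\left[\|\nabla f_{I_{m}}(\tilde\varphi_{m,I_{m}}^{(1)}) - \nabla f_{I_{m}}(\tilde\varphi_{m,I_{m}}^{(2)})\|_{2}^{2}\,\vert\,\tilde{\mathcal F}_{m}\right] = \frac{1}{N}\sum_{n \in [N]}\|\nabla f_{n}(\tilde\varphi_{m,n}^{(1)}) - \nabla f_{n}(\tilde\varphi_{m,n}^{(2)})\|_{2}^{2}.
\]
I would split this sum over $n \in S$ and $n \in S^{C}$. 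For $n \in S^{C}$, since $\tilde\varphi_{m,n}^{(i)} = x_{k}^{(i)}$ throughout the epoch, the $L$-Lipschitzness of $\nabla f_{n}$ bounds the corresponding summand by $L^{2}\|x_{k}^{(1)} - x_{k}^{(2)}\|_{2}^{2}$, contributing $(L^{2}|S^{C}|/N)\|x_{k}^{(1)} - x_{k}^{(2)}\|_{2}^{2}$. The $n \in S$ piece is left intact to combine later.

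Next, for the $b$-weighted proxy term in $V_{b,S}$, observe that only indices in $S$ ever update. Exactly the same reasoning as in the SAGA proof (an index $n \in S$ is picked with probability $1/N$, in which case the new proxy is $\tilde x_{m}^{(i)}$) gives
\[
\mathbb{E}\!\left[\sum_{n\in S}\|\nabla f_{n}(\tilde\varphi_{m+1,n}^{(1)}) - \nabla f_{n}(\tilde\varphi_{m+1,n}^{(2)})\|_{2}^{2}\,\vert\,\tilde{\mathcal F}_{m}\right] \leq \frac{|S|L^{2}}{N}\|\tilde x_{m}^{(1)} - \tilde x_{m}^{(2)}\|_{2}^{2} + \frac{N-1}{N}\sum_{n\in S}\|\nabla f_{n}(\tilde\varphi_{m,n}^{(1)}) - \nabla f_{n}(\tilde\varphi_{m,n}^{(2)})\|_{2}^{2}.
\]
Combining the two bounds and grouping coefficients of $\|\tilde x_{m}^{(1)}-\tilde x_{m}^{(2)}\|_{2}^{2}$ and the proxy sum yields $\gamma(\eta)+b|S|L^{2}/N$ and $(\eta^{2}/b+N-1)/N$, respectively. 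Taking the maximum yields $K(\eta)$, and the residual $(\eta^{2}L^{2}|S^{C}|/N)\|x_{k}^{(1)} - x_{k}^{(2)}\|_{2}^{2}$ is bounded above by $(\eta^{2}L^{2}|S^{C}|/N)V_{b,S}(s_{k}^{(1)},s_{k}^{(2)})$ using non-negativity of the proxy term, establishing (i).

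For part (ii), I would iterate the recursion from (i) over $m = 0, \ldots, M-1$, taking total expectations conditional on $\mathcal F_{k}$. The initialization satisfies $V_{b,S}(\tilde s_{0}^{(1)}, \tilde s_{0}^{(2)}) = V_{b,S}(s_{k}^{(1)}, s_{k}^{(2)})$ because $\tilde x_{0}^{(i)} = x_{k}^{(i)}$ and the SAGA proxies are inherited unchanged from $s_{k}$ (the reinitialized SVRG proxies do not appear in $V_{b,S}$). Iteration produces a geometric series in $K(\eta)$ giving
\[
\mathbb{E}\!\left[V_{b,S}(\tilde s_{M}^{(1)}, \tilde s_{M}^{(2)})\,\vert\,\mathcal F_{k}\right] \leq K(\eta)^{M} V_{b,S}(s_{k}^{(1)}, s_{k}^{(2)}) + \frac{\eta^{2}L^{2}|S^{C}|}{N}\cdot\frac{1-K(\eta)^{M}}{1-K(\eta)}\, V_{b,S}(s_{k}^{(1)}, s_{k}^{(2)}),
\]
and the parameter conditions on $\eta$ and $b$ ensure $K(\eta) < 1$ so the geometric sum is valid. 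Finally, $s_{k+1} = \Pi(\tilde s_{M})$ drops only the SVRG proxies which do not enter $V_{b,S}$, giving the claim.

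The main obstacle is the careful bookkeeping in part (i): one must keep the $n\in S$ and $n\in S^{C}$ index sets separated throughout, respect that the variance-reduction identity yielding \eqref{eq:gradient-inexact-1} uses unbiasedness across \emph{all} $n \in [N]$ (not just $n \in S$), and verify that the SVRG-type contribution is correctly absorbed into an additive term of the form $V_{b,S}(s_{k})$ rather than $V_{b,S}(\tilde s_{m})$. Once the split is done correctly, both parts follow from a direct algebraic computation and a standard geometric-series step, respectively.
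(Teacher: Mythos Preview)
Your proposal is correct and follows essentially the same approach as the paper's own proof: apply the unbiased-noise bound Eq.~\eqref{eq:gradient-inexact-1}, split the proxy contribution over $n\in S$ and $n\in S^{C}$ (using Lipschitzness and $\tilde\varphi_{m,n}^{(i)}=x_k^{(i)}$ for the latter), bound the SAGA-type proxy update exactly as in Lemma~\ref{lem:SAGA-2}, collect coefficients to produce $K(\eta)$, absorb the residual into $V_{b,S}(s_k^{(1)},s_k^{(2)})$, and then iterate with a geometric sum for part~(ii). Your bookkeeping remarks about the initialization and projection are also in line with what the paper uses.
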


\begin{proof}
(i) First, we have
\begin{align*}
 & \mathbb{E}\left[\|\tilde{x}_{m+1}^{\left(1\right)}-\tilde{x}_{m+1}^{\left(2\right)}\|_{2}^{2}\,\vert\,\mathcal{F}_{k}\right]\\
\leq\, & \gamma\left(\eta\right)\|\tilde{x}_{m}^{\left(1\right)}-\tilde{x}_{m}^{\left(2\right)}\|_{2}^{2}+\eta^{2}\mathbb{E}\Bigg[\|\nabla f_{I_m}(\varphi_{m,\,I_m}^{\left(1\right)})-\nabla f_{I_m}(\varphi_{m,\,I_m}^{\left(2\right)})\\
& \hspace{13em} -\frac{1}{N}\sum_{n=1}^{N}\left(\nabla f_{n}(\varphi_{m,\,n}^{\left(1\right)})-\nabla f_{n}(\varphi_{m,\,n}^{\left(2\right)})\right)\|_{2}^{2}\,\vert\,\tilde{\mathcal{F}}_{m}\Bigg]\\
\leq\, & \gamma\left(\eta\right)\|\tilde{x}_{m}^{\left(1\right)}-\tilde{x}_{m}^{\left(2\right)}\|_{2}^{2}+\eta^{2}\mathbb{E}\left[\|\nabla f_{I_m}(\varphi_{m,\,I_m}^{\left(1\right)})-\nabla f_{I_m}(\varphi_{m,\,I_m}^{\left(2\right)})\|_{2}^{2}\,\vert\,\tilde{\mathcal{F}}_{m}\right]\\
\leq\, & \gamma\left(\eta\right)\|\tilde{x}_{m}^{\left(1\right)}-\tilde{x}_{m}^{\left(2\right)}\|_{2}^{2}+\frac{\eta^{2}}{N}\sum_{n\in S}\|\nabla f_{n}(\varphi_{m,\,n}^{\left(1\right)})-\nabla f_{n}(\varphi_{m,\,n}^{\left(2\right)})\|_{2}^{2}+\frac{\eta^{2}L^{2}|S^{C}|}{N}\|x_{k}^{\left(1\right)}-x_{k}^{\left(2\right)}\|_{2}^{2},
\end{align*}
where the first inequality is by Eq. \eqref{eq:gradient-inexact-1} and the second
follows because
\[
\mathbb{E}\left[\nabla f_{I_k} (\varphi_{k,\,I_k}^{\left(1\right)})-\nabla f_{I_k} (\varphi_{k,\,I_k}^{\left(2\right)})\,\vert\,\mathcal{F}_{k}\right]=\frac{1}{N}\sum_{n=1}^{N}\left(\nabla f_{n} (\varphi_{k,\,n}^{\left(1\right)})-\nabla f_{n} (\varphi_{k,\,n}^{\left(2\right)})\right),
\]
and the variance is less than the second-order moment. Next, we have
\begin{align*}
 & \mathbb{E}\left[\sum_{n\in S}\|\nabla f_{n}(\varphi_{m+1,\,n}^{\left(1\right)})-\nabla f_{n}(\varphi_{m+1,\,n}^{\left(2\right)})\|_{2}^{2}\right]\\
\leq\, & \frac{1}{N}\sum_{n\in S}\|\nabla f_{n}\left(\tilde{x}_{m}^{\left(1\right)}\right)-\nabla f_{n}\left(\tilde{x}_{m}^{\left(2\right)}\right)\|_{2}^{2}+\frac{N-1}{N}\sum_{n\in S}\|\nabla f_{n}(\varphi_{m,\,n}^{\left(1\right)})-\nabla f_{n}(\varphi_{m,\,n}^{\left(2\right)})\|_{2}^{2}\\
\leq\, & \frac{L^{2}|S|}{N}\|\tilde{x}_{m}^{\left(1\right)}-\tilde{x}_{m}^{\left(2\right)}\|_{2}^{2}+\frac{N-1}{N}\sum_{n\in S}\|\nabla f_{n}(\varphi_{m,\,n}^{\left(1\right)})-\nabla f_{n}(\varphi_{m,\,n}^{\left(2\right)})\|_{2}^{2}.
\end{align*}
Now to conclude, we take
\begin{align*}
 & \mathbb{E}\left[V_{b,\,S}\left(\tilde{s}_{m+1}^{\left(1\right)},\,\tilde{s}_{m+1}^{\left(2\right)}\right)\,\vert\,\tilde{\mathcal{F}}_{m}\right]\\
=\, & \mathbb{E}\left[\|\tilde{x}_{m+1}^{\left(1\right)}-\tilde{x}_{m+1}^{\left(2\right)}\|_{2}^{2}+b\sum_{n\in S}\|\nabla f_{n}(\varphi_{m+1,\,n}^{\left(1\right)})-\nabla f_{n}(\varphi_{m+1,\,n}^{\left(2\right)})\|_{2}^{2}\,\vert\,\tilde{\mathcal{F}}_{m}\right]\\
\leq\, & \gamma\left(\eta\right)\|\tilde{x}_{m}^{\left(1\right)}-\tilde{x}_{m}^{\left(2\right)}\|_{2}^{2}+\frac{\eta^{2}}{N}\sum_{n\in S}\|\nabla f_{n}(\varphi_{m,\,n}^{\left(1\right)})-\nabla f_{n}(\varphi_{m,\,n}^{\left(2\right)})\|_{2}^{2}+\frac{\eta^{2}L^{2}|S^{C}|}{N}\|x_{k}^{\left(1\right)}-x_{k}^{\left(2\right)}\|_{2}^{2}\\
 & +b\left[\frac{L^{2}|S|}{N}\|\tilde{x}_{m}^{\left(1\right)}-\tilde{x}_{m}^{\left(2\right)}\|_{2}^{2}+\frac{N-1}{N}\sum_{n\in S}\|\nabla f_{n}(\varphi_{m,\,n}^{\left(1\right)})-\nabla f_{n}(\varphi_{m,\,n}^{\left(2\right)})\|_{2}^{2}\right]\\
=\, & \left(\gamma\left(\eta\right)+b\frac{L^{2}|S|}{N}\right)\|\tilde{x}_{m}^{\left(1\right)}-\tilde{x}_{m}^{\left(2\right)}\|_{2}^{2}+\left(\frac{\eta^{2}+b\left(N-1\right)}{N}\right)\sum_{n\in S}\|\nabla f_{n}(\varphi_{m,\,n}^{\left(1\right)})-\nabla f_{n}(\varphi_{m,\,n}^{\left(2\right)})\|_{2}^{2}\\
 & +\frac{\eta^{2}L^{2}|S^{C}|}{N}\|x_{k}^{\left(1\right)}-x_{k}^{\left(2\right)}\|_{2}^{2}\\
\leq\, & \max\left\{ \gamma\left(\eta\right)+b\frac{L^{2}|S|}{N},\,\frac{\eta^{2}/b+N-1}{N}\right\} V_{b,\,S}\left(\tilde{s}_{m}^{\left(1\right)},\,\tilde{s}_{m}^{\left(2\right)}\right)\\
 & +\frac{\eta^{2}L^{2}|S^{C}|}{N}\left(\|x_{k}^{\left(1\right)}-x_{k}^{\left(2\right)}\|_{2}^{2}+b\sum_{n\in S}\|\nabla f_{n} (\varphi_{k,\,n}^{\left(1\right)})-\nabla f_{n} (\varphi_{k,\,n}^{\left(2\right)})\|_{2}^{2}\right).
\end{align*}
Then we conclude
\begin{equation*}
 \mathbb{E}\left[V_{b,\,S}\left(\tilde{s}_{m+1}^{\left(1\right)},\,\tilde{s}_{m+1}^{\left(2\right)}\right)\,\vert\,\tilde{\mathcal{F}}_{m}\right] \leq K\left(\eta\right)V_{b,\,S}\left(\tilde{s}_{m}^{\left(1\right)},\,\tilde{s}_{m}^{\left(2\right)}\right)+\frac{\eta^{2}L^{2}|S^{C}|}{N}V_{b,\,S}\left(s_{k}^{\left(1\right)},\,s_{k}^{\left(2\right)}\right).
\end{equation*}

(ii) Follows by iterating the above recursion and using $\tilde{s}_{0}^{\left(i\right)}=s_{k}^{\left(i\right)}$ and $\tilde{\varphi}_{0,n}^{(i)} = \nabla f_n(x_k^{(i)})$ for all $i = 1, 2$.
\end{proof}

\subsection*{Details for Catalyst}

Here we confirm that Catalyst has an invariant distribution, and we compute the rate of convergence in the Wasserstein divergence. First recall that by \cite[Proposition 5]{lin2017catalyst} we have
$$
\mathbb E[\psi(x_k) - \psi(x^*)] \leq \frac{8}{(\sqrt q - \alpha)^2}(1 - \alpha)^{k+1}(\psi(x_0) - \psi(x^*)),\,\forall k \geq 0.
$$

Let $\{\mathfrak Q_k\}_{k \geq 0}$ denote the sequence of transition kernels corresponding to $\{\hat{T}_k\}_{k \geq 0}$ for Catalyst. These transition kernels are time varying, so we no longer have $\mu_k = \mu\, \mathfrak Q^k$ for initial $\mu \in \mathcal{P}_{\bar{V}}(\ALP S)$ as for i.i.d. $\{\hat{T}_k\}_{k \geq 0}$.

Choose any $\mu \in \mathcal{P}_{\bar{V}}(\ALP S)$, define $\mu_{k+1} = \mu_{k} \mathfrak Q_k$ for all $k \geq 0$, and let $s_k$ be distributed according to $\mu_k$ for all $k \geq 0$. We first confirm that $\{\mu_k\}_{k \geq 0}$ is a Cauchy sequence in the Wasserstein divergence. Pick any $l \in \Na$, then we have
\begin{align}
& W_{\bar{V}}\left(\mu_k,\,\mu_{k+l}\right)\\
\leq\, & \mathbb{E}[\bar{V}(s_k,\,s_{k+l})]\\
\leq\, & \mathbb{E}[\psi(x_k) + \psi(x_{k+l}) - 2\,\psi^* + (1 - \alpha)(\psi(x_{k-1}) + \psi(x_{k+l-1}) - 2\,\psi^*)]\\
\leq\, & \frac{16}{(\sqrt q - \alpha)^2}(\psi(x_0) - \psi(x^*))[(1 - \alpha)^{k+1} + (1 - \alpha)^{k+l+1}]\\
=\, & \frac{16}{(\sqrt q - \alpha)^2}(\psi(x_0) - \psi(x^*))(1 - \alpha)^{k+1}[1 + (1 - \alpha)^{l}],
\end{align}
where we use the coupling $s_{k+l} = \hat{T}_{k+l-1}\circ \hat{T}_{k+l-2} \circ \dots \circ \hat{T}_{k} s_k$. This display establishes that $\{\mu_k\}_{k \geq 0}$ is Cauchy, and by Proposition \ref{prop:WVplimitexistence} there exists a probability measure $\vartheta$ such that $\lim_{k\rightarrow\infty} W_{\bar{V}}\left(\mu_k,\,\vartheta\right)=0$.

Next we show that this limit is the same for all initial $\mu \in \mathcal{P}_{\bar{V}}(\ALP S)$. Let $\mu,\,\nu \in \mathcal{P}_{\bar{V}}(\ALP S)$ and define $\mu_{k+1} = \mu_k \mathfrak Q_k$ where $\mu_0 = \mu$ and $\nu_{k+1} = \nu_k \mathfrak Q_k$ where $\nu_0 = \nu$. Then $\mu_k$ converges to some $\vartheta_{\mu} \in \mathcal{P}_{\bar{V}}(\ALP S)$ and $\nu_k$ converges to some $\vartheta_{\nu} \in \mathcal{P}_{\bar{V}}(\ALP S)$. In addition, if we let $s_k^{\mu}$ be distributed according to $\mu_k$ and $s_k^{\nu}$ be distributed according to $\nu_k$ for all $k \geq 0$, then
\begin{align}
W_{\bar{V}}\left(\mu_k,\,\nu_k\right) \leq\, & \mathbb{E}[\bar{V}(s_k^{\mu},\,s_k^{\nu})]\\
\leq\, & \mathbb{E}[\psi(x_k^{\mu}) + \psi(x_{k}^{\nu}) - 2\,\psi^* + (1 - \alpha)(\psi(x_{k-1}^{\mu}) + \psi(x_{k-1}^{\nu}) - 2\,\psi^*)]\\
\leq\, & \frac{16}{(\sqrt q - \alpha)^2}\mathbb{E}[\psi(x_0^{\mu}) + \psi(x_0^{\nu}) - 2\,\psi(x^*)](1 - \alpha)^{k+1},
\end{align}
so it follows that $\vartheta_{\mu} = \vartheta_{\nu} = \vartheta$.

Finally, we compute $\vartheta$ explicitly. Let $s^* = (x^*,\,x^*)$ for Catalyst, then $\vartheta = \ind{s^*}$ is invariant for all $\mathfrak Q_k$. In particular, $L_k s^* = x^*$ for all $k \geq 0$, and so the auxiliary optimization problems
$$
\psi_k(x;\,s^*) \triangleq f(x) + g(x) + \frac{\theta}{2} \|x - x^*\|_2^2
$$
are the same for all $k \geq 0$, with unique optimal solution $x^*$.

\end{document}